\titleformat*{\paragraph}{\bfseries}
\pgfplotsset{compat=1.17}
\definecolor[named]{ACMBlue}{cmyk}{1,0.1,0,0.1}
\definecolor[named]{ACMYellow}{cmyk}{0,0.16,1,0}
\definecolor[named]{ACMOrange}{cmyk}{0,0.42,1,0.01}
\definecolor[named]{ACMRed}{cmyk}{0,0.90,0.86,0}
\definecolor[named]{ACMLightBlue}{cmyk}{0.49,0.01,0,0}
\definecolor[named]{ACMGreen}{cmyk}{0.20,0,1,0.19}
\definecolor[named]{ACMPurple}{cmyk}{0.55,1,0,0.15}
\definecolor[named]{ACMDarkBlue}{cmyk}{1,0.58,0,0.21}
\crefname{ineq}{Inequality}{Inequality}
\crefname{sub}{Subsection}{Subsection}
\crefname{sdp}{SDP}{SDP}
\crefname{lp}{LP}{LP}
\crefname{ineq}{Inequality}{Inequality}
\crefname{sub}{Subsection}{Subsection}
\crefname{sdp}{SDP}{SDP}
\crefname{lp}{LP}{LP}
\newtheorem{theorem}{Theorem}[section]
\newtheorem{lemma}[theorem]{Lemma}
\newtheorem{informal theorem}[theorem]{Theorem (informal statement)}
\newtheorem{proposition}[theorem]{Proposition}
\newtheorem{claim}[theorem]{Claim}
\newtheorem{fact}[theorem]{Fact}
\newtheorem{remark}[theorem]{Remark}
\newtheorem{definition}[theorem]{Definition}
\newcommand{\eqdef}{\coloneqq}
\newcommand{\lp}{\left}
\newcommand{\rp}{\right}
\newcommand\norm[1]{\left\| #1 \right\|}
\newcommand\snorm[2]{\left\| #2 \right\|_{#1}}
\renewcommand\vec[1]{\mathbf{#1}}
\DeclareMathOperator*{\pr}{\mathbf{Pr}}
\DeclareMathOperator*{\E}{\mathbf{E}}
\newcommand{\normal}{\mathcal{N}}
\newcommand{\bx}{\mathbf{x}}
\newcommand{\bw}{\mathbf{w}}
\newcommand{\R}{\mathbb{R}}
\newcommand{\eps}{\epsilon}
\newcommand{\poly}{\mathrm{poly}}
\newcommand{\sgn}{\mathrm{sign}}
\newcommand{\sign}{\mathrm{sign}}
\newcommand{\OPT}{\mathrm{opt}}
\newcommand{\wt}{\widetilde}
\newcommand{\x}{\vec x}
\newcommand{\opt}{\mathrm{opt}}
\newcommand{\iid}{{i.i.d.}\ }
\newcommand{\abs}[1]{\lp| #1 \rp|}
\renewcommand\Pr{\pr}
\newenvironment{Ualgorithm}[1][htpb]{\def\@algocf@post@ruled{\kern\interspacealgoruled\hrule  height\algoheightrule\kern3pt\relax}\def\@algocf@capt@ruled{under}\setlength\algotitleheightrule{0pt}\SetAlgoCaptionLayout{centerline}\begin{algorithm}[#1]}
{\end{algorithm}}
\DeclarePairedDelimiter\ceil{\lceil}{\rceil}
\def\nnewcolor{0}
\newcommand{\inote}[1]{\footnote{{\bf [[Ilias: {#1}\bf ]] }}}
\newcommand{\dnote}[1]{\footnote{{\bf [[Daniel: {#1}\bf ]] }}}
\newcommand{\nnote}[1]{\footnote{{\bf [[Nikos: {#1}\bf ]] }}}
\newcommand{\snote}[1]{\footnote{\textcolor{orange}{sihan: #1}}}
\newcommand{\new}[1]{\textcolor{red}{#1}}
\newcommand{\snew}[1]{\textcolor{orange}{#1}}
\newcommand{\nnew}[1]{\textcolor{purple}{#1}}
\newcommand{\new}[1]{#1}
\newcommand{\snew}[1]{#1}
\newcommand{\nnew}[1]{#1}
\newcommand{\inote}[1]{}
\newcommand{\dnote}[1]{}
\newcommand{\nnote}[1]{}
\newcommand{\vnote}[1]{}
\newcommand{\snote}[1]{}
\title{Efficient Testable Learning of General Halfspaces \\ with Adversarial Label Noise\footnote{Presented to COLT'24.}}
\author{
Ilias Diakonikolas\thanks{Supported by NSF Medium Award CCF-2107079 and a DARPA Learning with Less Labels (LwLL) grant.}\\
UW Madison\\
{\tt ilias@cs.wisc.edu}\\
\and
Daniel M. Kane\thanks{Supported by NSF Award CCF-1553288 (CAREER) and NSF Medium Award CCF-2107547.}\\
UC San Diego\\
{\tt dakane@ucsd.edu }\\
\and 
Sihan Liu\thanks{Supported in part by NSF Award CCF-1553288 (CAREER) and NSF Medium Award CCF-2107547.}\\
UC San Diego\\
{\tt sil046@ucsd.edu}
\and
Nikos Zarifis\thanks{Supported in part by 
NSF Medium Award CCF-2107079.}\\
UW Madison\\
{\tt zarifis@wisc.edu}\\
}
\begin{document}

\maketitle

\begin{abstract}%
We study the task of testable learning of general --- 
not necessarily homogeneous --- halfspaces 
with adversarial label noise with respect to the 
Gaussian distribution. In the testable learning framework, 
the goal is to develop a tester-learner such that if the data passes the tester, then one can trust the output of the robust learner on the data.
Our main result is the first polynomial time tester-learner for general halfspaces that achieves dimension-independent misclassification error. 
At the heart of our approach is a new methodology to reduce testable 
learning of general halfspaces to testable learning of \snew{nearly} homogeneous halfspaces that may be of broader interest. 
\end{abstract}


\setcounter{page}{0}
\thispagestyle{empty}
\newpage
\section{Introduction}

A (general) halfspace or Linear Threshold Function (LTF) is any
Boolean function $h: \R^d \to \{ \pm 1\}$ of the form 
$h_{\bw}(\bx) = \sgn \left( \bw \cdot \bx + t \right)$, 
where $\bw \in \R^d$ is the defining vector, $t \in \R$ is the threshold, 
and $\sign: \R \to \{ \pm 1\}$ is defined as $\sgn(t)=1$ if $t \geq 0$ 
and $\sgn(t)=-1$ otherwise.
The family of halfspaces is one of the most basic concept classes in computational
learning theory with  history dating back to Rosenblatt's perceptron algorithm~\citep{Rosenblatt:58}. 
While halfspaces are efficiently learnable
in Valiant's (realizable) distribution-free PAC model~\citep{val84} 
(i.e., with clean labels), in the agnostic (or adversarial label noise) model~\citep{Haussler:92, KSS:94} even {\em weak} learning is intractable~\citep{Daniely16, DKMR22b, Tiegel22}, unless one makes assumptions on the distribution of feature vectors. 

A long line of work, starting with~\citep{KKMS:08}, 
has developed efficient halfspace learners in the distribution-specific
agnostic model. Concretely,~\citep{KKMS:08} gave an agnostic learner
within 0-1 error of $\opt+\eps$ ---  
where $\opt$ is the 0-1 error of the best-fitting halfspace --- 
with complexity $d^{O(1/\eps^2)}$ if the distribution on feature vectors 
is the standard Gaussian. Unfortunately, if one insists on optimal error of $\opt+\eps$,
computational limitations arise even in the Gaussian setting. Specifically, the  
exponential complexity dependence on $1/\eps$ has been recently shown to be 
inherent~\citep{DKZ20, GGK20, DKPZ21, DKR23}.

By relaxing the desired accuracy to $f(\opt) +\eps$, 
for an appropriate function $f(t)$ that goes to $0$ when $t \rightarrow 0$, 
it is possible to obtain $\poly(d/\eps)$ (i.e., {\em fully} polynomial) 
time algorithms~\citep{KLS:09jmlr, ABL17, Daniely15, DKS18a, DKTZ20c, diakonikolas2022learning}.
Specifically, for the class of {\em homogeneous} halfspaces --- corresponding to 
the case that the threshold $t= 0$ or equivalently that the separating hyperplane goes through the origin --- \cite{ABL17} first gave a $\poly(d/\eps)$ time algorithm 
with error $C \opt+\eps$, for some universal constant $C>1$, that succeeds
under the standard Gaussian (and, more generally, isotropic logconcave distributions). 
Interestingly, the case of {\em general} halfspaces turns out to be more challenging:  
with the exception of~\cite{DKS18a, diakonikolas2022learning}, all prior approaches inherently fail for non-homogeneous halspaces. \new{We will return to this point in the subsequent discussion.}


The model of {\em testable learning}~\citep{RV22a} was defined
to alleviate the following conceptual limitation of 
distribution-specific agnostic learning: 
if the assumptions on the marginal distribution 
on examples (e.g., Gaussianity or log-concavity) are not satisfied, a vanilla
agnostic learner provides no guarantees. 
Ideally, one would like to guarantee the following desiderata:
(a) if the learner accepts, then we can trust its output, 
and (b) it is unlikely that the learner rejects if the data satisfies the distributional assumptions. This has led to the following definition:

\begin{definition}[Testable Learning with Adversarial Label Noise~\citep{RV22a}]
Fix $\epsilon, \tau \in (0,1]$ and let $f:[0,1] \mapsto \R_+$.
A tester-learner $\cal{A}$ (approximately) {\em testably learns} a concept class 
$\mathcal C$ with respect to the distribution $D_\x$ on $\R^d$
 with $N$ samples, and failure probability $\tau$ 
if the following holds. 
For any distribution $D$ on $\R^d\times\{\pm1\}$,
the tester-learner $\cal A$ draws a set $S$ of $N$ i.i.d.\ samples from $D$.
In the end, it either rejects $S$ or accepts $S$ and produces a hypothesis $h:\R^d \mapsto \{\pm1\}$. Moreover, the following conditions must be met:
\begin{itemize}[leftmargin=*]
    \item (Completeness)  If $D$ truly has marginal $D_\x$, $\cal{A}$ accepts with probability at least $1-\tau$.
    \item (Soundness) 
    The probability that $\cal A$ accepts and outputs a hypothesis $h$ for which  $\pr_{(\x,y) \sim D}[h(\x) \neq y] > f(\OPT) +\eps$ ,  where 
    $\opt: = \min_{g \in \cal{C}} \pr_{(\x,y) \sim D}[g(\x) \neq y]$ is at most $\tau$.
\end{itemize}
The probability in the above statements is over the randomness of the sample $S$ 
and the internal randomness of the tester-learner $\cal A$.
\end{definition}
Since the introduction of the model, algorithmic aspects of 
testable learning have been investigated in a number of 
works~\citep{RV22a,Gollakota2022, diakonikolas2023efficient, GKSV23, 
gollakota2023tester}. The earlier works~\citep{RV22a, Gollakota2022} studied
the agnostic setting where $f(t) = t$ (corresponding to error $\opt+\eps$). 
These works developed general moment-matching methodology that 
gives testable learners for general halfspaces (and other concept classes). 
Since testable learning is at least as hard as the standard agnostic setting, 
the resulting algorithms necessarily incur 
exponential dependence in $1/\eps$. 
Subsequent work~\citep{diakonikolas2023efficient, GKSV23, gollakota2023tester}
developed testable learners with weaker error guarantees that run in fully-polynomial 
time. Specifically, these algorithms achieve error $O(\opt)+\eps$ under the Gaussian distribution, and more recently, for a subclass of log-concave distributions. {\em Importantly, all of these algorithms work for {\em homogeneous} halfspaces 
and inherently fail for general halfspaces.} 

The distinction between homogeneous and general halfspaces might seem 
inconsequential at first glance. 
Indeed, one can trivially reduce a general halfspace to a homogeneous 
one by adding an extra constant coordinate to each datapoint. 
While this is a valid reduction for distribution-free learning, 
it alters the marginal distribution on the feature vectors. 
Therefore, it does not generally work in the distribution-specific setting 
we study here. \new{Beyond this, as we will elaborate in the proceeding section, the 
techniques underlying previous testable learning algorithms are insufficient to obtain any non-trivial error guarantee for general halfspaces. }

Motivated by this gap in our understanding, 
in this work we ask whether it is possible to obtain a fully-polynomial time testable learner for general halfspaces with dimension-independent error. 
Specifically, we study the following question:
\begin{center}
{\em Is there a $\poly(d/\eps)$ time tester-learner for {\em general} halfspaces with error $f(\opt)+\eps$? \\ }
\end{center}
As our main result, formally stated below, we provide an affirmative answer to this question. 


\begin{theorem} [Testable Learning General Halfspaces under Gaussian Marginals]\label{thm:testable-learning-agnostic}
Let $\eps, \tau \in (0,1)$ and $\mathcal C$ be the class of general halfspaces on $\R^d$.
There exists a tester-learner 
for $\mathcal C$  with respect to $\normal(\vec 0, \vec I)$ up to $0\text{-}1$ error $ \widetilde O \lp( \sqrt{\opt} \rp) + \eps$, where $\OPT$ is the $0\text{-}1$ error of the best fitting function in $\mathcal{C}$, that fails with probability at most $\tau$.
Furthermore, the algorithm draws $N = \poly(d, 1/\eps) \log(1/\tau)$ samples, and runs in time $\poly( N  )$.
\end{theorem}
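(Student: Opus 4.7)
The plan is to prove Theorem~\ref{thm:testable-learning-agnostic} via a reduction from testably learning general halfspaces to testably learning \emph{nearly homogeneous} halfspaces (those whose threshold $|t^*|$ is bounded), combined with a case analysis driven by the empirical label bias $b \eqdef \E_D[y]$. The intuition is that the standard Gaussian places half its mass on each side of the origin, so if the optimal halfspace $h^*(\x) = \sgn(\w^* \cdot \x + t^*)$ has large $|t^*|$, then the labels must be strongly biased and a constant classifier already incurs $0\text{-}1$ error $\wt O(\sqrt{\opt}) + \eps$; otherwise $|t^*|$ must be bounded and the problem is nearly homogeneous.

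Concretely, I would design the tester to verify that the low-degree moments (up to degree $k = \polylog(d/\eps)$) of the empirical marginal match those of $\normal(\vec 0, \vec I)$ to $\poly(\eps)$ accuracy; completeness follows from standard polynomial concentration for Gaussians, and soundness is what certifies that the moment-based estimators below remain accurate on the accepted distribution. Next, estimate $\wh b \approx b$ by averaging labels; since $\E_{\x \sim \normal}[h^*(\x)] = 2\Phi(t^*) - 1$ and $|b - \E_{\x \sim \normal}[h^*(\x)]| \leq 2\opt$, we obtain $|\wh b - (2\Phi(t^*)-1)| \leq 2\opt + \eps$. The algorithm then returns whichever of the following two candidates has smaller empirical error: (a) the constant classifier $h \equiv \sgn(\wh b)$, whose error $(1-|\wh b|)/2 + O(\eps)$ is at most $\wt O(\sqrt{\opt})+\eps$ whenever $|t^*|$ is large enough that $|\Phi(t^*)-1/2| = \Omega(\sqrt{\opt})$; or (b) the output of a subroutine for testably learning nearly homogeneous halfspaces, invoked in the complementary regime where $|t^*| \leq T = O(\sqrt{\log(1/\opt)})$.

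The subroutine for nearly homogeneous halfspaces forms the technical heart of the proof. The plan is to recover the direction via the first Chow parameter: under the tester's moment conditions, $\E_{\x \sim \normal}[h^*(\x)\,\x] = 2\phi(t^*)\,\w^*$, and the adversarial label corruptions shift this estimator by at most $\wt O(\sqrt{\opt})$ once the tester has certified Gaussian-like behavior of the marginal. Since $\phi(t^*) \geq \phi(T)$ is only polylogarithmically small, dividing out this factor yields $\w^*$ up to direction error $\wt O(\sqrt{\opt})$. A final one-dimensional threshold search along the recovered direction then pins down $t^*$, producing a hypothesis whose disagreement region with $h^*$ is a ``corridor'' of Gaussian mass $\wt O(\sqrt{\opt})$, for total $0\text{-}1$ error $\wt O(\sqrt{\opt}) + \eps$. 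The main obstacle I expect will be this direction-recovery step: bounding the adversarial contribution to $\E_D[y\,\x]$ using only the tester's low-degree moment certificate (instead of true Gaussianity) requires a careful tail-truncation argument that replaces Gaussian concentration by what the tester has certified, and it is precisely this truncation slack that yields the $\wt O(\sqrt{\opt})$ rate here rather than the $O(\opt)$ rate available in the strictly homogeneous case.
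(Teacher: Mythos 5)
Your proposed reduction to the nearly homogeneous case is the right high-level instinct, and matches the paper's framing, but the key quantitative step is wrong and it is precisely the obstruction the paper identifies in Section~1.1. Your case split forces the nearly homogeneous subroutine to handle thresholds up to $T = \Theta(\sqrt{\log(1/\opt)})$ (this is what is needed so that in the complementary regime the constant classifier achieves $\wt O(\sqrt{\opt})$; note also that your stated condition ``$|\Phi(t^\ast) - 1/2| = \Omega(\sqrt{\opt})$'' is far weaker than what the constant classifier actually needs, which is $\min(\Phi(t^\ast),1-\Phi(t^\ast)) = \wt O(\sqrt{\opt})$). At $T = \Theta(\sqrt{\log(1/\opt)})$ the Gaussian density is $\phi(T) = \Theta(\sqrt{\opt})$ --- \emph{polynomially} small, not polylogarithmically small as your proof claims. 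This breaks the Chow-parameter step: in the testable regime, bounded second moments (which is all a low-degree test certifies) only give $\|\E_D[y\x] - \E_D[h^\ast(\x)\x]\|_2 \leq O(\sqrt{\opt})$ by Cauchy--Schwarz, which is the \emph{same order} as $\|\E[h^\ast(\x)\x]\|_2 = G(t^\ast) \geq G(T) = \Theta(\sqrt{\opt})$. Dividing out $\phi(t^\ast)$ then gives $\Theta(1)$ direction error, not $\wt O(\sqrt{\opt})$. The ``tail-truncation'' fix you gesture at would require certifying degree-$\Theta(\log(1/\opt))$ directional moment bounds $\sup_{\|\vec u\|_2 = 1}\E_{D_\x}[(\vec u\cdot\x)^k]$; as the paper notes, matching $k$ moments of $D_\x$ to $\normal(\vec 0, \vec I)$ to the required accuracy costs $d^{\Omega(k)}$ samples and time, which for $k = \Theta(\log(1/\opt))$ is quasipolynomial --- not $\poly(d,1/\eps)$. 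Testing degree-$\polylog$ moments ``to $\poly(\eps)$ entrywise accuracy'' does not control the directional moments, since the $d^{k}$ entrywise errors can align adversarially.

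The idea your proposal is missing is the paper's actual technical core: rather than estimating $\E_D[y\x]$, one estimates the \emph{conditional mean of the tail points} $\boldsymbol\mu_+ = \E[\x \mid y = +1]$. Conditioning on the minority label concentrates mass on the far side of the hyperplane, so $\|\boldsymbol\mu_+\|_2 = \Theta(\sqrt{\log(1/B)})$ is \emph{large}, giving a favorable signal-to-noise ratio against the $O(\sqrt{\opt}/B)$ corruption --- whereas the Chow parameter's magnitude $G(t^\ast)$ can be comparable to the corruption. The intersection of the line through $\boldsymbol\mu_+$ with the separating hyperplane is a ``good localization center'' (Definition~\ref{def:good-localization-center}); rejection sampling around it (Lemma~\ref{lem:good-center}) converts the general halfspace to a nearly homogeneous one with threshold $O(\eps)$, and only \emph{then} does the Chow-parameter machinery (Proposition~\ref{prop:small-offset-halfspace}) apply, since the relevant density $G$ is now $\Theta(1)$. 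Certifying that $\boldsymbol\mu_+$ is usable requires only a covariance test and a one-dimensional CDF/stability test along $\boldsymbol\mu_+$ (Lemmas~\ref{lem:mu-v-distance}, \ref{lem:mean-distance-bound}), which are polynomial-time; none of this appears in your sketch.
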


We reiterate that this is the first polynomial-time testable learner 
for \emph{general} halfspaces that achieves non-trivial \new{(i.e., dimension-independent)} error 
guarantee. As already mentioned, prior works either focus on 
\emph{homogeneous} halfspaces or incur super-polynomial runtime. 

\new{We leave open the questions of obtaining quantitatively better error guarantee (with $O(\opt)+\eps$ as the ideal goal) and extending to broader classes of distributions beyond the Gaussian. Either potential improvement stumbles upon non-trivial obstacles; see \Cref{rem:broader-distr} and \ref{rem:o-opt}}.

\subsection{Some natural attempts and why they fail}
\label{sec:barrier}

One may wonder whether previously known testable learners 
for homogeneous halfspaces can be applied 
\new{(directly or with minor modifications)} in our setting 
to achieve non-trivial learning error, e.g., $\opt^{c}$ for some $c \in (0, 1)$. 
We will demonstrate below that known methods fail to achieve testable learning for 
general halfspaces with even (sufficiently small) constant error.

\vspace{0.1cm}

\noindent {\bf Difficulty of Estimating Chow Parameters}
For learning general halfspaces under the Gaussian distribution 
\new{with adversarial label noise}, 
the only prior works that achieve \new{dimension-independent} error 
are \citep{DKS18a} and \citep{diakonikolas2022learning} 
\new{(in the non-testable setting)}. In both of these works, 
the algorithms are required at some point to estimate the 
Chow-parameters, i.e., $\E_{  (\x, y) \sim D}[ y\x ]$.
The intuition behind this choice is that for any halfspace  $h(\x)=\sign(\vec w\cdot\x+t)$,  we have that 
$\E_{  (\x, y) \sim D}[ h(\x)\x ]=\vec w G(t)$,
where $G(t)$ is the pdf of the standard normal. In the non-testable regime, this \new{expectation} is easily computable with $\poly(d/\eps)$ samples and runtime. Unfortunately, in the testable regime, one needs to first certify that the \new{marginal} distribution \new{$D_{\x}$} is similar to a Gaussian in the sense that $\E_{\x \sim \new{D_{\x}}}[ h(\x)\x ]\sim \vec w G(t)$ for any halfspace $h$. 
To the best of our knowledge, the only way to certify this property 
is to certify that at least $\Omega(1/G(t))$ moments 
of the distribution \new{$D_{\x}$} match \new{those} 
of the standard normal (see \cite{Gollakota2022}), 
which requires at least $d^{\Omega(1/G(t))}$ \new{samples and} runtime. In other words, this \new{approach would} lead to exponential runtime 
if $1/G(t)$ is on the order of $\poly(1/\eps)$.

\vspace{0.1cm}

\noindent {\bf Adapting Testable Learners for Homogeneous Halfspaces} 
For testable learning of homogeneous halfspaces, two techniques have been used in prior work. 
The first one uses an adaptive localization method~\citep{diakonikolas2023efficient}, and the second 
is to construct non-convex SGD tester-learners \citep{GKSV23,gollakota2023tester}. At a high-level, 
both of these techniques localize in a band {\em close to the origin} in order to maximize the error conditioned on the band of the current hypothesis and the optimal one. Unfortunately, for general halfspaces, 
one cannot localize to maximize the error, 
{\em unless} the angle between the current hypothesis 
\new{(weight vector)} 
and the optimal one is sufficiently small. 
This holds because if the angle is large, 
then no band can contain large enough error, 
except bands that are very far away (which due to the noise model can be arbitrarily noisy). 
The only \new{known} method to obtain a good enough 
initialization point is to rely on Chow-parameters --- 
but as we discussed above, this is hard in the testable regime.

\subsection{Overview of Techniques}
\label{sec:techniques}

\noindent {\bf Reduction to Nearly Homogeneous Halfspaces} 
The overall strategy of our algorithmic approach is to \new{efficiently} reduce the testable learning of general halfspaces 
to the testable learning of ``nearly'' homogeneous halfspaces, i.e., halfspaces with \snew{thresholds of size $\eps$} \footnote{Our main technical contribution is to show that we can efficiently construct a list of $\tilde O(1/\eps)$ candidate learning instances one of which is guaranteed to correspond to a halfspace with a threshold of size at most $\eps$; \snew{and reduce testable learning of the unknown general halfspace to testable learning of these nearly homogeneous halfspaces.}}. 
\snew{Assuming such an efficient reduction,  
our main theorem then follows via a careful analysis  
showing that (essentially) 
the main algorithm of the prior work by \citep{diakonikolas2023efficient} (on testable learning of homogeneous
halfspaces) 
can testably learn halfspaces with small biases.}
The key \new{notion} in implementing such a reduction is what we refer to as \emph{Good Localization Centers} (\Cref{def:good-localization-center}).
In short, a good localization center is a point that \new{is} 
close to the decision boundary of the \new{target} halfspace 
\new{and} at the same time not too far from the origin. As an 
example, the intersection between the line along the defining 
vector of the target halfspace
and the 
separating hyperplane of the halfspace will be a 
good localization center.

Then, \new{leveraging} the technique developed in \cite{DKS18a}, one can localize to regions around 
the good localization center via rejection sampling. 
After such a localization procedure, using the fact that the good localization center is almost on the \snew{target} halfspace, 
one can show that \new{a} general halfspace 
will become nearly homogeneous.
Moreover, since the center is also required to be not too far from the origin, one can further demonstrate that a decent fraction of the samples can still survive the rejection sampling, ensuring that the proceeding invocation of the homogeneous halfspace tester-learner is \new{sample and computationally} efficient. 
\snew{
Our main technical contribution is an efficient testable learner that computes a list of candidates containing at least one good localization center. 
This implies that the learner has inherently learned a good constant approximation to the defining vector of the unknown halfspace, which is the major obstacle against applying prior methods (developed for homogeneous halfspaces) 
in the general halfspace setting 
(see \Cref{sec:barrier} for more details).}

\vspace{0.1cm}

\noindent {\bf Finding a Good Localization Center} 
We now describe a testable learning procedure that either produces a good localization center or reports that the underlying distribution is not Gaussian.
\new{We can assume that the optimal halfspace is 
$h(\x) = \sgn( \vec v^\ast \cdot \x - t^\ast)$ 
for some $t^{\ast}>0$. As mentioned in the preceding discussion,} 
the ideal localization center \new{would be} 
the intersection between the line along the defining vector $\vec v^\ast$ and the separating hyperplane of the \snew{target} halfspace.

\nnew{
Naively, one may try to estimate the Chow parameters 
$\vec {\tilde v} := \E_{(\x, y)\sim D}[ y \x ]$, 
and pick some point along the direction of $\vec {\tilde v}$ 
--- as \snew{the angle between $\vec {\tilde v}$ and $\vec v^\ast$ ought to be} small when the underlying $\x$-marginal is Gaussian. 
Yet, as we discussed in \Cref{sec:barrier}, this cannot be done 
efficiently in the testable regime. 
For this reason, we instead calculate the mean of the points that are positively 
labeled (\snew{formally}, the tail points; see \Cref{def:tail-point}), 
i.e., $\bar{\boldsymbol\mu}_+=\E_{(\x,y)\sim D}[\x\mid y=1]$. 
\new{In the absence of corrupted labels},  
this gives us a point located in the positive side of the halfspace, 
i.e., $h(\bar{\boldsymbol\mu}_+)=1$, 
as $\vec v^\ast \cdot \bar{\boldsymbol\mu}_+=\E_{(\x,y)\sim D}[\vec v^\ast \cdot \x\mid h(\x)=1]\geq t^\ast$. 
\snew{In other words}, there exists a $\lambda\in(0,1)$ so that 
$\lambda \bar{\boldsymbol\mu}_+\cdot\vec v^\ast=t^\ast$, 
and if we center our distribution at the point $\vec w=\lambda \bar{\boldsymbol\mu}_+$ 
\snew{via rejection sampling (see \Cref{def:rejection-sampling})}, 
then the optimal halfspace becomes \snew{exactly} homogeneous (\Cref{fig:mu-def}).}

\nnew{
There are two obstacles that we need to circumvent in order for such an approach to work: (i) We do not have access to the true labels 
and, by conditioning on the positive labels, 
the noise can make the \snew{corrupted} mean 
$\boldsymbol\mu_+=\E_{(\x,y)\sim D}[\x\mid y=1]$ appear on the negative side, i.e., $h(\boldsymbol\mu_+) = -1$; and 
(ii) the point $\lambda \boldsymbol\mu_+$ can be very far from the origin, 
i.e., $\|\lambda \boldsymbol\mu_+\|_2\gtrsim \sqrt{\log(1/\opt)}$, 
and \snew{applying rejection sampling to re-center the distribution at this point} 
can result in a corruption level that is far worse than the current one. 
Fortunately, we can ensure that these \new{situations} cannot happen 
by certifying the following two properties of the empirical distribution 
over samples: 
(i) the distribution has bounded covariance, and
(ii) the cumulative density function of the distribution projected 
along the direction of $\boldsymbol\mu_+$ is sufficiently close 
to that of the standard Gaussian.}

\new{ First, if we assume that (a) the mass of the positive points is at least 
$B\geq \sqrt{\opt} \poly(\log(1/\opt))$, and (b) the sample covariance has 
\snew{spectral norm bounded from above by $2$}, 
\snew{then we can ensure that $\|\boldsymbol\mu_+-\bar{\boldsymbol\mu}_+\|_2\leq 1/\poly(\log(1/\opt))$ via the following simple calculation}:
for any unit vector $\vec u$, $|\E_{(\x,y)\sim D}[\vec u\cdot \x (\mathbbm 1\{h(\x)=1\}- \mathbbm 1\{y=1\})]|\leq \E_{(\x,y)\sim D}[(\mathbbm 1\{h(\x)=1\}-\mathbbm 1\{y=1\})^2]^{1/2}\E_{(\x,y)\sim D}[(\vec u\cdot\x)^2]^{1/2}\leq \sqrt{2\opt}$\snew{; hence,} 
the error $\|\boldsymbol\mu_+-\bar{\boldsymbol\mu}_+\|_2$ is at most 
\snew{$\sqrt{2\opt} / B \leq 1/\poly(\log(1/\opt))$}. Note that 
\snew{assumption (b) can be efficiently verified} and
assumption (a) on $B \geq \sqrt{\opt}\poly(\log(1/\opt))$ is without loss of generality --- otherwise, the constant hypothesis, 
i.e., $h(\x) = -1$ for all $\x$, would get $\wt O(\sqrt{\opt})$ error
\footnote{In the end, we can simply compare the halfspace learned 
via localization with the constant hypothesis in terms 
of their empirical errors to select the best.}. 
This suffices to show that the intersection point $\vec w$ 
is close to $\boldsymbol{\mu}_+$.
\snew{This sketch is formalized in \Cref{lem:mu-v-distance}.}
}

\new{It remains to show that $\|\boldsymbol{\mu}_+\|_2$ is not very large. 
To this end, we certify that the empirical distribution projected 
along $\boldsymbol  \mu_+$ has its cumulative density function 
close to that of the standard Gaussian. 
\snew{If so, $\snorm{2}{\boldsymbol{\mu}_+}$ can then be bounded from below 
by $\Phi^{-1}(M)$, where $M$ denotes the mass of the samples 
inside the region $\{ \x \in S: \boldsymbol  \mu \cdot \x > \snorm{2}{\boldsymbol  \mu}^2 \}$.}
Conditioned on that the ``Gaussian closeness'' CDF test passes,
we show that the distribution of the tail points must be approximately 
``centered'' around $\boldsymbol \mu$ in the $\boldsymbol \mu$-direction.
Thus, $\{ \x \in S:\boldsymbol \mu \cdot \x > \snorm{2}{\boldsymbol \mu}^2 \}$ 
must contain a large fraction of the tail points, allowing us to derive a lower 
bound on the mass of the set $M$, \snew{from which the upper bound on 
$\|\boldsymbol{\mu}_+\|_2$ follows. See \Cref{lem:mean-distance-bound} for the 
details of the argument.}}

\begin{figure}[ht]
    \centering
\includegraphics[width=8cm]{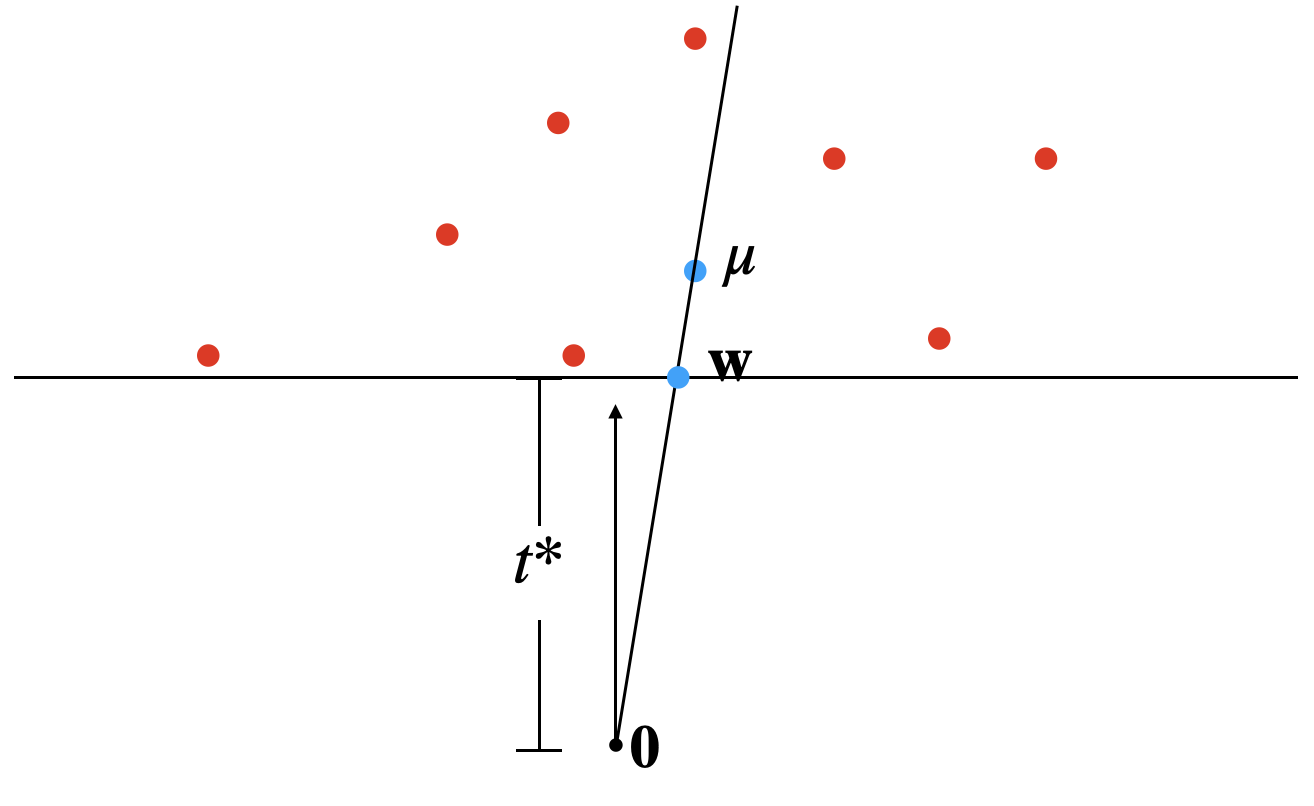}
    \caption{When there is no noise, the tail points are at distance at least $t^\ast$ from the origin, since they all lie on the side of the \new{hyperplane not containing} the origin. Consequently, the line along their mean $\boldsymbol \mu$ must first intersect the separating hyperplane of the halfspace (crossing $\vec w$) and then cross the mean vector $\boldsymbol  \mu$ of the tail points, regardless of the underlying marginal distribution.
    If we re-center the distribution at $\vec w$, the halfspace will then become exactly homogeneous.
    }
    \label{fig:mu-def}
\end{figure}

Conditioned on the relevant tests all passing, 
we can construct a list of candidate localization centers 
along the direction of $\boldsymbol \mu_+$ whose $\ell_2$-norms 
form an $\eps$-cover of the segment $[0, O( \sqrt{ \log(1/B) })]$. \snew{It is not hard to see that such a list is guaranteed 
to contain one  localization center as good as the intersecction point $\vec w$.}
We then perform the reduction to \new{near-}homogeneous halfspaces 
with each of the candidate localization centers in our \new{list}. 
Finally, we simply choose the best halfspace thus obtained, 
by examining the empirical errors of the list of the halfspaces learned.
\snew{This concludes the overview of our approach.}

\subsection{Preliminaries}
We use small boldface characters for vectors and capital bold characters for matrices.
We use $[d]$ to denote the set $\{1, 2, \ldots, d\}$.
For a vector $\vec x \in \R^d$ and $i \in [d]$, $\vec x_i$ denotes the $i$-th coordinate of $\vec x$, and $\|\vec x\|_2 := \sqrt{ \sum_{i=1}^d \vec x_i^2 }$ the $\ell_2$ norm of $\vec x$.
We use $\vec x \cdot \vec y := \sum_{i=1}^n \vec x_i  \vec y_i$ as the inner product between them. We use $\mathbbm 1\{ E  \}$  to denote the indicator function of some event $E$. 

We use $\E_{\vec x \sim D}[\vec x]$ for the expectation of the random variable $\vec x$ according to the distribution $D$ and $\Pr[E]$ for the probability of event $E$. For simplicity of notation, we may omit the distribution
when it is clear from the context. 
For $\boldsymbol \mu \in \R^d, \vec \Sigma \in \R^{d \times d}$, we denote by $\normal (\boldsymbol \mu, \vec \Sigma)$  the $d$-dimensional Gaussian distribution
with mean $\boldsymbol \mu$ and covariance matrix $\vec \Sigma$.
For $(\vec x, y) \in \mathcal X$ distributed according to $D$, we denote $D_{\vec x}$
to be the marginal distribution of $\vec x$.
Let $f: \R^d \mapsto \{\pm 1\}$ be a boolean function and $D$ a distribution over $\R^d$. The (degree-$1$) Chow parameter vector of $f$ with respect to $D$ is defined as $\E_{\vec x \sim D}\lp[f(\vec x) \vec x  \rp]$. 
For a halfspace $h(\vec x) = \sgn(\vec v \cdot \vec x + t)$, we say that $\vec v$ is the defining vector of $h$ \snew{and $t$ is the threshold of $h$}.

We denote by $\Phi(t)$ the cumulative density function (CDF) of the one-dimensional standard Gaussian, i.e., $\Phi(t) = \Pr_{ x \sim \normal(0, 1) }[x > t]$, and by $G(t)$ its probability density function, i.e., $G(t) = \frac{1}{ \sqrt{2\pi} } \exp\lp( - x^2/2 \rp)$.
An elementary inequality we use
is that  $\Phi(t) \leq \frac{1}{ t \; \sqrt{2 \pi} }\exp(-t^2/2)
= G(t) / t$.
We say that two one-dimensional distributions $X,Y$ are $\eps$-close in CDF (or Kolmogorov) distance if they satisfy the inequality~$\sup_{t \in R} \abs{\Pr[ X > t ] - \Pr[Y > t]} \leq \eps$.

The asymptotic notation $\tilde{O}$ (resp. $\tilde{\Omega}$) suppresses logarithmic factors in its argument, 
i.e., $\tilde{O}(f(n)) = O(f(n)\log^c f(n))$ and 
$\tilde{\Omega}(f(n)) = \Omega(f(n)/\log^c f(n))$, where $c>0$ is a universal constant. 
We write $a \lesssim b$ (resp. $a\gtrsim b$) to denote that $\alpha \leq c b$ for a sufficiently small  absolute constant $c>0$ (resp. for a sufficiently large  absolute constant $c>0$).



\section{Testable Learning of General Halfspaces to Error \texorpdfstring{$\wt O\lp( \sqrt{\opt} \rp)$}{Lg} }\label{sec:general-halfspaces}
In this section, we establish our main result (\Cref{thm:testable-learning-agnostic}).
Our main strategy is to efficiently reduce testable learning of general halfspaces to testable learning of {\em nearly} homogeneous halfspaces, i.e., halfspaces whose \new{thresholds have small magnitude}.
The main \new{technical tool enabling} such a reduction is \new{the notion of} a \emph{good localization center}, defined below. 

\begin{definition}[Good Localization Center]
\label{def:good-localization-center}
Let $\alpha > 0$ and $\beta \in (0,1)$.
Given a general halfspace $h(\x)$, we say that a point $\vec w \in \R^d$ is 
an {\em $(\alpha, \beta)$-good localization center with respect to $h$}, if 
(i) the distance from \snew{the separating hyperplane of} $h$ to $\vec w$ is bounded by $\alpha$, 
and (ii) $\Phi(\snorm{2}{\vec w}) \geq \beta$, where $\Phi$ is the cdf of $\mathcal{N}(0, 1)$.
\end{definition}
\begin{figure}[ht]
    \centering
    \includegraphics[width=8cm]{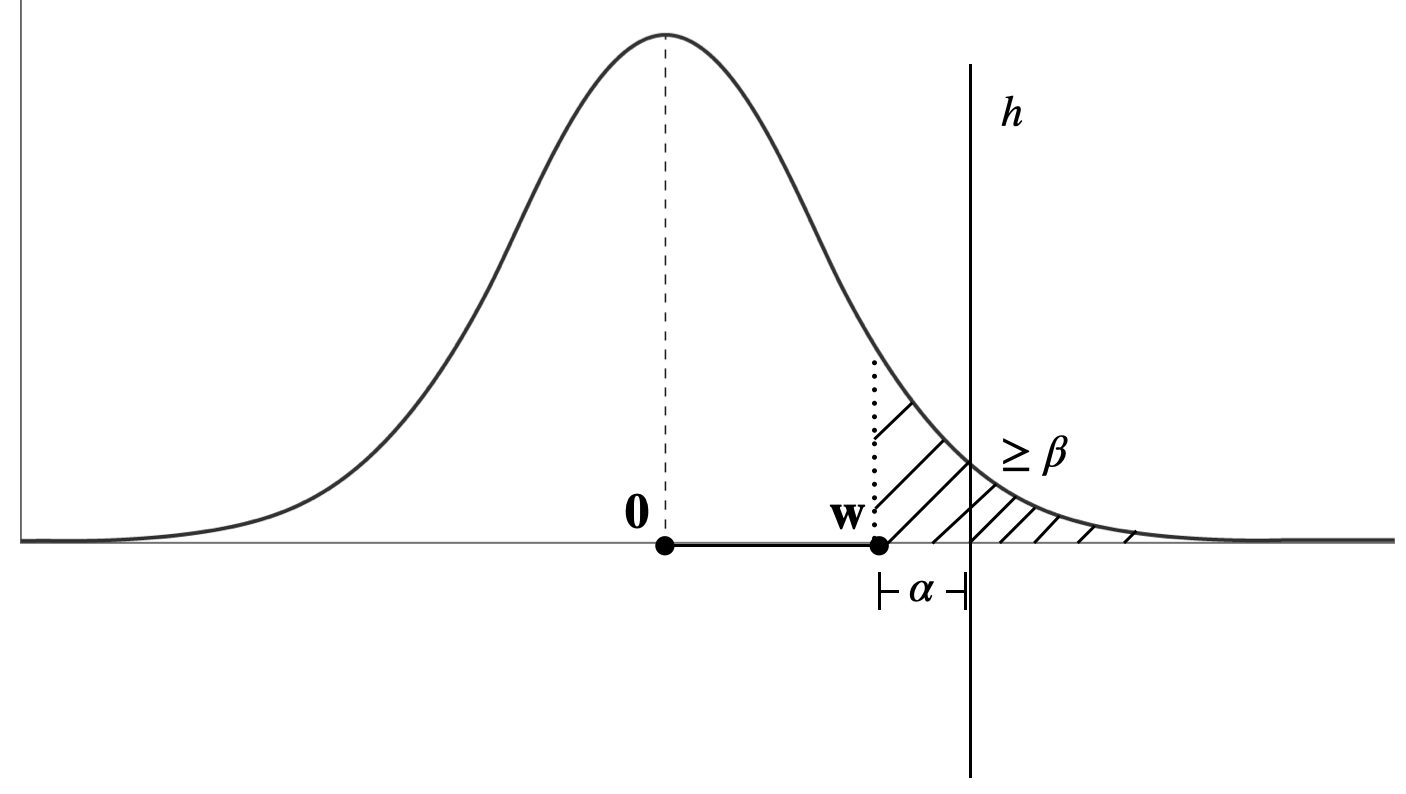}
    \caption{The figure illustrates a good localization center $\vec w$ with respect to the halfspace $h$. $\vec w$ is $\alpha$-far from the halfspace, and $\Phi(\snorm{2}{\vec w})$ is still non-trivial (bounded from below by the mass on one side of the halfspace).}
    \label{fig:localization-center}
\end{figure}

In Section~\ref{sec:good-localization-def}, we demonstrate how a good localization center can be \new{leveraged} to perform such a reduction. In the process, \new{we also develop additional required technical machinery}, 
including soft-localization (\Cref{lem:good-center}) 
and a Wedge-Bound (\Cref{lem:general-wedge-bound}).
In Section~\ref{sec:find-center}, we describe an algorithm 
(\Cref{alg1:localization-center}) that testably constructs a list of 
candidate points containing at least one good localization center; 
this is the main technical contribution of this work.
In Section~\ref{sec:put-together}, we put everything together to 
complete the proof of \Cref{thm:testable-learning-agnostic}.

\subsection{Good Localization Center and its Properties}
\label{sec:good-localization-def}
Let $\vec w$ be an $(\alpha, \beta)$-good localization center with respect to the halfspace $h$. 
If $\vec w$ is sufficiently close to the \new{separating hyperplane} of the halfspace and $\beta$ is bounded from below, we show that $\vec w$ can be leveraged to reduce testable learning of general halfspaces to testable learning halfspaces that are \emph{nearly homogeneous}.
Specifically, Lemma~\ref{lem:good-center} demonstrates that if we ``localize'' the $\x$-marginal around the center $\vec w$ using rejection sampling (see Definition~\ref{def:rejection-sampling}), the halfspace will be transformed into a new one whose threshold is bounded above by $O(\alpha \sqrt{\log(1/\beta)} )$.
\begin{definition}\label{def:rejection-sampling}
For $\vec w \in \R^d$ and $\sigma \in (0, 1)$, 
the $(\vec w, \sigma)$-rejection
procedure \new{is as follows:} given $\x \in \R^d$, the procedure accepts it with probability
$
\exp\lp(-(\sigma^{-2}-1)(\vec w/\snorm{2}{\vec w} \cdot \x + 
\snorm{2}{\vec w}/(1 - \sigma^2))^2
/2\rp)  \, ,
$
and rejects it otherwise.    
\end{definition}
\snew{
At a high level, if one performs $(\vec w, \sigma)$-rejection sampling on a Gaussian distribution, the resulting distribution will be $G:=\normal(\vec w, \vec \Sigma)$, where $\vec \Sigma$ is the matrix with eigenvalue $\sigma^2$
in the $\vec w$-direction, and eigenvalues $1$ in all orthogonal directions. 
This follows from the fact that the formula of the acceptance probability in \Cref{def:rejection-sampling} is precisely the ratio between the probability density functions of $G$ and the standard Gaussian.
Let $S_h$ be the separating hyperplane of $h$.
Assume that $\vec w$ is $\alpha$-close to $S_h$. 
This immediately implies that $S_h$ will be at most $\alpha$-far from the center of the new distribution $G$.
If we transform the space to make $G$ isotropic again, $h$ will then consequently get transformed into a new halfspace with small threshold.
This is formally shown in the following lemma, whose proof can be found in Appendix~\ref{app:21}}. 
\begin{lemma}[Localization With A Good Center]
\label{lem:good-center}
Let $\vec w$ be an $(\alpha, \beta)$-localization center with respect to 
$h(\x) = \sgn(\vec v^\ast \cdot \x + t^\ast)$ and 
$\sigma \eqdef \min (\|w\|_2^{-1}, \sqrt{1/2})$. The following hold:
\begin{enumerate}[leftmargin=*]
    \item The $(\vec w, \sigma)$-rejection sampling procedure 
    (\Cref{def:rejection-sampling}) applied on the standard Gaussian, 
    results in a distribution $G := \normal(\vec w, \vec \Sigma  )$ 
    with
$\vec \Sigma = \vec I - (1 - \sigma^2) \vec w \vec w^T / \snorm{2}{\vec w}^2$. Moreover, the acceptance probability is
$\Omega(\beta)$, and $\sigma \geq \Omega  (1/\sqrt{\log(1/\beta)}).$
    \item 
    Applying the transformation 
    $(\x, y) \mapsto  (  \vec \Sigma^{-1/2} \lp(\x - \vec w\rp), y)$ to the distribution of $(\x, h(\x))$, where $\x \sim G$, leads to
    the distribution $(\x, h'(\x))$, where $\x \sim \normal(\vec 0, \vec I)$, and 
    $
    h'(\x) = 
\sgn( \vec \Sigma^{1/2} \vec v^\ast \cdot \x + \vec v^\ast \cdot \vec w + t^\ast).
    $
    Moreover, the resulting halfspace $h'$ has its threshold size bounded 
    from above by 
    $ |\vec v^\ast \cdot \vec w + t^\ast| /\|\vec \Sigma^{1/2} \vec v^\ast\|_2
 = O(\alpha \sqrt{\log(1/\beta)}$.
\end{enumerate}
\end{lemma}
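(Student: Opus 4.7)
The plan is to prove both parts by direct computation, exploiting the rank-one structure of $\vec\Sigma = \vec I - (1-\sigma^2)\vec u\vec u^T$ with $\vec u \eqdef \vec w / \snorm{2}{\vec w}$. The only non-unit eigenvalue of $\vec\Sigma$ is $\sigma^2$ along $\vec u$, so $\det(\vec\Sigma) = \sigma^2$ and (by Sherman-Morrison) $\vec\Sigma^{-1} = \vec I + (\sigma^{-2}-1)\vec u\vec u^T$. This will make every expression below decompose neatly into a scalar quantity in $t \eqdef \vec u\cdot\x$ plus a radially symmetric remainder.

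For Part 1, I verify the rejection mechanism by computing the Gaussian density ratio
\[
\frac{\phi_{\vec w,\vec\Sigma}(\x)}{\phi_0(\x)} \;=\; \sigma^{-1}\exp\!\Bigl(-\tfrac{1}{2}(\x - \vec w)^T \vec\Sigma^{-1}(\x - \vec w) + \tfrac{1}{2}\snorm{2}{\x}^2\Bigr),
\]
expanding the quadratic form using the explicit $\vec\Sigma^{-1}$, and completing the square in $t$. After simplification, the ratio factors as a constant times the expression from \Cref{def:rejection-sampling}, which identifies the acceptance probability $p(\x)$ up to a normalizing constant $C$. Since $p(\x)$ must be scaled so that $\sup_\x p(\x)=1$ (attained at the mode of the ratio), a standard rejection-sampling identity gives that the overall acceptance probability equals $\E_{\x\sim\phi_0}[p(\x)] = C$. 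Finishing the algebra yields $C = \sigma\exp\bigl(-\snorm{2}{\vec w}^2/(2(1-\sigma^2))\bigr)$ and confirms that the accepted samples follow $\normal(\vec w, \vec\Sigma)$.

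I then lower-bound $C$ by $\Omega(\beta)$ by splitting on the definition of $\sigma$. When $\snorm{2}{\vec w} \le \sqrt{2}$, $\sigma = 1/\sqrt{2}$ and the exponent is $O(1)$, so $C = \Omega(1) \ge \Omega(\beta)$. When $\snorm{2}{\vec w} > \sqrt{2}$, $\sigma = 1/\snorm{2}{\vec w}$ and a short manipulation gives $\snorm{2}{\vec w}^2/(1-\sigma^2) \le \snorm{2}{\vec w}^2 + 2$, hence $C = \Omega\bigl(G(\snorm{2}{\vec w})/\snorm{2}{\vec w}\bigr) \ge \Omega\bigl(\Phi(\snorm{2}{\vec w})\bigr) \ge \Omega(\beta)$, invoking the Mills-type inequality $\Phi(t)\le G(t)/t$ from the preliminaries (tight up to constants for $t \ge 1$). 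The companion bound $\sigma \ge \Omega(1/\sqrt{\log(1/\beta)})$ then follows from the standard Gaussian tail estimate $\Phi(t) \le e^{-t^2/2}$: indeed $\Phi(\snorm{2}{\vec w}) \ge \beta$ forces $\snorm{2}{\vec w} \le O(\sqrt{\log(1/\beta)})$.

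For Part 2, the affine map $\Psi(\x) \eqdef \vec\Sigma^{-1/2}(\x - \vec w)$ is a bijection pushing $\normal(\vec w, \vec\Sigma)$ forward to $\normal(\vec 0, \vec I)$. Substituting $\x = \vec\Sigma^{1/2}\y + \vec w$ into $h(\x) = \sgn(\vec v^\ast \cdot \x + t^\ast)$ gives $h'(\y) = \sgn(\vec\Sigma^{1/2}\vec v^\ast \cdot \y + \vec v^\ast\cdot\vec w + t^\ast)$, exactly as claimed. To bound its threshold, assume WLOG $\snorm{2}{\vec v^\ast} = 1$; then $|\vec v^\ast\cdot\vec w + t^\ast|$ is exactly the distance from $\vec w$ to the separating hyperplane of $h$, which is at most $\alpha$ by the good-center hypothesis. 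Meanwhile $\snorm{2}{\vec\Sigma^{1/2}\vec v^\ast}^2 = 1 - (1-\sigma^2)(\vec u\cdot\vec v^\ast)^2 \ge \sigma^2$, so after normalizing the defining vector to unit length the new threshold is at most $\alpha / \sigma = O(\alpha\sqrt{\log(1/\beta)})$ by Part 1. The main obstacle is really just bookkeeping: correctly completing the square (it is easy to misplace constants between the mode of the ratio and the normalizer), and handling the two regimes for $\snorm{2}{\vec w}$ in the $C$-lower-bound; everything else is elementary Gaussian algebra.
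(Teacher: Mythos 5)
Your proof is correct and follows essentially the same route as the paper. Two small, benign differences: (i) where the paper cites \cite{DKS18a} for the fact that $(\vec w,\sigma)$-rejection sampling applied to $\normal(\vec 0,\vec I)$ yields $\normal(\vec w,\vec\Sigma)$, you re-derive it by computing the density ratio and completing the square, which is more self-contained; and (ii) for the denominator in the threshold bound, you use the quadratic-form identity $\snorm{2}{\vec\Sigma^{1/2}\vec v^\ast}^2 = (\vec v^\ast)^T\vec\Sigma\vec v^\ast = 1-(1-\sigma^2)(\vec u\cdot\vec v^\ast)^2 \ge \sigma^2$ directly, whereas the paper decomposes $\vec v^\ast = a\vec w/\snorm{2}{\vec w} + b\vec u$ and bounds $\max(a,\sigma b)\ge\sigma/\sqrt{2}$; your variant is in fact slightly cleaner and gives the sharper constant $\sigma$ rather than $\sigma/\sqrt{2}$. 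Everything else — the two-case split on $\snorm{2}{\vec w}$ for lower-bounding the acceptance probability via the Mills-type inequality, the tail estimate giving $\snorm{2}{\vec w}\le O(\sqrt{\log(1/\beta)})$ and hence $\sigma\ge\Omega(1/\sqrt{\log(1/\beta)})$, and the affine push-forward giving the transformed halfspace with threshold $\le\alpha/\sigma$ — matches the paper's argument.
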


Given a good localization center, \Cref{lem:good-center} allows us to transform the original halfspace into one with a small \new{threshold}. 
We show that such a \new{nearly} homogeneous halfspace can be testably learned effectively in the proposition below.
\begin{proposition}[Testable Learning \new{of Nearly} Homogeneous Halfspaces]
\label{prop:small-offset-halfspace}
Let $\eps, \tau \in (0,1)$.
Let $D$ be a distribution over $\R^d \times \{\pm 1\}$ and $h$ be a halfspace that achieves error $\opt$ under $D$ with $h(\x) = \sgn(\vec v^\ast\cdot \x + t^\ast)$, where $\vec v^\ast \in \R^d$ is some unit vector, and $\abs{t} < \eps$.
Then, given $N = \poly(d/\eps) \log(1/\tau)$ many \iid samples from $D$, there exists an efficient tester-learner that runs in time $\poly(N)$, and 
 either reports $D_\x$ is not Gaussian or outputs a vector $\vec v$.
 Moreover, with probability at least $1 - \tau$, we have (i) if the algorithm reports $D_\x$ is not Gaussian, the report is correct, and (ii) if the algorithm returns a vector $\vec v$, $\vec v$ 
 satisfies that $\snorm{2}{\vec v - \vec v^\ast} \leq O(\opt) + \eps$.
\end{proposition}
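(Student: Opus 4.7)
The plan is to reduce testable learning of nearly homogeneous halfspaces (where $|t^\ast| < \eps$) to testable learning of exactly homogeneous halfspaces --- for which a tester-learner already exists from prior work such as \citep{diakonikolas2023efficient,GKSV23,gollakota2023tester} --- and then convert a $0\text{-}1$ error guarantee into the required vector-distance guarantee via standard Gaussian geometry. The starting observation is that, under $\normal(\vec 0, \vec I)$, the halfspace $h$ and its homogeneous counterpart $h_0(\x)=\sgn(\vec v^\ast\cdot\x)$ disagree only on the slab $\{\x:|\vec v^\ast\cdot\x|\leq |t^\ast|\}$, which has Gaussian mass at most $O(|t^\ast|)\leq O(\eps)$. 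Hence the best-fitting \emph{homogeneous} halfspace on $D$ attains $0\text{-}1$ error at most $\opt+O(\eps)$, provided the tester certifies that the marginal is suitably close to Gaussian.

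Concretely, I would invoke the prior testable learner for homogeneous halfspaces on $D$ with a suitably scaled-down error parameter (say $\eps/C$ for a large absolute constant $C$). If that learner rejects, we output ``$D_\x$ is not Gaussian'', inheriting correctness from its completeness guarantee. Otherwise, it returns a unit vector $\vec v$ whose associated homogeneous halfspace $\sgn(\vec v\cdot\x)$ has $0\text{-}1$ error at most $O(\opt_{\mathrm{hom}})+\eps$ against $D$, where $\opt_{\mathrm{hom}}$ is the error of the best homogeneous halfspace on $D$. By the reduction observation $\opt_{\mathrm{hom}}\leq \opt+O(\eps)$, so this error is bounded by $O(\opt)+O(\eps)$.

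It remains to convert this $0\text{-}1$ guarantee into the desired inequality $\snorm{2}{\vec v-\vec v^\ast}\leq O(\opt)+\eps$. By the triangle inequality, the disagreement $\Pr_{\x\sim D_\x}[\sgn(\vec v\cdot\x)\neq h(\x)]$ is at most the $0\text{-}1$ error of $\sgn(\vec v\cdot\x)$ plus $\opt$, i.e.\ $O(\opt)+O(\eps)$. In the other direction, this disagreement is at least $\Pr_{\x\sim D_\x}[\sgn(\vec v\cdot\x)\neq \sgn(\vec v^\ast\cdot\x)]$ minus $O(\eps)$ (again using that the slab $\{|\vec v^\ast\cdot\x|\leq \eps\}$ has small mass, which the Gaussian-closeness tests certify). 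Under the (certified-Gaussian) marginal, the disagreement between two homogeneous halfspaces equals $\theta/\pi$, where $\theta=\angle(\vec v, \vec v^\ast)$. Combining the two bounds gives $\theta=O(\opt)+O(\eps)$, and since $\snorm{2}{\vec v-\vec v^\ast}=2\sin(\theta/2)\leq \theta$ for unit vectors, the claimed bound follows after rescaling $\eps$ by a constant.

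The main obstacle I expect is ensuring that the Gaussian-closeness certifications performed by the invoked homogeneous tester-learner are strong enough to support the final step: we need $D_\x$ to look Gaussian both (i) along the $\vec v^\ast$-direction (so the $\eps$-slab around $h$ really has $O(\eps)$ mass) and (ii) on the 2D subspace $\mathrm{span}(\vec v, \vec v^\ast)$ (so that the disagreement between two homogeneous halfspaces is close to $\theta/\pi$ rather than merely upper bounded). If the off-the-shelf tester's moment-matching certifications do not immediately yield (ii), I would augment it with a constant number of additional low-degree moment-matching tests on the output direction $\vec v$ together with the principal direction guessed from a Chow-like statistic; these are efficiently checkable on $\poly(d/\eps)$ samples and, via the standard moment-matching-implies-CDF-closeness machinery of \citep{Gollakota2022}, imply the required 2D closeness. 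Everything else is careful bookkeeping of constants so that the additive error terms collapse to the desired $\eps$.
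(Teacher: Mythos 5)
Your proposed reduction --- bounding the error $\opt_{\mathrm{hom}}$ of the best \emph{homogeneous} halfspace by $\opt + O(\eps)$ via the small mass of the slab $\{|\vec v^\ast \cdot \x| \leq |t^\ast|\}$, and then invoking an off-the-shelf homogeneous tester-learner --- is precisely the naive approach the paper's own proof explicitly identifies and rejects at the outset. The difficulty is that the slab-mass bound requires certifying that $\Pr_{\x\sim D_\x}[\,|\vec v^\ast\cdot\x|\leq\eps\,]=O(\eps)$ for the \emph{unknown} direction $\vec v^\ast$, and this cannot be done from the tests that a polynomial-time homogeneous tester-learner performs. Moment matching to constant degree along any direction only yields constant (not $O(\eps)$) CDF closeness; to drive the slab-mass error down to $O(\eps)$ one would need $\mathrm{poly}(1/\eps)$-degree moment matching, which kills the polynomial running time, and one would in any case need to know $\vec v^\ast$ in advance to aim the test. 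Your hedge --- ``augment with a constant number of additional low-degree moment-matching tests on $\vec v$ and on a Chow-guessed direction'' --- does not close this gap: a constant number of moments cannot certify $O(\eps)$-scale anti-concentration, and the chicken-and-egg issue (needing $\vec v^\ast$ to pick the test direction) remains.

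The paper's actual proof avoids the slab issue entirely by working at the level of the algorithm's internal Chow-parameter estimates rather than via a 0-1-error surrogate. Concretely, it uses that for Gaussian marginals $\E[h(\x)\x] = 2G(t)\vec v^\ast$, so for any threshold with $|t| \leq 10$ the Chow vector still has magnitude $\Omega(1)$ along $\vec v^\ast$; hence Proposition~2.1 of \cite{diakonikolas2023efficient} (which certifies Chow-parameter estimation) goes through unchanged up to constants. It then tracks how the iterative rejection-sampling localization rescales the threshold by a factor $1/\sigma$ with $\sigma = \Omega(\eps)$, so starting from $|t| \leq \eps$ the threshold stays $O(1)$ throughout all iterations and the algorithm converges exactly as in the homogeneous case. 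This is a genuinely different decomposition from yours: it never needs the offset halfspace to agree with a homogeneous one in 0-1 error, only that the Chow direction remains informative, which is a far weaker and testably certifiable condition.
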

This proposition is established by a careful analysis of the testable learner for homogeneous halfspaces by \cite{diakonikolas2023efficient}.
The main idea is that whenever the threshold of the optimal halfspace is bounded by some absolute constant, the Chow parameters will be a constant multiplicative factor of the defining vector. 
We can testably learn the Chow parameters up to constant error, which then gives a constant approximation to the defining vector of the halfspace. 
After that, the algorithm proceeds in an iterative manner. 
In particular, it performs localization via rejection 
sampling using the current estimate of the defining vector. 
The localization steps may blow up the threshold $t^\ast$ 
of the optimal halfspace by at most a factor of $1/\eps$. 
Since $t^\ast$ is initially bounded from above by $\eps$, 
the threshold stays bounded from above by some absolute 
constant. \snew{This ensures that the Chow-parameters 
vector stays within a constant multiplicative factor of the 
defining vector}, thereby allowing us to iteratively 
approximate the defining vector by testably learning the 
Chow parameters \new{to constant error}.
The detailed proof can be found in Appendix~\ref{app:secgeneral}.

Since \Cref{lem:good-center} gives the form of the transformation on the defining vector exactly, 
after learning the defining vector of the nearly homogeneous halfspace, 
we can revert the transformation to get back the defining vector of the original halfspace. 
While the argument for bounding the error of the vector after reverting the transformation is a standard piece in most localization based learning procedures (see, e.g., \cite{DKKTZ21}),
one technical issue in our scenario is that such an argument usually requires the angle between the localization direction $\vec w$ and the defining vector $\vec v^\ast$ to be bounded away from $\pi/2$ by some constant. On the other hand, we may need to apply 
the lemma in cases where $\vec w$ and $\vec v^\ast$ are almost orthogonal. 
To deal with this issue, we require the following lemma 
which carefully bounds how the error grows as the angle 
between $\vec w$ and $\vec v^\ast$ increases (see 
Appendix~\ref{app:21} for the proof).
\begin{lemma}[Transformation Error] \label{lem:transformed-angle}
Let $\sigma, \delta \in (0, 1)$, $\beta \in (0, \sqrt{2})$, $\vec w, \vec v, \vec v^\ast \in \R^d$ be unit vectors , and
$\vec \Sigma = \vec I - (1 - \sigma^2) \vec w \vec w^T$.
Furthermore, assume that 
$
\|\vec v^\ast - \vec w\|_2 < \beta,
\| \vec v - \vec \Sigma^{1/2} \vec v^\ast / \|\vec \Sigma^{1/2} \vec v^\ast\|_2  \|_2 <\delta.
$
Then it holds
$
\|
\vec \Sigma^{1/2}
\vec v / \snorm{2}{\vec \Sigma^{1/2}
\vec v} -
\vec v^\ast \|
\leq O ( \delta ) (\sigma + \beta) \;
\big(  \sigma^{-1} \frac{\beta}{1 - \beta^2/2} + 1  \big).
$
\end{lemma}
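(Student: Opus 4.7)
My plan is to work in the two-dimensional plane $\mathrm{span}(\vec w, \vec v^\ast)$, decompose every relevant vector into $\vec w$-parallel and $\vec w$-orthogonal parts, and track how $\vec \Sigma^{1/2}$ distorts each component before and after renormalization. First, I would set up coordinates: write $\vec v^\ast = \cos\theta\, \vec w + \sin\theta\, \vec u$ for some unit vector $\vec u \perp \vec w$, so that the hypothesis $\|\vec v^\ast - \vec w\|_2 < \beta$ becomes $\cos\theta > 1 - \beta^2/2$ and $\sin\theta < \beta\sqrt{1 - \beta^2/4}$, and hence $\tan\theta < \beta/(1-\beta^2/2)$. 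This $\tan\theta$ estimate is the source of the $\beta/(1-\beta^2/2)$ factor in the target bound. Using that $\vec \Sigma^{1/2} = \vec I - (1-\sigma)\vec w\vec w^T$ (which squares to the given $\vec \Sigma$), I would compute $\vec \Sigma^{1/2}\vec v^\ast = \sigma\cos\theta\,\vec w + \sin\theta\,\vec u$ and $\|\vec \Sigma^{1/2}\vec v^\ast\|_2 = \sqrt{\sigma^2\cos^2\theta + \sin^2\theta} \leq \sigma + \beta$, which accounts for the outer $(\sigma+\beta)$ factor.

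Second, I would write $\vec v = \vec v_0 + \vec \xi$, where $\vec v_0 := \vec \Sigma^{1/2}\vec v^\ast/\|\vec \Sigma^{1/2}\vec v^\ast\|_2$ and $\|\vec \xi\|_2 \leq \delta$, and apply $\vec \Sigma^{1/2}$ to both sides: $\vec \Sigma^{1/2}\vec v = \vec \Sigma^{1/2}\vec v_0 + \vec \Sigma^{1/2}\vec \xi$, with $\|\vec \Sigma^{1/2}\vec \xi\|_2 \leq \|\vec \xi\|_2 \leq \delta$ since $\|\vec \Sigma^{1/2}\|_{\mathrm{op}} \leq 1$. A triangle-inequality argument controls $\|\vec \Sigma^{1/2}\vec v\|_2$ in terms of $\|\vec \Sigma^{1/2}\vec v_0\|_2$, and then the standard unit-vector perturbation inequality (if $\vec p$ is a unit vector and $\vec u = A\vec p + \vec q$ with $A > \|\vec q\|_2$, then $\|\vec u/\|\vec u\|_2 - \vec p\|_2 \leq 2\|\vec q\|_2/A$) lets me pass from the raw perturbation to the direction error after renormalization.

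Third, I would assemble the bound. The reference vector $\vec \Sigma^{1/2}\vec v_0/\|\vec \Sigma^{1/2}\vec v_0\|_2$ admits an explicit planar form via the parametrization of Step 1, and its deviation from $\vec v^\ast$ is controlled using the $\tan\theta$ estimate to produce the geometric contribution $\sigma^{-1}\beta/(1-\beta^2/2)$. Meanwhile, the perturbation contribution is amplified by a factor of $1/\|\vec \Sigma^{1/2}\vec v_0\|_2$, which in the regime where the $\vec w$-component of $\vec v^\ast$ dominates is of order $1/\sigma$, giving the additive $+1$ inside the second parenthesized factor once normalization by $\|\vec \Sigma^{1/2}\vec v^\ast\|_2 \leq \sigma+\beta$ is carried out. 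Multiplying through yields the claimed $O(\delta)(\sigma+\beta)\bigl(\sigma^{-1}\beta/(1-\beta^2/2) + 1\bigr)$.

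The main obstacle I anticipate is the interplay between these two effects near the degenerate regime $\sigma \ll \beta$: the adversarial perturbation $\vec \xi$ can concentrate along the $\vec w$-direction where the normalizing constant $\|\vec \Sigma^{1/2}\vec v_0\|_2$ is smallest, and the geometric distortion from $\theta$ acts in a correlated way. Careful bookkeeping in the planar decomposition — in particular, ensuring that the $(1-\beta^2/2)$ in the denominator is tracked precisely and not absorbed into a constant — is essential to obtain the stated form of the bound rather than a loose variant that would hurt the downstream analysis in Section~\ref{sec:put-together}.
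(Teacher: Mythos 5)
There is a genuine gap, and it originates in the direction in which you apply the transformation. You take the displayed statement literally and multiply $\vec v$ by $\vec \Sigma^{1/2}$ a second time; the paper's proof (and the only way the lemma is used, when reverting the localization transformation in the proof of \Cref{thm:testable-learning-agnostic} in Appendix~\ref{app:23}) multiplies by $\vec \Sigma^{-1/2}$ --- the exponent in the displayed conclusion is a typo. Taken literally, the claim you set out to prove is false: with $\delta = 0$ you have $\vec v = \vec \Sigma^{1/2}\vec v^\ast/\snorm{2}{\vec \Sigma^{1/2}\vec v^\ast}$, so $\vec \Sigma^{1/2}\vec v \propto \vec \Sigma \vec v^\ast = \sigma^2 a\, \vec w + b\, \vec y$ (writing $\vec v^\ast = a\vec w + b\vec y$ with $\vec y \perp \vec w$), and for small $\sigma$ and constant $\beta$ its normalization is at distance $\Omega(1)$ from $\vec v^\ast$, while the claimed right-hand side is $0$. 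This defect is visible inside your own Step 3: the ``geometric contribution'' you attribute to the deviation of $\vec \Sigma^{1/2}\vec v_0/\snorm{2}{\vec \Sigma^{1/2}\vec v_0}$ from $\vec v^\ast$ carries no factor of $\delta$, so no assembly of your two terms can produce a bound of the form $O(\delta)(\cdots)$.

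Even after switching to $\vec \Sigma^{-1/2}$, your outline misses the mechanism that produces the factor $\sigma^{-1}\beta/(1-\beta^2/2) + 1$. In that setting $\vec \Sigma^{-1/2}\vec v_0 = \lambda^{-1}\vec v^\ast$ exactly, where $\lambda := \snorm{2}{\vec \Sigma^{1/2}\vec v^\ast} \leq a\sigma + b$, so there is no geometric term at all; the entire error is the amplified noise $\vec \Sigma^{-1/2}\vec \xi$ (of norm at most $\delta/\sigma$), and your triangle-inequality-plus-renormalization scheme yields only $O\lp(\delta(\sigma+\beta)\sigma^{-1}\rp)$. The paper obtains the stated, sharper (for small $\beta$) form by splitting the $\sigma^{-1}$-amplified $\vec w$-component of the noise into a multiple of $\vec v^\ast$ plus a residual along $\vec y$: writing $\vec \Sigma^{-1/2}\vec v = \lp(\lambda^{-1} + c\sigma^{-1}a^{-1}\rp)\vec v^\ast - c\sigma^{-1}a^{-1} b\, \vec y + d\, \vec z$, the part parallel to $\vec v^\ast$ is absorbed into the normalizing scalar, and only the residual, of relative size $b/a \leq \beta/(1-\beta^2/2)$, tilts the direction. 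Your plan contains the right ingredients ($\tan\theta \leq \beta/(1-\beta^2/2)$ and $\snorm{2}{\vec \Sigma^{1/2}\vec v^\ast} \leq \sigma + \beta$) but deploys them on the wrong quantity; without this absorption step you do not recover the stated inequality (the cruder $O(\delta(\sigma+\beta)/\sigma)$ bound happens to suffice for the invocation in Appendix~\ref{app:23}, where $\beta$ is of constant order, but it is not the lemma as stated).
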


After learning the defining vector, we search for the 
threshold of the halfspace in a brute-force manner.
In particular, we construct a list of candidate halfspaces 
with $\poly(1/\eps)$ many possible thresholds, and then 
select the one with the smallest empirical error after 
drawing sufficiently many samples.
The list is guaranteed to contain some halfspace $\tilde h$ 
whose defining vector and threshold are both close to those 
of the optimal halfspace, and such a search procedure based 
on the empirical error is guaranteed to yield some 
halfspace whose overall error is no worse than the best 
among the list (up to a constant factor).
The last remaining piece is a procedure to certify that 
closeness to the optimal halfspace in parameter distance 
implies small $0$-$1$ error.
This is achieved by running a Wedge-Bound algorithm, \new{very similar to the one} from \cite{diakonikolas2023efficient}. 

Though the algorithm is \new{essentially the same} 
to the one in \cite{diakonikolas2023efficient}, 
we need to generalize
its analysis to handle general halfspaces. 
The formal guarantee is specified in the following lemma
(see \Cref{alg:wedge-bound} for the detailed pseudocode 
and Appendix~\ref{app:21} for the proof).
\begin{lemma}[Wedge Bound for General Halfspaces] \label{lem:general-wedge-bound}
Let $D$ be a distribution over $\R^d \times \{\pm 1\}$.
Let $\vec v, \vec v^\ast$ be two unit vectors in $\R^d$ satisfying
$\snorm{2}{ \vec v^\ast - \vec v } < \delta$ and 
$t, t^\ast$ be two positive numbers in $\R^d$
 satisfying $  0< t - t^\ast <\eta$.
Assume that $D$ passes the tests in \Cref{alg:wedge-bound} with tolerance $\eta$ along the direction $\vec v$.
Denote $h(\x) = \sgn( \vec v \cdot \x + t )$ and $h^\ast(\x) = \sgn(\vec v^\ast \cdot \x + t^\ast)$.
Then it holds
$
\Pr_{\x \sim D_\x}[   h(\x) \neq h^\ast(\x)  ]
\leq O( \delta + \eta ).
$
\end{lemma}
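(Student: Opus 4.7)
The overall plan is to decompose the disagreement region $\{h \neq h^\ast\}$ into a parallel-band piece and a shared-threshold wedge piece, each of which is controlled by a natural test in \Cref{alg:wedge-bound}. Concretely, I would use the inclusion
\[
\{\x : h(\x) \neq h^\ast(\x)\} \;\subseteq\; B \cup W,
\]
where $B := \{\x : \vec v \cdot \x \in [-t, -t^\ast]\}$ is the slab between the two parallel hyperplanes $\{\vec v \cdot \x + t = 0\}$ and $\{\vec v \cdot \x + t^\ast = 0\}$, and $W := \{\x : \sgn(\vec v \cdot \x + t^\ast) \neq \sgn(\vec v^\ast \cdot \x + t^\ast)\}$ is the wedge between the two halfspaces sharing threshold $t^\ast$ but with distinct normals $\vec v$ and $\vec v^\ast$. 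The inclusion is immediate: if $h(\x) \neq h^\ast(\x)$, then either $\sgn(\vec v \cdot \x + t) \neq \sgn(\vec v \cdot \x + t^\ast)$ (so $\x \in B$), or $\sgn(\vec v \cdot \x + t^\ast) \neq \sgn(\vec v^\ast \cdot \x + t^\ast)$ (so $\x \in W$).

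Bounding $\Pr_{\x\sim D_\x}[B]$ is the easy part: $B$ is of the form $\{\vec v \cdot \x \in I\}$ for an interval of length $t - t^\ast < \eta$. Since \Cref{alg:wedge-bound} includes a CDF-closeness test for the one-dimensional projection $\vec v \cdot \x$ against $\normal(0,1)$ at tolerance $O(\eta)$, and the Gaussian mass of any band of length $\eta$ is itself $O(\eta)$, I immediately obtain $\Pr_{\x\sim D_\x}[B] = O(\eta)$.

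For the wedge $W$, the key observation is that whenever $\vec v \cdot \x + t^\ast$ and $\vec v^\ast \cdot \x + t^\ast$ have opposite signs, one has $|\vec v \cdot \x + t^\ast| \leq |(\vec v^\ast - \vec v) \cdot \x|$. Decomposing $\vec v^\ast = \cos\theta\, \vec v + \sin\theta\, \vec u$ for a unit $\vec u \perp \vec v$ with $\theta = O(\delta)$, the right-hand side is at most $O(\delta)\,|\vec u \cdot \x| + O(\delta^2)\,|\vec v \cdot \x|$. Splitting on whether $|\vec u \cdot \x|$ exceeds a threshold $T$, I would bound $\Pr_{\x\sim D_\x}[W]$ by the mass of a thin slab in the $\vec v$-direction of width $O(\delta T)$ centered at $-t^\ast$ (which is $O(\delta T)$ by the same CDF test) plus a tail term $\Pr_{\x\sim D_\x}[|\vec u \cdot \x| > T]$, controlled by a moment test of \Cref{alg:wedge-bound} along directions orthogonal to $\vec v$. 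An appropriate choice of $T$, combined with a sufficiently strong moment certificate, then yields $\Pr_{\x\sim D_\x}[W] = O(\delta)$, which together with the band bound gives the claim.

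The main obstacle is lifting the homogeneous wedge bound of \cite{diakonikolas2023efficient} to the threshold-shifted setting. In the homogeneous case the wedge is centered at the origin and a clean $2$nd-moment test suffices; here the wedge $W$ has its axis sitting at distance $\approx t^\ast$ along $\vec v$, so the tests must certify that the mass of such \emph{off-origin} wedges remains $O(\delta)$, uniformly in $t^\ast$. For the Gaussian this is true because the wedge has linear-in-$\delta$ angular measure and the Gaussian density is uniformly bounded, but porting the polynomial-certificate argument to $D$ requires carefully combining the CDF test along $\vec v$ with the perpendicular moment tests without picking up spurious $\log(1/\delta)$ factors that would arise from a naive subgaussian tail bound. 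This step is where I would have to mimic and slightly strengthen the certificate construction of \cite{diakonikolas2023efficient}.
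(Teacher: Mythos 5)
Your decomposition $\{h\neq h^\ast\}\subseteq B\cup W$ into a parallel band $B=\{\vec v\cdot\x\in[-t,-t^\ast]\}$ and a shared-threshold wedge $W=\{\sgn(\vec v\cdot\x+t^\ast)\neq\sgn(\vec v^\ast\cdot\x+t^\ast)\}$ is correct, and it is an equivalent repackaging of the paper's case split by the sign pair $(h,h^\ast)$, where each case is further split into a thin band and a half-wedge handled by \Cref{clm:wedge}. Your CDF-test bound for $B$ also matches what the paper does.

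The genuine gap is in your treatment of $W$: a single-threshold split on $|\vec u\cdot\x|>T$ cannot produce the target $O(\delta)$ bound. With the conditional second-moment certificate that \Cref{alg:wedge-bound} actually provides (Line \ref{line:moment-check}), the tradeoff between the slab of width $O(\delta T)$ and the Chebyshev tail $O(1/T^2)$ optimizes to $O(\delta^{2/3})$; a hypothetical $k$-th-moment certificate gives $O(\delta^{k/(k+1)})$; even a full subgaussian certificate only gives $O(\delta\sqrt{\log(1/\delta)})$. No single cut recovers $O(\delta)$, so the loss is polynomial, not merely the logarithmic one you anticipate, and cannot be fixed by ``strengthening the certificate.'' The paper's \Cref{clm:wedge} instead slices the half-wedge into arithmetically spaced bands $\{i\delta < a\,\vec v\cdot\x + t < (i+1)\delta\}$ along $\vec v$, bounds each band's mass by $O(\delta+\eta)$ via the CDF test, and applies Chebyshev to the perpendicular coordinate using the \emph{conditional} (per-band) covariance bound to pick up a $1/i^2$ factor; the series $\sum_i(\delta+\eta)/i^2$ then converges to $O(\delta+\eta)$ with no extra loss. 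This layer-cake summation, not a stronger moment test, is the missing ingredient in your plan.
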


We remark that
though the lemma above suffices for the purpose of learning general halfspaces up to error $\wt O\lp( \sqrt{\opt} \rp)$, 
it is not as efficient as its counterpart for homogeneous halfspaces. 
In particular, for a general halfspace with threshold $t$, if the distribution is indeed Gaussian one should expect some $\Phi(t)$ dependence in the equation, but there is no such dependency in the lemma above.
\new{This is one of the main bottlenecks of our approach 
in achieving error $O(\opt)$ instead of $\wt O(\sqrt{\opt})$.}


\subsection{Finding a Good Localization Center}
\label{sec:find-center}
In this section, we present an efficient algorithm for finding a good 
localization center given that the distribution satisfies certain 
certifiable properties. Our main \new{algorithmic ingredient} returns 
a \new{small} list of points, 
at least one of which is a good localization center.


\begin{proposition} \label{lem:certify-localization-center}
Let $\eps \in (0, 1)$, $D$ be a distribution over $\R^d \times \{\pm 1\}$, and $h(\x) = \sgn(\vec v^\ast\cdot \x + t^\ast)$ be a halfspace with 0-1 error at most $\opt$ with respect to $D$. 
Define the parameter $B \eqdef \min \big( \Pr_{(\x, y) \sim D}[y = +1], \Pr_{(\x, y) \sim D}[y = -1] \big).$
Then 
\Cref{alg1:localization-center} draws $\poly(d/\eps)$ i.i.d.\ samples from $D$, and either reports that the distribution is not Gaussian or returns a list $L$ containing at most $2\log(1/B)/\eps^2$ many points.
Moreover, the following hold with high constant probability:
\begin{enumerate}[leftmargin=*]
    \item If the algorithm reports anything, 
    the report is correct.
    \item If we have that $ \max(\sqrt{\eps}, \sqrt{\opt} \log^2(1/\opt)) < B$, then there exists $\vec w\in L$ such that $\vec w$ is 
    an~$( \eps^2,   C \; B / \log(1/B) )$-\new{good} localization center, where $C>0$ is a universal constant.
\end{enumerate}
\end{proposition}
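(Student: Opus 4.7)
The plan is to design \Cref{alg1:localization-center} around two certifiable tests plus a discretization step, then argue completeness and soundness separately. Without loss of generality, assume $\Pr_{(\x,y)\sim D}[y=1] \leq 1/2$, so that $B = \Pr[y=1]$ and the ``tail'' is the positive side; the algorithm computes the empirical tail mean $\widehat{\boldsymbol\mu}_+ \approx \boldsymbol\mu_+ := \E_{(\x,y)\sim D}[\x \mid y=1]$ from $\poly(d/\eps)$ samples. The algorithm then (i) checks that the empirical covariance of $D_\x$ satisfies $\snorm{2}{\widehat{\vec\Sigma}} \leq 2$ (a ``bounded covariance'' test), and (ii) checks that the empirical one-dimensional CDF of $\x\cdot(\boldsymbol\mu_+/\|\boldsymbol\mu_+\|_2)$ is $\eps^3$-close in Kolmogorov distance to $\Phi$ (a ``Gaussian-CDF'' test). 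If either test fails, it reports non-Gaussianity; otherwise it outputs the list $L = \{\lambda\,\boldsymbol\mu_+/\snorm{2}{\boldsymbol\mu_+} : \lambda \in \eps^2\cdot\{0,1,\dots,\lceil \sqrt{2\log(1/B)}/\eps^2 \rceil\}\}$. By construction $|L| \leq 2\log(1/B)/\eps^2$.

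Completeness is routine: under $D_\x = \normal(\vec0,\vec I)$, empirical covariance concentrates to $\vec I$, and the projection along any data-dependent unit direction is $\normal(0,1)$, so with $\poly(d/\eps)$ samples both tests pass with high constant probability by standard VC/uniform-convergence bounds for half-spaces (for the CDF test). Thus (1) holds.

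For soundness, condition on both tests passing. First, I bound $\|\boldsymbol\mu_+ - \bar{\boldsymbol\mu}_+\|_2$ where $\bar{\boldsymbol\mu}_+ := \E[\x\mid h(\x)=1]$ is the ``clean'' tail mean. Following the sketch in \Cref{sec:techniques}, for any unit $\vec u$ a Cauchy--Schwarz argument using $\snorm{2}{\widehat{\vec\Sigma}}\leq 2$ (hence $\E[(\vec u\cdot\x)^2]\leq 2$) gives $|\E[\vec u\cdot\x\,(\Ind\{h(\x)=1\}-\Ind\{y=1\})]|\leq \sqrt{2\opt}$, so $\|B\,\boldsymbol\mu_+ - \Pr[h=1]\,\bar{\boldsymbol\mu}_+\|_2 \leq \sqrt{2\opt}$, and since $|B-\Pr[h=1]|\leq \opt$ together with $B\gtrsim \sqrt{\opt}\log^2(1/\opt)$, a short computation yields $\|\boldsymbol\mu_+-\bar{\boldsymbol\mu}_+\|_2 \leq O(\sqrt{\opt}/B) \leq 1/\poly(\log(1/\opt))$. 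Second, I bound $\snorm{2}{\boldsymbol\mu_+}$ from above. Let $M = \Pr_{\x\sim D_\x}[\boldsymbol\mu_+\cdot\x > \snorm{2}{\boldsymbol\mu_+}^2]$; the Gaussian-CDF test forces $M \geq \Phi(\snorm{2}{\boldsymbol\mu_+}) - \eps^3$. On the other hand, using $\bar{\boldsymbol\mu}_+\cdot\boldsymbol\mu_+ \geq \snorm{2}{\boldsymbol\mu_+}^2 - O(\sqrt{\opt}/B)$ and a Markov/Chebyshev argument applied to the tail distribution (whose projected variance is at most $2$ by the covariance test), at least a constant fraction of the conditional mass $\{y=1\}$ lies in the halfspace $\{\boldsymbol\mu_+\cdot\x > \snorm{2}{\boldsymbol\mu_+}^2\}$, giving $M \gtrsim B$. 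Inverting $\Phi$ yields $\snorm{2}{\boldsymbol\mu_+} \leq O(\sqrt{\log(1/B)})$.

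Finally, I produce the good localization center in $L$. The clean mean satisfies $\vec v^\ast\cdot\bar{\boldsymbol\mu}_+ \geq |t^\ast|$ (if $t^\ast<0$; the $t^\ast>0$ case is symmetric after sign-flipping), so the ray $\R_{\geq 0}\cdot\bar{\boldsymbol\mu}_+$ hits the separating hyperplane of $h$ at some point $\vec w^\ast = \lambda^\ast \bar{\boldsymbol\mu}_+/\snorm{2}{\bar{\boldsymbol\mu}_+}$ with $\lambda^\ast \leq \snorm{2}{\bar{\boldsymbol\mu}_+} \leq \snorm{2}{\boldsymbol\mu_+} + 1 \leq O(\sqrt{\log(1/B)})$. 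Replacing $\bar{\boldsymbol\mu}_+$ by $\boldsymbol\mu_+$ moves this intersection by at most $O(\|\boldsymbol\mu_+-\bar{\boldsymbol\mu}_+\|_2\cdot\lambda^\ast/\snorm{2}{\boldsymbol\mu_+}) \ll \eps^2$, so the ray $\R_{\geq 0}\cdot(\boldsymbol\mu_+/\snorm{2}{\boldsymbol\mu_+})$ passes within distance $\eps^2/2$ of the hyperplane. The $\eps^2$-discretization $L$ therefore contains some $\vec w$ with (i) distance to the hyperplane at most $\eps^2$, and (ii) $\snorm{2}{\vec w} \leq O(\sqrt{\log(1/B)})$, giving $\Phi(\snorm{2}{\vec w}) \gtrsim B/\log(1/B)$ by the standard Mills-ratio inequality $\Phi(t)\gtrsim e^{-t^2/2}/t$. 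This establishes the $(\eps^2, CB/\log(1/B))$-good-localization property.

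The main obstacle will be the clean lower bound on $M$ in terms of $B$: naively Chebyshev only gives that a constant fraction of the \emph{tail mass} is on the correct side of $\{\boldsymbol\mu_+\cdot\x > \snorm{2}{\boldsymbol\mu_+}^2\}$, and I have to combine this with the CDF test carefully so that the implicit Gaussian comparison is performed on a side of $\boldsymbol\mu_+$ where tail points actually concentrate; this is where the two tests (covariance and CDF) must be simultaneously leveraged. Once $M\gtrsim B$ is in hand, the remaining geometry is direct.
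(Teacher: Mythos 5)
Your high-level plan matches the paper's: compute the tail mean $\boldsymbol\mu_+$, certify bounded second moment and closeness of the projected one-dimensional CDF to $\Phi$, bound the noisy/clean mean gap via Cauchy--Schwarz, upper-bound $\snorm{2}{\boldsymbol\mu_+}$ so that the intersection with the hyperplane lies in the covered segment, and discretize at scale $\eps^2$. But there are two genuine gaps, and they correspond precisely to the two places where the paper's argument is delicate.

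\textbf{Gap 1: bounded covariance does not give a lower bound on the mass beyond the mean.} Your bound on $\snorm{2}{\boldsymbol\mu_+}$ rests on the claim that a ``constant fraction of the conditional mass $\{y=1\}$ lies in the halfspace $\{\boldsymbol\mu_+\cdot\x > \snorm{2}{\boldsymbol\mu_+}^2\}$,'' attributed to Markov/Chebyshev plus the covariance test. This is false in general: a bounded second moment controls how much mass can sit \emph{far above} the mean, not how little can sit \emph{above} it. One can put $1-p$ of the conditional mass just below the mean and a thin spike of mass $p$ far above, keeping the conditional second moment at $O(1/B)$ while making $p$ arbitrarily small; Chebyshev is an upper tail bound, not a lower one. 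To exclude exactly this pathology, \Cref{alg1:localization-center} includes the \emph{mean-stability} test on Line~\ref{line:stability-check} (removing any $\eps$-fraction of the points changes the projected mean by at most $O(\eps\sqrt{\log(1/\eps)})$), which your algorithm omits, and \Cref{lem:mean-distance-bound} then runs a two-case analysis that combines the stability test and the CDF test. The conclusion the paper actually obtains is the weaker $\Phi(\snorm{2}{\boldsymbol\mu_+}) \geq \Omega(B/\log(1/B))$, not $\Omega(B)$. Without a replacement for the stability test, your $M \gtrsim B$ step does not go through, and neither does the resulting upper bound on $\snorm{2}{\boldsymbol\mu_+}$.

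\textbf{Gap 2: the threshold regime $\eps^2 < |t^\ast| \lesssim 1$ is not handled.} Your list always contains the origin, which is an $(\eps^2,1/2)$-good center only when $|t^\ast|\le\eps^2$. For moderate thresholds, your argument relies on the perturbation of the intersection point of the ray $\R_{\geq 0}\cdot\boldsymbol\mu_+$ with the hyperplane being $\ll\eps^2$, but this perturbation is controlled by $\delta\cdot\snorm{2}{\boldsymbol\mu_+}/(\vec v^\ast\cdot\boldsymbol\mu_+)$ with $\delta = \snorm{2}{\boldsymbol\mu_+-\bar{\boldsymbol\mu}_+}$, and the denominator $\vec v^\ast\cdot\boldsymbol\mu_+ \geq |t^\ast| - \delta$ is only bounded away from zero when $|t^\ast|$ is bounded away from zero; the same issue is why \Cref{lem:mu-v-distance} and \Cref{lem:mean-distance-bound} both assume $|t^\ast| > 10$. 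The paper handles small thresholds with a separate subroutine (\Cref{lem:small-offset-center-search}) that estimates the Chow vector under a certified low-degree moment-matching test; here the testable version works because $G(t^\ast)=\Theta(1)$. Your proposal contains neither the case split nor the moment-matching test, so the statement is not established across the full range of $t^\ast$. A smaller, related imprecision: replacing $\snorm{2}{\boldsymbol\mu_+-\bar{\boldsymbol\mu}_+}$ by a crude $+1$ when bounding $\lambda^\ast$ loses the needed constant in the exponent of $\Phi$; the paper is careful to keep this additive slack down to $O(1/\sqrt{\log(1/B)})$ (via \Cref{lem:mu-v-distance} and \Cref{clm:gaussian-cdf-bound}) precisely so that $\Phi$ changes by at most a constant factor.
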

Note that in \Cref{lem:certify-localization-center} we assume that the mass of the points with the ``minority'' label is at least $B \gtrsim \max\lp( \sqrt{\opt} \log^2(1/\opt), \sqrt{\eps} \rp)$ in the completeness case, i.e., the case when the algorithm does not reject. 
\snew{This would be an issue (and thereby a bottleneck) if the goal were to achieve a learning error of $O(\opt)$.
However, we claim that this is a minor assumption when our target error is $\wt O(\sqrt{\opt})$.
Indeed, if the assumption on $B$ does not hold, then there exists a constant halfspace 
($h(\x) \equiv 1$ or $h(\x) \equiv -1$) 
so that $\pr[h(\x)\neq y]\leq B=\wt O(\sqrt\opt)$. Since our algorithm in the end returns the halfspace with the minimum error over testing samples among all the candidate halfspaces found, we can easily meet the error guarantee if we include the constant halfspaces in our hypothesis list.}

As discussed in \Cref{sec:techniques}, when the halfspace has a non-trivial threshold,
in order to obtain directional information about its defining vector, we need to focus on the \emph{tail points}.
\begin{definition}[Tail point] \label{def:tail-point}
Let $h(\x) = \sgn( \vec v^\ast \cdot \vec x + t^\ast )$ be a halfspace.
If $t^\ast>0$ (resp. $t^\ast<0$) we say that all the points with label $-1$ (resp. $+1$) after corruption are the tail points. 
\end{definition}
Note that we can assume without loss of generality 
that the identity of the tail points is known to us. This is because we can run the same procedure twice --- the first time treating the points with label $+1$ as the tail points and the second time treating the points with label $-1$ as the tail points, and in the end combine the candidate localization centers obtained. 

The algorithm starts by computing the mean vector of the tail points, which we denote by $\boldsymbol \mu$. 
Then, the algorithm (1) tests that the first and second moments of the empirical distribution
are close to those of $\normal(\vec 0, \vec I)$, and (2) tests that the marginal distribution projected along the direction of the mean vector $\boldsymbol{\mu}$ is sufficiently close to the Gaussian distribution. 
Conditioned on the event that the tests pass, the algorithm outputs a list of points that cover the segment from $\vec 0$ to $O(\sqrt{\log(1/B)}) \; \boldsymbol \mu$.
The detailed pseudocode is given in Algorithm~\ref{alg1:localization-center}.


We now proceed to show the list of points returned by the algorithm with high constant probability contains a good localization center.
Specifically, we argue that the intersection between the line along the mean vector $\boldsymbol \mu$ and the separating hyperplane of the optimal halfspace $h$ \footnote{We say that $\x$ is the point of intersection of the separating hyperplane of $h$ and the line along $\vec u$ if $\vec v \cdot \x + t = 0$ and
$\x = \lambda \vec u$ for some $\lambda \in \R$.
When the halfspace has non-zero threshold, there will be exactly $1$ intersection point when the line is not orthogonal to the defining vector of the halfspace, and $0$ intersection point otherwise.
}
is a
$\lp(0, \Omega(B / \log(1/B)) \rp)$-good localization center, where $B$ is the mass of the points with the minority label. Since the list is guaranteed to contain some point close to the intersection, it follows that it contains at least one $(\eps, \Omega(B/\log(1/B)))$-good localization center.
To show that the intersection point is a good localization 
center, we first argue it cannot be much further from the 
origin than the mean vector $\boldsymbol \mu$.
Second, we argue that the mean vector $\boldsymbol \mu$ itself cannot be too far from the origin.
We now give the formal statement of the first property.
\begin{lemma}[Distance Between Intersection And Mean]
\label{lem:mu-v-distance}
Let $B \in (0, 1)$, $D$ be a distribution over $\R^d \times \{\pm 1\}$, and $h(\x) = \sgn(\vec v^\ast \cdot\x + t^\ast)$ be a halfspace that achieves the optimal error $\opt \in (0, 1)$ with respect to $D$.
Moreover, assume that $|t^\ast| > 10$
and $B \gtrsim \sqrt{\opt} \log^2(1/\opt) + \epsilon$.
Let $S$ be a set of $N=\poly(d/\epsilon)$ \iid samples drawn from $D$, $\boldsymbol \mu$ be the mean vector of the tail points among the samples with respect to $h$ (see \Cref{def:tail-point}),
and $\vec v$ be the point of intersection between the separating hyperplane of $h$ and the line along $\boldsymbol{\mu}$.
Assume that the empirical distribution over $S$ has its covariance matrix bounded from above by $2 \vec I$, and the fraction of tail points among the samples is at least $B$. 
Then with high probability it holds that
$
\snorm{2}{\vec v} \leq \snorm{2}{\boldsymbol{\mu}} + O( 1 / \sqrt{\log(1/B)} ).
$
\end{lemma}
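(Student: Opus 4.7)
My plan is to compare the empirical tail mean $\boldsymbol{\mu}$ with its noise-free counterpart $\boldsymbol{\mu}^{(c)}$, defined as the empirical mean over samples satisfying $h(\x)=-1$, and to show that the desired inequality transfers from $\boldsymbol{\mu}^{(c)}$ (where it is trivial) to $\boldsymbol{\mu}$. Assume WLOG $t^\ast>0$, so the tail is $\{\x:y=-1\}$. Each $\x$ with $h(\x)=-1$ satisfies $\vec v^\ast\cdot\x\leq -t^\ast$, so averaging yields $\mu_1^{(c)}:=\vec v^\ast\cdot\boldsymbol{\mu}^{(c)}\leq -t^\ast$. Writing the ``clean'' intersection as $\vec v^{(c)}=-t^\ast\boldsymbol{\mu}^{(c)}/\mu_1^{(c)}$, the scalar multiplier has magnitude at most $1$, so $\|\vec v^{(c)}\|_2\leq\|\boldsymbol{\mu}^{(c)}\|_2$ with no slack required.

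Next I quantify how far $\mu_1:=\vec v^\ast\cdot\boldsymbol{\mu}$ can be pushed from $\mu_1^{(c)}$ by the $\opt$-fraction of label corruptions. Applying Cauchy-Schwarz (as recalled in \Cref{sec:techniques}) on the symmetric difference $T\triangle T^{(c)}$ (of size at most $\opt N$) against $\vec v^\ast\cdot\x$, combined with the covariance bound $\tfrac{1}{N}\sum_{\x\in S}(\vec v^\ast\cdot\x)^2\leq 2$, gives $|\sum_{\x\in T\triangle T^{(c)}}\vec v^\ast\cdot\x|\leq N\sqrt{4\opt}$. Dividing by $|T|\geq BN$ and tracking $|T^{(c)}|/|T|=1+O(\opt/B)$ yields $|\mu_1-\mu_1^{(c)}|\lesssim t^\ast\opt/B+\sqrt{\opt}/B$. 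Under the hypothesis $B\gtrsim\sqrt{\opt}\log^2(1/\opt)$ this collapses to $O(1/\log^2(1/\opt))$; combined with $|\mu_1^{(c)}|\geq t^\ast>10$, it forces $|\mu_1|\geq t^\ast/2$ and $t^\ast-|\mu_1|\leq O(1/\log^2(1/\opt))$ for $\opt$ sufficiently small.

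The conclusion then follows from the identity $\|\vec v\|_2=t^\ast\|\boldsymbol{\mu}\|_2/|\mu_1|$. When $|\mu_1|\geq t^\ast$ the bound is immediate; otherwise $\|\vec v\|_2-\|\boldsymbol{\mu}\|_2=\|\boldsymbol{\mu}\|_2(t^\ast-|\mu_1|)/|\mu_1|\lesssim\|\boldsymbol{\mu}\|_2/(t^\ast\log^2(1/\opt))$. Since $B\geq\opt$ implies $\log(1/\opt)\geq\log(1/B)$, the $1/\log^2(1/\opt)$ factor is dominated by $1/\sqrt{\log(1/B)}$, so it remains to control the $\|\boldsymbol{\mu}\|_2/t^\ast$ prefactor. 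The main obstacle I anticipate is precisely this last step: a naive covariance-only bound $\|\boldsymbol{\mu}\|_2\lesssim\sqrt{1/B}$ is too loose, so I expect the careful argument either to invoke a companion bound on $\|\boldsymbol{\mu}\|_2$ (for instance through the CDF-type regularity of the surrounding algorithm, as anticipated by \Cref{lem:mean-distance-bound}), or to decompose $\boldsymbol{\mu}=\mu_1\vec v^\ast+\boldsymbol{\mu}_\perp$ and argue that the orthogonal part cannot amplify the gap $\|\vec v\|_2-\|\boldsymbol{\mu}\|_2$ beyond the target.
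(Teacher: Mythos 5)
Your plan follows essentially the same route as the paper's proof: compare the corrupted tail mean against its noise-free counterpart, control their difference along $\vec v^\ast$ via Cauchy--Schwarz against the certified covariance bound (while tracking the renormalization factor $1+O(\opt/B)$, which the paper handles via its ``mis-normalized'' mean $\boldsymbol{\hat\mu}$), and then convert the bound on $\vec v^\ast\cdot(\vec v-\boldsymbol{\mu})$ into a bound on $\|\vec v\|_2-\|\boldsymbol{\mu}\|_2$ through the scalar identity $\vec v=\lambda\boldsymbol{\mu}$ with $\lambda=|t^\ast|/|\vec v^\ast\cdot\boldsymbol{\mu}|$, using $|t^\ast|>10$. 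Your anticipated obstacle is exactly the one the paper confronts, and the paper resolves it exactly as you guess: it invokes \Cref{lem:mean-distance-bound} (the CDF-closeness certification) to get $\|\boldsymbol{\mu}\|_2\leq O(\sqrt{\log(1/B)})$, which together with the relative bound $\|\vec v\|_2\leq\|\boldsymbol{\mu}\|_2\bigl(1+O(1/\log(1/B))\bigr)$ yields the stated $O(1/\sqrt{\log(1/B)})$ additive slack; the naive $\sqrt{1/B}$ bound would indeed be too loose. The one loose end in your write-up is the claimed $t^\ast\opt/B$ contribution from the renormalization term, which presumes $|\mu_1^{(c)}|=O(t^\ast)$ --- not directly available; but the correct covariance-only bound $|\mu_1^{(c)}|\lesssim\sqrt{1/B}$ gives a term $O(\opt/B^{3/2})$, which is still negligible under $B\gtrsim\sqrt{\opt}\log^2(1/\opt)$, so nothing breaks.
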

\snew{We now give some high-level ideas of the proof. 
When there are no corruptions, 
we have that all the tail points are on a different 
side of the \new{hyperplane} than the origin.
As illustrated in \Cref{fig:mu-def}, it is easy to see that the line from the origin along $\boldsymbol{\mu}$ must first intersect with the separating hyperplane of the halfspace, and then pass through the mean vector. This immediately gives us $\snorm{2}{\vec v} \leq \snorm{2}{\boldsymbol{\mu}}$, where $\vec v$ is the intersection point.
To deal with the noise, we will use the fact that \Cref{alg1:localization-center} certifies that (i) the mass of the outliers is only a small fraction of the mass of the tail points, and (ii) the sample covariance matrix has 
spectral norm bounded from above by some constant.
The above turns out to be sufficient to certify that the outliers cannot move the mean by more than $O(1 / \log(1/B))$ in $\ell_2$ distance.
\Cref{lem:mu-v-distance} then follows via a careful geometric argument. The detailed proof can be found in Appendix~\ref{app:22}.}

Next we argue that the distance from the mean vector $\boldsymbol\mu$ of the tail point to the origin can be bounded from above such that
$ \Phi(\snorm{2}{\boldsymbol{\mu}}) \geq \Omega( B / \log(1/B) ) $
conditioned on that the tests in \Cref{alg1:localization-center} regarding the empirical distribution projected along the direction of $\boldsymbol\mu$ pass. 
Intuitively, this is due to the fact that a decent fraction 
of the tail points have to lie on the further side of $\boldsymbol\mu$, i.e., the set $\{\vec x: \vec x \cdot \boldsymbol\mu \geq \snorm{2}{\boldsymbol\mu}^2\}$.
\snew{If this were not the case, in order to balance the contributions from the points on the other side of the mean, the tail points would have to locate on the extreme end of the Gaussian tail, which would result in a violation to the CDF test along the direction of $\boldsymbol\mu$.
The formal proof can be found in Appendix~\ref{app:22}.
}
\begin{lemma}
\label{lem:mean-distance-bound}
Let $D$ be a distribution over $\R^d \times \{\pm 1\}$, and $h(\x) = \sgn(\vec v^\ast \cdot\x + t^\ast)$ be a halfspace that achieves the optimal error $\opt$ with respect to $D$, where $|t^\ast| > 10$.
Let $S$ be a set of $N=\poly(d)/\eps^2$ \iid samples from $D$, $\boldsymbol\mu$ be \new{the} mean vector of the tail points among the samples with respect to $h$ (see \Cref{def:tail-point}).
Suppose Lines~\ref{line:cdf-check}, and~\ref{line:stability-check} from \Cref{alg1:localization-center} pass.
Then it holds that
$
\Phi( \snorm{2}{\boldsymbol{\mu}} ) \geq \Omega(B / \log(1/B)).
$
\end{lemma}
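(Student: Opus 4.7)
The plan is to use the two empirical properties certified by \Cref{alg1:localization-center} --- the CDF closeness of the projection $Y := \boldsymbol\mu \cdot \x / m$ (with $m := \|\boldsymbol\mu\|_2$) to the standard Gaussian, and the bound $\E[Y^2]\le 2$ from the covariance check --- to upper bound $m$ sharply, and then apply a Gaussian tail lower bound to convert this into a lower bound on $\Phi(m)$. Note that since $\boldsymbol\mu$ is the empirical mean of the tail points $T$, we have $\E[Y\mid \x \in T] = m$.

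First I would write the defining identity in projected form: $m\cdot \Pr[T] = \E[Y\,\Ind_T]$. Splitting at a cutoff $A>0$,
\[
m\cdot\Pr[T] = \E[Y\,\Ind_T\,\Ind\{Y<A\}] + \E[Y\,\Ind_T\,\Ind\{Y\ge A\}] \le A\cdot\Pr[T] + \E[Y\,\Ind\{Y\ge A\}],
\]
so $m \le A + \E[Y\,\Ind\{Y\ge A\}]/\Pr[T]$. The next step is to show $\E[Y\,\Ind\{Y\ge A\}]\le (1+o(1))G(A)$, matching the exact Gaussian identity $\int_A^\infty t\,G(t)\,dt = G(A)$. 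To do this, write $\E[Y\,\Ind\{Y\ge A\}] = A\,\Pr[Y\ge A] + \int_A^\infty \Pr[Y\ge t]\,dt$, and bound $\Pr[Y\ge t]$ via the CDF test ($\Pr[Y\ge t]\le \Phi(t)+\eps$) for $t$ in a moderate window, and by the Chebyshev bound $\Pr[Y\ge t]\le 2/t^2$ (from $\E[Y^2]\le 2$) outside that window so the non-integrable $\eps$-error is truncated. Since $\Pr[T] \ge B$, this yields
\[
m \le A + \frac{(1+o(1))\,G(A)}{B}.
\]

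Now optimize in $A$. The minimum occurs at $A^\ast$ with $A^\ast\,G(A^\ast)\asymp B$, which gives $A^\ast = \sqrt{2\log(1/B)} + O(\log\log(1/B)/\sqrt{\log(1/B)})$ and $G(A^\ast)/B = 1/A^\ast$, so
\[
m \le A^\ast + 1/A^\ast \qquad\text{and thus}\qquad m^2 \le 2\log(1/B) + \log\log(1/B) + O(1).
\]
Plugging into the standard lower bound $\Phi(m)\ge (1 - 1/m^2)\,G(m)/m$, and using $e^{-m^2/2}\ge \Omega(B/\sqrt{\log(1/B)})$ together with $1/m \ge \Omega(1/\sqrt{\log(1/B)})$, gives $\Phi(m)\ge \Omega(B/\log(1/B))$, as claimed.

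The main obstacle is the tail-integral step: Kolmogorov closeness alone does not control $\int_A^\infty \Pr[Y\ge t]\,dt$ because the additive $\eps$-error is not integrable in $t$. The fix is to combine the CDF test with the Chebyshev-type pointwise bound coming from the covariance check (or equivalently, to rely on the CDF test being performed up to a cutoff $T_{\max}=O(\sqrt{\log(1/B)})$ and to use the moment bound past that). Equally delicate is the sharpness of the bound on $m$: any additive slack larger than $O(\log\log(1/B)/\sqrt{\log(1/B)})$ in $m$ would translate, after squaring, into an $e^{\omega(\log\log(1/B))}$ loss in $\Phi(m)$ and destroy the claimed $\log(1/B)$ denominator, so the optimization of $A$ has to be carried out at $\log\log$ precision.
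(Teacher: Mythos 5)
Your approach is genuinely different from the paper's. The paper divides the projected tail points into a left set $P_L$ and a right set $P_R$ around $u := \|\boldsymbol\mu\|_2$ and runs a two-case analysis: either more than half of $P$ lies above $u - 1/\sqrt{\log(1/\tilde B)}$, in which case the CDF test and a Gaussian-tail perturbation bound directly give $\Phi(u)\ge\Omega(\tilde B)$, or more than half lies below, in which case a balance-of-moments argument combined with the stability test forces $\Pr[Y>u]\gtrsim \tilde B/\log(1/\tilde B)$, and the CDF test finishes. Your single unified inequality $m \le A + \E[Y\Ind\{Y\ge A\}]/\Pr[T]$ followed by an optimization over $A$ is a cleaner route to the same end, and is worth pursuing.

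However, there is a genuine gap in the tail-control step, and the diagnostic you give of the ``main obstacle'' does not resolve it. The Chebyshev bound $\Pr[Y\ge t]\le 2/t^2$ from the covariance check only decays polynomially, so the truncated tail $\int_{T_{\max}}^\infty 2/t^2\,dt = 2/T_{\max}$ at the cutoff $T_{\max}=O(\sqrt{\log(1/B)})$ you propose is $\Omega(1/\sqrt{\log(1/B)})$, which dwarfs the target value $G(A^\ast)\asymp B/\sqrt{\log(1/B)}$; after dividing by $\Pr[T]\approx B$ this contributes $\Omega(1/(B\sqrt{\log(1/B)}))$ to the bound on $m$, which is not even $O(1)$. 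Pushing $T_{\max}$ out to $\Theta(\eps^{-1/2})$ balances the two error terms at $\Theta(\sqrt{\eps})$, but under the hypothesis $\eps<B^2$ of \Cref{lem:certify-localization-center} this still gives $\Theta(\sqrt\eps/B)=\Theta(1)$ additive slack in $m$ --- precisely the $\omega(1/A^\ast)$ slack you correctly observe would destroy the bound. The tool you actually need is Line~\ref{line:stability-check}, which the lemma's hypothesis explicitly grants and which your plan never invokes: taking $T_{\max}$ to be the $(1-\eps)$-quantile of $H$, the stability test (together with $|\E_H[Y]|\le\eps$ from Line~\ref{line:mean-verify}) gives $\E[Y\Ind\{Y\ge T_{\max}\}]\le O(\eps\sqrt{\log(1/\eps)})=O(B^2\sqrt{\log(1/B)})$, which divided by $\Pr[T]\approx B$ is $O(B\sqrt{\log(1/B)})=o(1/A^\ast)$. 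Substituting this for the Chebyshev step, and noting the CDF-test error on the window $[A,T_{\max}]$ is $\eps T_{\max}=O(\eps\sqrt{\log(1/\eps)})$ and hence of the same negligible order, recovers your $(1+o(1))G(A)$ estimate and makes the optimization valid. As a minor point, your proposal also uses Line~\ref{line:cov-verify}, which is not among the lines the lemma assumes to have passed, whereas the stability test it does assume is not used.
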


One may have noticed that \Cref{lem:mu-v-distance} and \Cref{lem:mean-distance-bound} require $t^\ast > 10$.
When the assumption does not hold, we claim that there is a 
straightforward way to find a good localization center. In 
particular, we can now just testably learn the Chow 
parameters of the halfspace.
Since now the Chow parameters are of size 
$G(t) = \Theta(1)$, it suffices \new{that} 
we can testably learn it 
up to constant accuracy, which can be readily achieved by 
Lemma 2.3 from \cite{diakonikolas2023efficient}.  
The formal statement and its proof can be found in \Cref{lem:small-offset-center-search} in Appendix~\ref{app:22}.

We are now ready to conclude the proof of \Cref{lem:certify-localization-center}.
\begin{proof}[Proof of \Cref{lem:certify-localization-center}]
We defer the proof of the soundness of the algorithm to \Cref{lem:test-soundness,lem:small-offset-center-search}.
We now argue that the list returned by the algorithm will contain a good localization center with high constant probability.
First, assume that $t^\ast \leq 10$.
The existence of a good localization center follows from \Cref{lem:small-offset-center-search}.
Next we consider the case $t^\ast > 10$.
Let $\boldsymbol \mu$ be the mean vector of the tail points, and $\vec v$ be the intersection of the line along $\boldsymbol \mu$ and the 
\snew{separating hyperplane} of the optimal halfspace $h$.
The argument of $\Phi \lp( \snorm{2}{\vec v} \rp) \geq \Omega(B / \log(1/B))$ relies on the following two claims, which follow from \Cref{lem:mean-distance-bound} and \Cref{lem:mu-v-distance} respectively: (a) $\Phi(\snorm{2}{\boldsymbol{\mu}_+}) \geq \Omega( B / \log(1/B) )$, and (b) $\snorm{2}{\vec v} \leq \snorm{2}{\boldsymbol{\mu}_+} + O(1 / \sqrt{\log(1/B)})$.
Assuming the above claims, we immediately have that
\begin{align*}
    \Phi( \snorm{2}{\vec v} )
    \geq 
    \Phi\lp( \snorm{2}{\boldsymbol{\mu}_+} + O \lp(1 / \sqrt{\log(1/B)} \rp) \rp)
    \geq 
    \Omega(1) \; \Phi( \snorm{2}{\boldsymbol{\mu}_+}  )
    \geq \Omega(B / \log(1/B)) \;, 
\end{align*}
where in the first inequality we use (b), in the second inequality we use \Cref{clm:gaussian-cdf-bound} and
the fact that $\snorm{2}{\boldsymbol{\mu}_+} \leq O \lp( \sqrt{\log(1/B)} \rp)$, which is implied by (a), and in the last inequality we again use (a).
This shows that $\vec v$ is a $(0, \Omega(B/\sqrt{\log(1/B)}))$-good localization center.
It is easy to see that the output list contains some point that is $\eps^2$-close to $\vec v$, and closer to the origin than $\vec v$. Hence, it follows that the list contains some $\lp( \eps^2,  \Omega(B/\sqrt{\log(1/B)})\rp)$-good localization center.
This concludes the proof of \Cref{lem:certify-localization-center}.
\end{proof}

\begin{Ualgorithm}[h]
	\centering
	\fbox{\parbox{6in}{
			{\bf Input:} Sample access to a distribution $D$ over $\R^d \times \{\pm 1\}$;  
   tolerance parameter $\eps$.\\
   {\bf Output:} Reject or a list of vectors containing a good localization center.
\begin{enumerate}
\item Set $N = \poly(d/\eps)$. 
Draw $N$ \iid samples, and denote the set of samples as $S$.
\item Return an empty list if either the fraction of points labeled with $+1$ or $-1$ is less than $\eps / 2$. \label{line:minority-fraction} 
\item Verify that $\E_{ (\x,y) \sim S } \lp[  \x \x^T\rp] \preccurlyeq 2 \vec I$. \label{line:cov-verify} 
\item Verify that $ \snorm{2}{\E_{  (\x, y) \sim S } \lp[ \x \rp]} < \eps $. \label{line:mean-verify}
\item Compute 
$
\boldsymbol{\mu}_+ = \E_{ (\x, y) \sim S } [ \x | y = +1]
\, , \,
\boldsymbol  \mu_- = \E_{ (\x, y) \sim S } [\x | y = -1].
$
\item Initialize an empty list $L$.
\item For $\boldsymbol{\mu} \in  \{ \boldsymbol{\mu}_+, \boldsymbol{\mu}_-\}$, do the following: 
\begin{enumerate}[leftmargin=*]
\item For each $(\x, y) \in S$, project it along the direction of $\boldsymbol\mu$ to obtain the new pair $ ( x', y  ) \in \R \times \{\pm 1\}$ where $x' = \boldsymbol \mu \cdot \x / \snorm{2}{\boldsymbol\mu}$. Denote the resulting set as $H$.
\item Verify that the empirical distribution over $H$ and $\normal(0, 1)$ are $\eps$-close in CDF distance.
\label{line:cdf-check}
\item Verify that removing at most $\eps$-fraction of points from $H$ changes the mean by at most $O( \eps \sqrt{\log(1/\eps)} )$.
\label{line:stability-check}
\item Add $ \big \{
\vec v^{(i)} :=
 i \eps^2 \boldsymbol  \mu /\snorm{2}{\boldsymbol \mu}  \big \}_i$, where $i \in  \{0\} \cup \ceil{\frac{ \snorm{2}{\boldsymbol \mu} + 1/\log(1/B) }{ \eps^2 / \log(1/\tilde B) }} $, to the list $L$.
 \end{enumerate}
 \item Run the algorithm from \Cref{lem:small-offset-center-search}, and add the localization centers found into $L$. Return $L$.
\end{enumerate}
}}
\caption{Find-Localization-Center} 
\label{alg1:localization-center}
\end{Ualgorithm}


\subsection{Putting things together}
\label{sec:put-together}
\snew{We describe our algorithm and its analysis at a high level in this section.
We first invoke the routine {Find-Localization-Center} from \Cref{lem:certify-localization-center} to obtain a list of candidate centers, which is guaranteed to contain some $\lp( \eps^2, \wt \Omega(B)\rp)$-localization center if the routine does not reject.
For each candidate center $\vec w$, we use $(\vec w, \sigma:=\min(\snorm{2}{\vec w}^{-1}, \sqrt{2}))$-rejection sampling
(\Cref{def:rejection-sampling}) to re-center the marginal distribution at $\vec w$, and then transform the marginal back into a standard Gaussian.
If $\vec w$ is indeed a good localization center,
by \Cref{lem:good-center}, the halfspace becomes 
$$h'(\x) =  \sgn \lp( \vec \Sigma^{1/2} \vec v^\ast \cdot \x / \| \vec \Sigma^{1/2} \vec v^\ast \|_2 + t' \rp)$$  after the transformation, where 
$\vec \Sigma = \vec I - (1 - \sigma^2) \vec w \vec w^\top / \snorm{2}{\vec w}^2$, and $t'$ is some threshold with size at most $\eps$. 
Moreover, the fraction of points that survive the rejection sampling is at least $\wt \Omega(B)$, which ensures that 
the fraction of outliers among the surviving 
points is at most $\wt O(\opt / B) = \wt O(\sqrt{\opt})$ under the assumption $B \gtrsim \sqrt{\opt}$ (otherwise, we can always output a constant halfspace in the end).
We then run {Nearly-Homogeneous-Halfspace-Testable-Learner}, which gives us an $\wt O( \opt/B )$-approximation to $\vec \Sigma^{1/2} \vec v^\ast / \| \vec \Sigma^{1/2} \vec v^\ast\|_2.$
By \Cref{lem:transformed-angle}, reverting the transformation $\vec \Sigma^{1/2}$ gives a 
vector $\hat {\vec v}$, which will be 
an $\wt O(\opt/B)$-approximation to $\vec v^\ast$ \footnote{We also need to show that the angle between $\vec w$ and $\vec v^\ast$ is not too large. The details of this argument can be found in Appendix~\ref{app:23}.}.
Since we search for the threshold in a brute-force manner, 
it is guaranteed that we will have some halfspace $\hat h$ whose parameters are all $\wt O(\sqrt{\opt})$-close to the optimal halfspace in our final hypothesis list.
Conditioned on that the Wedge-Bound algorithm passes, \Cref{lem:general-wedge-bound} then guarantees that $\hat h$ will have learning error at most $\wt O(\sqrt{\opt})$.
In the end, we simply draw some fresh samples, and output the halfspace with the smallest testing error. 
The detailed proof and pseudocode can be found in Appendix~\ref{app:23}.}

\section{Additional Remarks Regarding \texorpdfstring{\Cref{thm:testable-learning-agnostic}}{Lg}}
We provide some additional remarks regarding our main theorem, addressing some of its limitations.
\begin{remark} \label{rem:broader-distr}
{\em It is natural to ask whether our algorithmic result can be generalized to hold for other structured distributions, e.g., for a large subclass of isotropic log-concave distributions. Such a generalization turns out to be possible for \emph{homogeneous halfspaces}; see \cite{gollakota2023tester}.
It is important to note, however, that obtaining polynomial-time agnostic learners for general halfspaces under non-Gaussian distributions is a challenging task --- even without the testable requirement. 
In particular, the only known efficient agnostic learners for general halfspaces that achieve dimension-independent error are the ones from \cite{DKS18a, diakonikolas2022learning}, 
which both work only under the Gaussian distribution.}
\end{remark}

\begin{remark}\label{rem:o-opt}
{\em \Cref{thm:testable-learning-agnostic} achieves error of $\wt O(\sqrt{\opt})+\eps$. 
On the other hand, error of $O(\opt)+\eps$ can be achieved in the same setting for 
homogeneous halfspaces. We discuss here a technical barrier of our approach 
preventing further improvements. A critical ingredient in our algorithm is a procedure which robustly estimates the mean of samples sharing 
the same label (in a testable way), 
in order to extract directional information about the weight vector of the optimal halfspace. Our current method, relying on certifying bounded 
second moments of the samples, ceases to work when the amount of corruption approaches $\sqrt{B}$, where $B$ is the probability mass of the points having that label. It is plausible that certifying boundedness of higher moments is an avenue for progress here.}
\end{remark}


\bibliographystyle{alpha}
\bibliography{mydb}
\appendix

\newpage

\section*{Appendix}

\section{Testable Learner for Nearly Homogeneous Halfspaces}\label{app:secgeneral}
We restate and show the following:

\begin{proposition}
Let $\eps \in (0,1)$.
Let $D$ be a distribution over $\R^d \times \{\pm 1\}$ and $h$ be a halfspace that achieves error $\opt$ under $D$ with $h(\x) = \sgn(\vec v^\ast\cdot \x + t)$ where $\abs{t} < \eps$.
Then, given \iid sample access to $D$, there exists an efficient tester-learner which either reports $D_\x$ is not Gaussian or outputs a vector $\vec v$ satisfying $\snorm{2}{\vec v - \vec v^\ast} \leq O(\opt) + \eps$.
\end{proposition}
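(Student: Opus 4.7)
The plan is to adapt the iterative testable learner for homogeneous halfspaces of \cite{diakonikolas2023efficient} to handle a halfspace with a small nonzero threshold $|t| < \eps$. The unifying observation is that whenever the threshold is bounded by a small absolute constant, the degree-$1$ Chow vector of $h$ under a Gaussian marginal equals $G(t)\vec v^\ast$ with $G(t) = \Theta(1)$. Consequently, an estimate of the Chow vector $\E_{(\x,y) \sim D}[y\x]$ that is accurate to within a small constant in $\ell_2$ norm immediately yields a direction within some universal constant angular distance of $\vec v^\ast$; the remaining error can then be shrunk geometrically by repeated localization.

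First I would invoke the testable Chow-parameter estimator (e.g., Lemma 2.3 of \cite{diakonikolas2023efficient}, which itself builds on the constant-degree moment-matching tester of \cite{Gollakota2022}) to either reject --- if the low-degree moments of the $\x$-marginal deviate noticeably from those of $\normal(\vec 0, \vec I)$ --- or produce a vector $\vec \chi$ that approximates $\E_{\x \sim \normal}[h(\x)\x] = G(t)\vec v^\ast$ up to $O(\opt) + \eps$ additive error. Because $G(t) = \Theta(1)$, normalizing $\vec \chi$ gives an initial direction $\vec v^{(0)}$ whose $\ell_2$ distance to $\vec v^\ast$ is bounded by some universal constant strictly less than one.

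Next I would run the homogeneous iterative scheme starting from $\vec v^{(0)}$: at step $i$, apply $(\vec v^{(i)}, \sigma)$-rejection sampling (\Cref{def:rejection-sampling}) to concentrate mass in a band around the hyperplane orthogonal to $\vec v^{(i)}$, then re-estimate Chow parameters inside the band after the isotropizing transformation. As formalized in \Cref{lem:good-center}, this transformation turns the halfspace into one whose threshold grows by at most a $1/\sigma = O(1/\eps)$ factor. Since we start with $|t| < \eps$, the transformed threshold remains bounded by an absolute constant throughout, which preserves the invariant that the Gaussian density at the threshold is $\Theta(1)$ and hence keeps Chow estimation in a usable regime. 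Each iteration halves the angular gap; pulling the refined direction back through the isotropizing transformation using \Cref{lem:transformed-angle}, after $O(\log(1/\eps))$ rounds we obtain $\vec v$ with $\snorm{2}{\vec v - \vec v^\ast} \leq O(\opt) + \eps$.

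The main difficulty is bookkeeping rather than any fundamentally new obstacle: one must ensure (i) the threshold in the transformed frame never leaves the regime where Chow estimation yields a constant-factor direction approximation, (ii) the low-degree moment tests performed on the \emph{localized} distribution are certified by sufficiently strong constant-degree moment tests on the original $D_\x$ (so that \Cref{lem:good-center}'s transformation indeed sends a Gaussian to a Gaussian up to the required tolerance), and (iii) the acceptance probability of each rejection sampling step is $\Omega(1)$ so that the sample complexity stays polynomial. Provided these are handled with care --- essentially by replaying the analysis of \cite{diakonikolas2023efficient} with the threshold tracked through each localization --- the inductive halving of the angular error yields the claimed bound.
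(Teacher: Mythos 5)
Your proposal matches the paper's proof essentially step for step: reuse the iterative algorithm of \cite{diakonikolas2023efficient}, observe that $G(t)=\Theta(1)$ whenever the threshold is a bounded constant so Chow estimation gives a constant-factor direction approximation, and track that since $\sigma=\Omega(\eps)$ throughout the iterations while the initial threshold satisfies $|t|<\eps$, the transformed threshold $|t|/\sigma$ stays $O(1)$ in every round. The only cosmetic difference is that the paper phrases the threshold bound as a one-shot $|t|/\sigma\le O(1)$ (localization is always applied to the original distribution with the current $\sigma$, not compounded), whereas your "grows by at most a $1/\sigma$ factor" phrasing could be misread as per-round multiplicative growth; your conclusion that the threshold stays bounded is nonetheless the intended one.
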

\begin{proof}
    The algorithm of \Cref{prop:small-offset-halfspace} is identical to the algorithm of \cite{diakonikolas2023efficient}, but we need to argue that the small offset does not make the algorithm to fail. If the distribution was Gaussian, then trivially the $\pr[\sgn(\vec v^\ast\cdot \x + t)\neq \sgn(\vec v^\ast\cdot \x)]=O(\eps)$, therefore, we could just assume that there exists a homogeneous halfspace with error $\opt+O(\eps)$. Unfortunately, in our setting we cannot do that. The reason is that, we do not have any guarantee that all the bands of the form $\{|\vec u\cdot \x|\leq \eps\}$ have mass of order $\eps$ for all the vectors $\vec u$.

The whole approach of \cite{diakonikolas2023efficient} boils down into calculating the $\E[y\x]$ up to a good accuracy. This is done in Proposition 2.1 of this work. We need to show that if the optimal halfspace is of the form $\sign(\vec v^\ast\cdot\x+t)$ with $|t|\leq 10$, then we can robustly estimate $\E[y\x]$. We first need to show that small $|t|$ does not change the $\E[y\x]$ by a lot. We use the following fact

    \begin{fact}[see, e.g.,~Lemma 4.3 of~\cite{DKS18a}] \label{lem:unbiased-chow}
Let $\vec v$ be a unit vector and $h(\vec x) = \sgn(\vec v \cdot \vec x+t)$ 
be the corresponding halfspace.
If $\vec x$ is drawn from $\normal(\vec 0, \vec I)$, 
then  we have that 
$\E_{\vec x \sim \normal(\vec 0, \vec I)} \lp[ h(\vec x)  \vec x\rp] = 2 G(t) ~ \vec v$, where $G(\cdot)$ is the pdf of the standard Gaussian.
\end{fact}

Note that from \Cref{lem:unbiased-chow}, we can see that if $|t'|\leq 10$ then $G(t')\geq C$, for some sufficiently small absolute constant $C>0$. Therefore, the proof of Proposition 2.1 remains the same as it only changes by a constant factor the calculation of $\E[y\x]$. 

Furthermore, note that in each iteration, the algorithm of \cite{diakonikolas2023efficient} changes the covariance matrix as $\vec \Sigma =\vec I-(1-\sigma^2)\vec w\vec w^\top$  for some $\sigma\geq \Omega(\eps)$ via rejection sampling. This means that norm of the vector $\vec v^\ast$ after the application of of the operator $\vec \Sigma^{1/2}$ will be at least $\sigma$, hence after normalization we have that $\sign(\vec \Sigma^{1/2}\vec v^\ast\cdot\x +t)=\sign((\vec \Sigma^{1/2}\vec v^\ast\cdot\x +t)/\|\vec \Sigma^{1/2}\vec v^\ast\|_2)$, which means that $|t|$ will be increased to at most $|t|/\sigma$. Moreover, note that $|t|\leq \eps$ and $\sigma=\Omega(\eps)$, hence $|t'|=|t|/\sigma\leq O(1)$ and we can choose the constants so that $|t'|\leq 10$.
Hence, the error $\snorm{2}{ \vec  v - \vec v^\ast }$ steadily shrinks by a constant factor in each iteration until we have $\snorm{2}{ \vec  v - \vec v^\ast } = \Theta( \max( \eps, \opt ) )$.
This completes the proof.
\end{proof}

\section{Omitted Proofs for General to Near-Homogeneous Reduction}\label{app:21}
\subsection{Proof of Properties of Good Localization Center
(\texorpdfstring{\Cref{lem:good-center}}{Lg})}
\begin{proof}
By the definition of an $(\alpha, \beta)$-good localization center, $\vec w$ satisfies
$
\Phi( \snorm{2}{\vec w} )
    \geq \beta.
$
Note that $\Phi( \snorm{2}{\vec w} )$ can be bounded from above by
$\frac{1}{ \sqrt{2 \pi} } \; \frac{1}{ \snorm{2}{\vec w} }  \; \exp(  - \snorm{2}{\vec w}^2/2  )$.
It then follows that 
$$
\exp( -\snorm{2}{\vec w}^2/2 ) \geq C \; \snorm{2}{\vec w} \beta.
$$
From here one can conclude that
$
\snorm{2}{\vec w} \leq O \lp(  \sqrt{\log(1/\beta)} \rp).
$

The form of the rejection sampling distribution conditioned on acceptance follows from \cite{DKS18a}. To analyze the acceptance probability, we first assume $\snorm{2}{\vec w}> \sqrt{2}$. 
If we perform $(\vec w, 1/\snorm{2}{\vec w})$-rejection sampling on the standard Gaussian distribution, the acceptance probability will be
\begin{align*}
{ \snorm{2}{\vec w}^{-1} }
    \; \exp\lp(  - \frac{\snorm{2}{\vec w}^2} { 2 \; (1 - \snorm{2}{\vec w}^{-2})} \rp) 
&=     
{ \snorm{2}{\vec w}^{-1} }
    \; \exp \lp(  
    -
    \frac{ \snorm{2}{\vec w}^2 }{2}
    \;
    \lp(1 +
    \frac{\snorm{2}{\vec w}^{-2}}{(1 - \snorm{2}{\vec w}^{-2})}
     \rp)
     \rp)\\
&=
{ \snorm{2}{\vec w}^{-1} }
\;
\exp \lp(  - \frac{\snorm{2}{\vec w}^2}{2}\rp)
\;
\exp \lp( 
-\frac{1}{2 (1 - \snorm{2}{\vec w}^{-2})}
\rp).
\end{align*}
Recall that in this case we assume $\snorm{2}{\vec w}$ is within the range $[\sqrt{2}, \infty)$. Hence, $\snorm{2}{\vec w}^{-2}$ is within the range $(0, 1/2]$. As a result, $\exp \lp( 
-\frac{1}{2 (1 - \snorm{2}{\vec w}^{-2})} \rp)$ is bounded from above and below by constants.
Hence, we can bound from below the overall acceptance probability by 
$$
\frac{1}{ \snorm{2}{\vec w} }
    \; \exp\lp(  - \snorm{2}{\vec w}^2 / \lp( 2 \; (1 - \snorm{2}{\vec w}^{-2})\rp) \rp) 
\geq
C \; \Phi(\norm{\vec w}) \geq 
C \beta.
$$
Now, suppose $\snorm{2}{\vec w} < \sqrt{2}$. 
We now perform rejection sampling with parameter $\sigma = \sqrt{1/2}$.
The acceptance probability is then
$
2 \; \exp\lp(  - \snorm{2}{\vec w}^2 \rp)
\geq \Omega(1).
$
This completes the argument of Property~(1).

After rejection sampling, the standard Gaussian becomes
$\normal( \vec w, \vec \Sigma )$,
where $\vec \Sigma = \vec I - (1 - \sigma^2) \vec w \vec w^T / \snorm{2}{\vec w}^2$ for 
$\sigma = \min(\sqrt{1/2}, \snorm{2}{\vec w}^{-1}) > \Omega \lp(1 / \sqrt{\log(1/\beta)} \rp)$.
Recall the original halfspace is given by $h(\x) = 
\sgn(  
\vec v \cdot \x
+t )$.
After we transform the space to make $\normal(\vec w, \vec \Sigma)$ isotropic, the halfspace then becomes
$$
h'(\x) = 
\sgn( \vec \Sigma^{1/2} \vec v^\ast \cdot \x + \vec v^\ast \cdot \vec w + t^\ast).
$$
Now we analyze the $\ell_2$ norm of the new defining vector and offset separately.
For the new defining vector,
we decompose $\vec v^\ast$ into its component in $\vec w$ and the orthogonal part, i.e.
$\vec v^\ast = a \vec w / \snorm{2}{\vec w} + b \vec u $  where $a^2 + b^2 = 1$.
Then, it is not hard to see that
$  \vec \Sigma^{1/2} \vec v = 
a \vec w / \snorm{2}{\vec w} + \sigma \; b \vec u   $.
Hence, the $\ell_2$ norm is at least $\max(a, \sigma b) \geq \sigma/\sqrt{2} > \Omega(1/\log(1/\beta)) $.
For the threshold, we note that by the definition of the localization center, $\vec w$ is at most $\alpha$ far from the halfspace $h$.
Therefore, it holds $\abs{\vec v^\ast \cdot \vec w + t^\ast} \leq \alpha$.
Combining our analysis then gives the new halfspace $h'$ is at most $O\lp(\alpha \sigma^{-1}\rp)
= O\lp( \alpha \sqrt{\log(1/\beta)} \rp)$ far from the origin.
This concludes the proof of Property (2), and also \Cref{lem:good-center}.
\end{proof}

\subsection{Proof of Transformation Error Bound
(\texorpdfstring{\Cref{lem:transformed-angle}}{Lg})}

\begin{proof}
For convenience, we define
$\lambda = \snorm{2}{\vec \Sigma^{1/2} \vec v^{\ast}}$.
We can decompose 
$\vec v^\ast$ as
$\vec v^\ast = 
a \; \vec w +b \; \vec y
$
for some unit vector $\vec y$ orthogonal to $\vec w$ and positive coefficients $a,b$ satisfying $a^2 +b^2 = 1$.
Notice that we have
$(1-a)^2 + b^2 = \snorm{2}{\vec w - \vec v^\ast}\leq \beta^2$.
This implies that $b$ is bounded from above by $\beta$.
Consequently, since $a^2 +b^2 = 1$, we have
\begin{align} \label{eq:a-bound}
 a \geq 1 - \beta^2/2.
\end{align}
Note that the operator $\vec \Sigma^{1/2}$ scales  any vector along the direction of $\vec w$ 
by a factor of $\sigma$
while leaving any vector orthogonal to $\vec w$ unchanged.
Therefore, we have
$
\vec \Sigma^{1/2} \vec v^\ast
$, 
which further implies that $\lambda=\snorm{2}{\vec \Sigma^{1/2} \vec v^{\ast}} \leq a\sigma + b$.

Since the $\ell_2$ distance between $\vec v$ and $\lambda^{-1} \vec \Sigma^{1/2} \vec v^\ast$ is at most $\delta$, we can write
$$
\vec v = \lambda^{-1} \vec \Sigma^{1/2} \vec v^\ast + c \; \vec w + d \; \vec z.
$$
for some unit vector $\vec z$  orthogonal to $\vec w$ and 
positive coefficients $c,d < \delta$.\\
Multiplying both sides by $\vec \Sigma^{-1/2}$ then gives
$$
\vec \Sigma^{-1/2} \vec v
= \lambda^{-1} \vec v^\ast
+ c \; \sigma^{-1} \; \vec w + d \; \vec z.
$$
We can substitute the decomposition of $\vec v^\ast$ into the equation.
After some algebra, we get that
\begin{align*}
\vec \Sigma^{-1/2} \vec v
&= \lp( \lambda^{-1} + c \; \sigma^{-1} \; a^{-1}  \rp) \; a \; \vec w
+ \lp( \lambda^{-1} + c \; \sigma^{-1} \; a^{-1}  \rp) \; b \; \vec y
- c \; \sigma^{-1} \; a^{-1} b \vec y + d \; \vec z. \\
&= \lp( \lambda^{-1} + c \; \sigma^{-1} \; a^{-1}  \rp) \vec v^\ast
- c \; \sigma^{-1} \; a^{-1} b \; \vec y + d \; \vec z.
\end{align*}
Now denote $\xi := \lambda^{-1} + c \; \sigma^{-1} a^{-1}$.
Note that since $\lambda <a \; \sigma + \beta$, we have $\xi^{-1} <a \; \sigma + \beta$ as well.
We can then divide both sides by $\xi$ and merge the terms involving $\vec y, \vec z$, which gives us
$$
\xi^{-1} \vec \Sigma^{-1/2} \vec v
= \vec v^\ast +  \rho  \vec u \, ,
$$
where $\vec u$ is a unit vector and 
$\rho$ is a positive number bounded by
\begin{align*}
\xi^{-1} \lp( c \sigma^{-1} a^{-1} b + d\rp)
\leq 
(a \sigma + \beta) \;
( c \sigma^{-1} a^{-1} b + d )
\leq (a \sigma + \beta) \;
\lp( \delta \sigma^{-1} \frac{\beta}{1 - \beta^2/2} + \delta  \rp).
\end{align*}
Note that the right hand side is exactly the bound we want for $\snorm{2}{ 
\vec \Sigma^{1/2}
\vec v / \snorm{2}{\vec \Sigma^{1/2}
\vec v} -
\vec v^\ast
}$ up to some constant.
If the right hand side is $ \Omega(1) $, the conclusion is trivial since the distance between two unit vectors is always upper bounded by a constant.
Otherwise, 
since $\xi^{-1} \vec \Sigma^{-1/2} \vec v$ is $\rho$-close to $\vec v^\ast$ in $\ell_2$ distance and $\rho = o(1)$, it follows that the vector, after being normalized to have norm $1$, is still $O(\rho)$ closed to $\vec v^\ast$ in $\ell_2$ distance. 
Our result then follows from the observation that  
$$
\xi^{-1} \vec \Sigma^{-1/2} \vec v
/\snorm{2}{\xi^{-1} \vec \Sigma^{-1/2} \vec v} =  \vec \Sigma^{-1/2} \vec v
/\snorm{2}{ \vec \Sigma^{-1/2} \vec v}.
$$
This concludes the proof of \Cref{lem:transformed-angle}.
\end{proof}

\subsection{Proof of General Wedge Bound (\texorpdfstring{\Cref{lem:general-wedge-bound}}{Lg})}
We provide the pseudocode of the Wedge-Bound algorithm below for completeness.

\begin{Ualgorithm}[H]
	\centering
	\fbox{\parbox{5.3in}{
			{\bf Input:} Sample access to a distribution $D_{\vec x}$ over $\R^d$; tolerance parameter $\eta>0$; unit vector $\vec v \in \R^d$;
failure probability $\tau \in (0,1)$.\\
{\bf Output:} Certifies the (conditional) moments of $D$ approximately match with $\normal(\vec 0, \vec I)$.
\begin{enumerate}[leftmargin=*]
\item Set $B = \ceil{\sqrt{\log(1/\eta)} / \eta}$.
\item Let $\widetilde D$ be the empirical distribution obtained by drawing $\poly(d, 1/\eta)  \log(1/\tau)$ 
samples from $D_{\vec x}$.
\item For integers $-B-1 \leq i \leq B$, define $E_i$ to be the event  that $\{\vec v \cdot \vec x \in [ i\eta , (i+1)  \eta ]\}$ and $E_{B+1}$ to be the event  that $\{|\vec v \cdot \vec x| \geq \sqrt{\log(1/\eta)}\}$.
\item \label{line:probability-check} Verify that
$
\sum_{i=-B-1}^{B+1}
\abs{ \Pr_{ \normal(\vec 0, \vec I) } \lp[ E_i\rp] - \Pr_{ \widetilde D } \lp[ E_i\rp]}\
\leq \eta.
$
\item Let $S_i$ the distribution of $\widetilde D$ conditioned on $E_i$ and $S_i^{\perp}$ be $S_i$ projected on the subspace orthogonal to $\vec v$.
\item \label{line:moment-check} For each $i$, verify that $S_i^{\perp}$ 
has bounded covariance, i.e., check that
\(
\E_{\x \sim S_i^\perp}[\x \x^\top] \preccurlyeq 2 \vec I
\,.
\)

\end{enumerate}}}
\vspace{0.2cm}
\caption{Wedge-Bound} \label{alg:wedge-bound}
\end{Ualgorithm}

\begin{proof}[Proof of \Cref{lem:general-wedge-bound}]
We start with the following claim: 

\begin{claim}
\label{clm:wedge}
Suppose $D$ is a distribution passing the Wedge Bound Test with tolerance $\eta$ along the direction $\vec v$.
Let $a,b$ be two positive number satisfying $a^2 + b^2 = 1$ and $b <\delta$.
Let $\vec u$ be a unit vector orthogonal to $\vec v$.
Then it holds 
$$
\Pr [ -b \vec u \cdot \vec x > a \vec v \cdot \vec x + t, a \vec v \cdot \vec x + t > 0]  \, , \,
\Pr [ b \vec u \cdot \vec x > -a \vec v \cdot \vec x - t, -a \vec v \cdot \x - t > 0]\leq O(\delta +\eta).
$$
\end{claim}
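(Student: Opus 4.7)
My plan is to condition on $s := \vec v\cdot \x$ via the bins $E_i$ of \Cref{alg:wedge-bound} and invoke the orthogonal second-moment certificate in Line~\ref{line:moment-check}. Writing $z := \vec u\cdot\x$, the first wedge event rewrites as $\{as+t>0\}\cap\{z<-(as+t)/b\}$. For each bin set $c_i := \min_{s'\in[i\eta,(i+1)\eta]}(as'+t)$; when $c_i\leq 0$ the wedge is empty in $E_i$, and otherwise it requires $z < -c_i/b$. Since Line~\ref{line:moment-check} yields $\E[z^2\mid E_i]\leq 2$ (applied to the unit vector $\vec u\perp\vec v$), Chebyshev gives
\begin{equation*}
\Pr[\text{wedge}\cap E_i]\ \leq\ \Pr[E_i]\cdot \min\!\bigl(1,\ \tfrac{2b^2}{c_i^2}\bigr).
\end{equation*}

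I will next transfer from $\widetilde D$ to the standard Gaussian using Line~\ref{line:probability-check}: since the weights $\min(1,2b^2/c_i^2)$ are bounded by $1$, the total-variation check introduces only an additive $O(\eta)$ error. The extreme-tail bin $E_{B+1}$ is likewise absorbed into an $O(\eta)$ slack, as the Gaussian mass of $\{|s|\geq\sqrt{\log(1/\eta)}\}$ is itself $O(\eta)$ via the inequality $\Phi(t)\leq G(t)/t$. What remains is to estimate
\begin{equation*}
\sum_{i:\,c_i>0}\Pr_{\mathcal N(0,1)}[E_i]\cdot\min\!\bigl(1,\ \tfrac{2b^2}{c_i^2}\bigr).
\end{equation*}

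I will split this sum at $c_i = b$. For the near-boundary range $\{s:0<as+t\leq b\}$, whose length is $b/a\leq 2b$ (using $a\geq 1/\sqrt{2}$, which holds whenever $b\leq\delta$ is small enough; the other regime is trivial since we only care about bounds of order $O(\delta+\eta)$), the density bound $G\leq 1/\sqrt{2\pi}$ yields Gaussian mass $O(b)=O(\delta)$, and the trivial factor $\min(\cdot,1)\leq 1$ handles the wedge contribution there. For the far-from-boundary range $\{c_i>b\}$, I use $\Pr_{\mathcal N(0,1)}[E_i]\leq \eta/\sqrt{2\pi}$ and change variables $c = as+t$:
\begin{equation*}
\sum_{i:\,c_i>b}\Pr_{\mathcal N(0,1)}[E_i]\cdot\tfrac{2b^2}{c_i^2}\ \lesssim\ \frac{1}{a}\int_{b}^{\infty}\frac{2b^2}{c^2}\,dc \ =\ O(b/a)\ =\ O(\delta).
\end{equation*}

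Summing the three contributions (extreme tail, near-boundary, and far-from-boundary) gives the required $O(\delta+\eta)$ bound for the first probability. The second probability is treated identically, since substituting $(\vec v,t,\vec u)\mapsto(-\vec v,-t,-\vec u)$ turns it into the first form and the bin decomposition in \Cref{alg:wedge-bound} is symmetric in $i$, so both Line~\ref{line:probability-check} and Line~\ref{line:moment-check} apply verbatim. The main subtlety I anticipate is that the Chebyshev bound becomes vacuous close to $c_i=0$, so one must ensure that the width of the problematic segment in $s$-space is itself $O(\delta)$; this is exactly what the assumption $b<\delta$ purchases. One must also track the TV error between $\Pr_{\widetilde D}$ and $\Pr_{\mathcal N(0,\vec I)}$ through the weighted sum, which is clean only because the weight $\min(1,2b^2/c_i^2)$ is uniformly bounded by $1$.
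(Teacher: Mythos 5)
Your proof is correct and takes the same high-level route as the paper --- decompose the wedge by conditioning on $\vec v\cdot\vec x$, apply Chebyshev via the certified conditional second moment in each slab, and control the tail --- but it differs in a way worth noting. The paper conditions on bins of width $\delta$ in the variable $a\,\vec v\cdot\vec x+t$, which makes the geometric sum $\sum_i i^{-2}$ fall out cleanly after combining $b<\delta$ with Chebyshev; the price is that those $\delta$-width slabs do not coincide with the $\eta$-width slabs $E_i$ for which \Cref{alg:wedge-bound} actually certifies $\E[z^2\mid E_i]\leq 2$, and the paper silently transfers the conditional second-moment bound across this mismatch. You instead condition on the algorithm's own $E_i$ slabs, which makes the use of Lines~\ref{line:probability-check} and~\ref{line:moment-check} entirely literal, at the cost of replacing the geometric sum by a split at $c_i=b$ plus an integral comparison. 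Both are valid; yours is arguably the more faithful reading of what the test certifies.

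Two small inaccuracies, neither of which sinks the argument. First, your claim that the Gaussian mass of $E_{B+1}=\{|s|\geq\sqrt{\log(1/\eta)}\}$ is $O(\eta)$ is off: using $\Phi(t)\leq G(t)/t$ with $t=\sqrt{\log(1/\eta)}$ gives $G(t)=\frac{1}{\sqrt{2\pi}}\eta^{1/2}$, hence $\Phi(t)=O\lp(\sqrt{\eta}/\sqrt{\log(1/\eta)}\rp)$, which is \emph{larger} than $\eta$ for small $\eta$. The final bound you would actually get is $O(\delta+\sqrt{\eta})$. This is a shared defect with the paper's own treatment of the large-$i$ bins and appears to stem from the threshold in \Cref{alg:wedge-bound}; replacing $\sqrt{\log(1/\eta)}$ by $\sqrt{2\log(1/\eta)}$ there would make both your estimate and the stated $O(\delta+\eta)$ conclusion go through. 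Second, the sentence ``when $c_i\leq 0$ the wedge is empty in $E_i$'' is not literally true for the one slab straddling $as+t=0$, where $c_i\leq 0<\max_{s'\in E_i}(as'+t)$; that slab should simply be placed in the near-boundary bucket and bounded by its probability $O(\eta)$, which your $\min(1,\cdot)$ cap already does implicitly, so this is a harmless slip of phrasing rather than a gap.
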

\begin{proof}
For convenience, we define $x := - \vec u \cdot \vec x$, 
and $y := \vec v \cdot \vec x$. We have
\begin{align*}
&\Pr [ -b x > a y + t, a y + t > 0]\\
&\leq 
\Pr [ 0 < a y + t < \delta ]
+ 
\sum_{i=1}^{\infty}
\Pr [ -b x > i \delta, i \delta < a y + t < (i+1) \delta]  \\
&\leq O(\delta + \eta)
+ O(\delta + \eta) \; \sum_{i=1}^{\infty} \frac{1}{i^2}
\leq O(\delta + \eta) \, ,
\end{align*}
where the first inequality uses the law of total probability, the second inequality follows from that we verify that the cumulative density function of $y$ is  $\eta$-close to that of the standard Gaussian in Kolmogorov distance, and that the distribution of $x$ conditioned on $i \delta < a y + t < (i+1) \delta$ has its second moment bounded from above by some constant, the last inequality follows from that the series $\sum_{i=1}^{\infty} \frac{1}{i^2}$ is converging.
The argument for bounding the term $\Pr [ b \vec u \cdot \vec x > -a \vec v \cdot \vec x - t, -a \vec v \cdot \x - t > 0]$ is symmetric, and we hence omit it here.
This concludes the proof of \Cref{clm:wedge}.
\end{proof}
We first decompose $\vec v^\ast$ as
$\vec v^\ast = a \; \vec v \cdot \x  + b \; \vec u \cdot \x$ where 
$\vec u$ is a unit vector orthogonal to $\vec v$ and $a,b$ are two positive coefficients satisfying $a^2 +b^2 = 1$ and $b < \delta$.
Essentially, there are two cases where $h(\x)$ and $h^\ast(\x)$ could differ.
In the first case, we have
$a \; \vec v \cdot \x + b \; \vec u \cdot \x + t^\ast < 0 $ and $ \vec v \cdot \x + t > 0 $.
The first inequality can be rewritten as
$
-b \; \vec u \cdot \x > a \; \vec v \cdot \x + t^\ast
$
Using Claim~\ref{clm:wedge}, we have
$$
\Pr_{\x \sim D} \lp[ 
-b \; \vec u \cdot \x > a \; \vec v \cdot \x + t^\ast \, , \, a \; \vec v \cdot \x + t^\ast \geq 0
\rp] \leq O(\delta +\eta).
$$
On the other hand, since we must also have
$\vec v \cdot \x +t >0$, the probability that $\vec v \cdot \x +t >0$ and $a \; \vec v \cdot \x + t^\ast < 0$ are both true cannot be too large.
In particular, if both of them are true, we must have
$ t^\ast / a< -\vec v \cdot \x <t $.
Since $\abs{t - t^\ast} < \eta$, and $a$ is of order $1 \pm O(\delta)$.
It follows this happens with probability at most $O(\eta +\delta)$.
Altogether, we then have
\begin{align*}
&\Pr_{ \x \sim D }
\lp[ 
a \; \vec v \cdot \x + b \; \vec u \cdot \x + t^\ast < 0  \, , \, \vec v \cdot \x + t > 0
\rp] \\
&\leq 
\Pr_{\x \sim D} \lp[ 
-b \; \vec u \cdot \x > a \; \vec v \cdot \x + t^\ast \, , \, a \; \vec v \cdot \x + t^\ast \geq 0
\rp] 
+ 
\Pr_{\x \sim D} \lp[ 
\vec v \cdot \x +t >0  \, , \,
a \; \vec v \cdot \x + t^\ast < 0
\rp] \\
&\leq O(\delta +\eta).
\end{align*}

In the second case, we have
$a \; \vec v \cdot \x + b \; \vec u \cdot \x + t^\ast > 0$ and $\vec v \cdot \x + t < 0 $.
Similarly, we can rewrite the first inequality as 
$
b \; \vec u \cdot \x >-a \; \vec v \cdot \x - t^\ast.
$
Now, since $- \vec v \cdot \x -t >0$ and $t >t^\ast$, we must have
$-a \; \vec v \cdot \x - t^\ast >0$.
Hence, we can just use \Cref{clm:wedge} to conclude that this happens with probability at most $O(\eta +\delta)$.
Combining the bound on the probability mass in the two cases then concludes the proof of \Cref{lem:general-wedge-bound}.
\end{proof}

\section{Omitted Proofs for Good Localization Center Search}\label{app:22}

\subsection{Soundness of the Tests in \texorpdfstring{\Cref{alg1:localization-center}}{Lg}}
In this subsection, we show that \Cref{alg1:localization-center} is sound, i.e., it rarely falsely rejects $D_\x$ when we have $D_\x = \normal(\vec 0, \vec I)$.
\begin{lemma}[Soundness of Tests] \label{lem:test-soundness}
Let $D$ be a distribution over $\R^d \times \{\pm 1\}$. 
Assume $D_\x = \normal(\vec 0, \vec I)$.
Then Algorithm~\ref{alg1:localization-center} does not reject on Lines~\ref{line:cov-verify}, ~\ref{line:mean-verify},~\ref{line:cdf-check}~and~\ref{line:stability-check} with high constant probability.
\end{lemma}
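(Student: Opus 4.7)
The plan is to show that each of the four tests is a standard concentration property of the empirical distribution of $N = \poly(d/\eps)$ i.i.d.\ samples from $\normal(\vec 0, \vec I)$, and then union bound. The main subtlety is that the tests on Lines~\ref{line:cdf-check} and~\ref{line:stability-check} are performed after projecting along a \emph{data-dependent} direction $\boldsymbol{\mu}\in\{\boldsymbol{\mu}_+,\boldsymbol{\mu}_-\}$, so fixed-direction concentration (e.g., DKW) does not apply directly. The key technical step is to upgrade these to \emph{uniform} statements over all projection directions in $\mathbb{S}^{d-1}$.

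First I would handle Lines~\ref{line:mean-verify} and~\ref{line:cov-verify} by standard sub-Gaussian vector and matrix concentration: with $N=\poly(d/\eps)$ samples, the empirical mean satisfies $\snorm{2}{\widehat{\boldsymbol{\mu}}}\le \eps$ (vector Bernstein / Hanson-Wright), and the empirical second-moment matrix $\widehat{\vec\Sigma}$ satisfies $\|\widehat{\vec\Sigma}-\vec I\|_{\mathrm{op}}\le 1/2$ (matrix Bernstein for sub-Gaussian covariance), so $\widehat{\vec\Sigma}\preccurlyeq 2\vec I$. Each holds with probability $1-o(1)$.

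For Line~\ref{line:cdf-check}, I would argue the uniform Kolmogorov bound
\[
\sup_{\vec u\in\mathbb{S}^{d-1}}\,\sup_{t\in\R}\;
\abs{\Pr_{S}[\vec u\cdot\x>t]-\Pr_{\normal(0,1)}[x>t]}\;\le\;\eps,
\]
which follows from VC-uniform convergence applied to the class of halfspaces $\{\x\mapsto\Ind\{\vec u\cdot\x>t\}\}$ of VC-dimension $O(d)$; $N=\poly(d/\eps)$ samples suffice. Specializing to $\vec u=\boldsymbol{\mu}/\snorm{2}{\boldsymbol\mu}$ (sample-dependent) then certifies the CDF test. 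For Line~\ref{line:stability-check}, I would combine two ingredients: (a) a \emph{population} stability fact, namely that for $\normal(0,1)$ deleting any measurable set of mass at most $\eps$ shifts the mean by at most $O(\eps\sqrt{\log(1/\eps)})$ — the extremal removal is a tail $\{x>\Phi^{-1}(\eps)\}$ of expected value $\Theta(\sqrt{\log(1/\eps)})$; and (b) uniform convergence of empirical-to-population truncated means $\E_S[\vec u\cdot \x\,\Ind_A(\x)]$ over the VC class $A=\{\vec u\cdot\x>t\}$, which transfers the population bound to any $\eps$-fraction removal of the empirical sample along any direction $\vec u$, up to an additive $O(\eps)$ error absorbed into the slack. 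Specializing again to $\vec u=\boldsymbol\mu/\snorm{2}{\boldsymbol\mu}$ yields Line~\ref{line:stability-check}.

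The main obstacle is exactly the data-dependence of $\boldsymbol\mu$: without uniformity, the algorithm could adversarially pick a direction exhibiting atypical fluctuation. Once the VC-uniform statements in the third paragraph are in place, however, the proof is a straightforward union bound over the four tests, concluding that all four tests pass with high constant probability under $D_\x=\normal(\vec 0,\vec I)$.
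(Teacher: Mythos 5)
Your proposal is correct and matches the paper's proof in structure: Lines~\ref{line:cov-verify}--\ref{line:mean-verify} by moment concentration, Line~\ref{line:cdf-check} by a VC-uniform Kolmogorov bound over all projection directions, and Line~\ref{line:stability-check} by the $(\eps,O(\eps\sqrt{\log 1/\eps}))$-stability property of Gaussian samples, in each case deliberately proving uniformity to accommodate the data-dependent direction $\boldsymbol\mu$. The only cosmetic difference is that the paper cites the stability property directly from the robust-statistics literature (\cite{diakonikolas2023algorithmic}) whereas you sketch its derivation from the population tail bound plus uniform convergence of truncated means; when you make that last step rigorous you will need a truncation or Bernstein-type argument rather than bare VC, since the functions $\x\mapsto(\vec u\cdot\x)\Ind\{\vec u\cdot\x>t\}$ are unbounded, which is exactly what the cited stability result packages.
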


\begin{proof}
We first show that if $D_\x$ is indeed $\normal(\vec 0, \vec I)$, the algorithm does not report violation of the distributional assumptions with high constant probability.
Lines~\ref{line:cov-verify} and~\ref{line:mean-verify} verify that the first and second moments are close to the ones of the standard normal; it follows from the fact that if $D_\x$ is the standard Gaussian, then by drawing $N \gtrsim d^2 / \eps^2$ \iid samples from $D_\x$, the first and the second empirical moments should be close to those of $\normal(\vec 0, \vec I)$ with high constant probability. 
For Line~\ref{line:cdf-check}, we remark that if $D_\x$ is standard Gaussian, then $H$ is empirical distribution of $\normal(0,1)$ and $N$ \iid samples from $D_\x$ are enough to test it with high constant probability.
To argue the soundness of the test
we argue that  the uniform distribution over $\{ \vec v \cdot \vec x \mid \vec x \in S  \}$ is close to $\normal(0, 1)$ in Kolmogorov distance for every $\vec v$.
We note that the quantity can be expressed as
$$
\sup_{ \vec v \in \R^d, b \in \R }
\abs{
\Pr_{ \vec x \sim S }
\lp[ \vec v \cdot \vec x \geq b \rp]
-
\Pr_{ \vec x \sim \normal(\vec 0, \vec I) }
\lp[ \vec v \cdot \vec x \geq b \rp]
}.
$$
Hence, by the VC-inequality, with high constant probability, the above is bounded from above by
$\eps$ when we take $N \gtrsim d / \eps^2$ many samples.
Conditioned on the event that Line~\ref{line:cdf-check} does not reject, we verify that the test passes Line~\ref{line:stability-check} with high constant probability.
For Line~\ref{line:stability-check}, we note that if one takes $N \gtrsim d / \eps^2$ many samples from $\normal(\vec 0, \vec I)$, the empirical distribution will satisfy the following stability property (see \cite{diakonikolas2023algorithmic} e.g.)
\begin{align}
\label{eq:mean-deviation}
\abs{  
\frac{1}{|S'|}
\sum_{ \vec x \in S' }
\boldsymbol{\mu}_+ \cdot \vec x
} \leq O \lp( \eps \sqrt{\log(1/\eps)} \rp) \, ,  
\end{align}
for every $S' \subseteq S$ with $|S'| \eps (1 - \eps) |S|$.
On the other hand, since Line~\ref{line:mean-verify} passes, we immediately have that
$ \E_{ x \sim H }[x] \leq \eps $ and
combining it with \Cref{eq:mean-deviation}, it follows that the algorithm does not reject with high constant probability.
This concludes the proof of \Cref{lem:test-soundness}.
\end{proof}

\subsection{Proof of \texorpdfstring{\Cref{lem:mu-v-distance}}{LG}}

\begin{proof}[Proof of \Cref{lem:mu-v-distance}]
Without loss of generality, we assume that the tail points have the $+$ label.
Recall that $\boldsymbol \mu$ is the mean of the tail points among samples, and $\mathbf v$ is the intersection between the separating hyperplane of the halfspace and the ling along $\boldsymbol \mu$.
Suppose the optimal halfspace is given by 
$ h(\x) = \sgn(\vec v^\ast \cdot\x + t^\ast) $.
We first argue that 
\begin{align}
\label{eq:the-right-one}
\vec v^\ast \cdot \lp( \vec v - \boldsymbol{\mu}\rp) \leq O(1/\log(1/B)).    
\end{align}
Since $\vec v$ is defined to be the intersection between the hyperplane and some line passing the origin, we always have
$\vec v \cdot \vec v^\ast = |t^\ast|$.
Let $\boldsymbol {\bar \mu}$ be the 
empirical mean over the points with label $+1$ when there are no outliers, i.e., all points are labeled by $h(\x)$ (in particular, $\boldsymbol{\mu}$ can be viewed as a noisy estimation of $\boldsymbol {\bar \mu}$).
Then we have 
$\vec v^{\ast} \cdot \boldsymbol {\bar \mu} \geq |t^\ast| = \vec v \cdot \vec v^\ast$.
Thus, it suffices to show that
\begin{equation}
\label{eq:outlier-mean-deviation}
\vec v^\ast \cdot \lp( \boldsymbol {\bar \mu} - \boldsymbol{\mu}\rp)
\leq \frac{1}{ \log(1/B) }.
\end{equation}
Let $\bar B$ be the fraction of samples within $S$ such that $h(\x) = +1$, and $\tilde B$ be the fraction of samples within $S$ such that $y = +1$.
Recall that $B$ is the fraction of tail points. Thus, with high probability, it holds that $|\tilde B - B| \leq \epsilon / 10$.
We also have that $\abs{\bar B - \tilde B} \leq \opt$ since the adversary can at most flip the labels of $\opt$ fraction of points.
Moreover, conditioned on that the algorithm does not declare reject in Line~\ref{line:minority-fraction}, we immediately have that
$\tilde B > \epsilon/2$.
By the assumption of the lemma, we have that $B \gtrsim \sqrt{\opt} \log^2(1/\opt) + \epsilon$. 
This implies that $\bar B, \tilde B \geq \Omega(B)$.
In order to show \Cref{eq:outlier-mean-deviation}, we define the following ``mis-normalized'' empirical mean $\boldsymbol {\hat \mu}$
\begin{align*}    
\boldsymbol {\hat \mu}
:= \frac{ \tilde B }{ \bar B } \vec {\boldsymbol{\mu}}
= \frac{1}{\bar B |S|} \lp( \sum_{(\vec x, y) \in S} \mathbbm 1\{y = 1\} \x \rp).
\end{align*}
We note that $\boldsymbol {\hat \mu}$ and $\boldsymbol {\mu}$ are close to each other:
\begin{align*}
\boldsymbol {\mu} - 
\boldsymbol {\hat \mu}
= \lp( 
1 - \frac{\tilde B}{\bar B} \rp) \;
\frac{1}{\tilde B |S|} \lp( \sum_{(\vec x, y) \in S} \mathbbm 1\{y = 1\} \x  \rp)
= 
O( \opt / B ) \boldsymbol {\mu} \, ,
\end{align*}
where the last equality follows from that $\abs{\tilde B - \hat B} \leq \opt$ and $\tilde B > \Omega(B)$.
Since \Cref{lem:mean-distance-bound} implies that $\snorm{2}{\boldsymbol{\mu}} \leq O \lp( \sqrt{\log(1/\opt)} \rp)$ and $B \gtrsim \sqrt{\opt} \log^2(1/\opt)$ by assumption,
it follows that
\begin{align}
\label{eq:mis-normalized-deviation} 
\vec v^\ast \cdot\lp( \boldsymbol {\hat \mu}
- \boldsymbol {\mu} \rp) \leq \tilde O\lp( \sqrt{\opt} \rp).
\end{align}
We then bound from above the distance between
$\vec v^\ast \cdot  \boldsymbol {\hat \mu}$ and $\vec v^\ast \cdot \boldsymbol {\bar \mu}$.
We have that
\begin{align}
\vec v^\ast \cdot \boldsymbol {\hat \mu}
-
\vec v^\ast \cdot \boldsymbol  {\bar \mu}
&=
\frac{1}{\bar B |S|}
\sum_{ (\x, y) \in S }
\lp( \mathbbm 1 \{ h(\x) = +1 \} - \mathbbm 1 \{y = +1 \} \rp) \; \vec v^\ast \cdot \x \nonumber \\
&\leq
\frac{1}{\bar B |S|}
\sqrt{
\sum_{ (\x, y) \in S } 
\lp( \mathbbm 1 \{ h(\x) = +1 \} - \mathbbm 1 \{y = +1 \} \rp)^2 \; 
{\vec v^\ast}^{\top} 
\lp( \sum_{(\x, y) \in S}
\vec x \vec x^{\top} \rp) \vec v^\ast
} \nonumber \\
& \leq 
\frac{1}{\bar B} \; 
O \lp( \sqrt{ \opt   } \rp)
< O \lp( \frac{1}{ \log(1/B) } \rp) \, ,
\label{eq:pure-outlier-mean-deviation}
\end{align}
where in the first inequality we use Cauchy's inequality, in the second inequality we use the fact that there are at most $\opt$ fraction of points with $h(\x) \neq y$, and that the empirical second moments of $S$ is bounded by $2 \vec I$, and in the last inequality we use $\tilde B \gtrsim \sqrt{\opt} \log(1/\opt)$.
Combining \Cref{eq:mis-normalized-deviation} and \Cref{eq:pure-outlier-mean-deviation} then concludes the proof of \Cref{eq:outlier-mean-deviation}.

With \Cref{eq:outlier-mean-deviation} in our hand, 
we are ready to finish the proof. 
Note that $\vec v$ is the intersection point between the separating hyperplane of $h$ and the line along $\boldsymbol \mu$. 
That means that for some $\lambda\in \R$, we have $\vec v=\lambda \boldsymbol\mu$. 
If $\lambda > 1$, we immediately have that $\|\vec v\|_2 \leq \|\boldsymbol \mu\|_2 \leq \|\boldsymbol \mu \|_2 + O(1/\sqrt{\log(1/B)})$.
Hence, we can focus on the case where $\lambda < 1$.
By a simple geometric argument (illustrated in \Cref{fig:angle-proportion}), we have the equality
$$
\frac{\snorm{2}{ \vec v} - \snorm{2}{ \boldsymbol{\mu} }}
{  \snorm{2}{ \vec v } }
=\frac{ \vec v^\ast \cdot (\vec v - \boldsymbol{\mu} )   }{  |t^\ast| }.
$$
By \Cref{eq:outlier-mean-deviation}, we have that the numerator on the right hand side is at most $O(1/\log(1/B))$.
By the assumption of the lemma, we have $|t^*| \geq 10$.
Note that $\vec v^\ast\cdot(\vec v-\boldsymbol \mu)= |t^\ast|(\lambda-1)/\lambda$,
hence
\begin{align*}
\frac{\snorm{2}{ \vec v} - \snorm{2}{ \boldsymbol{\mu} }}
{  \snorm{2}{ \vec v } }
=\frac{\lambda\snorm{2}{ \boldsymbol \mu} - \snorm{2}{ \boldsymbol{\mu} }}
{  \lambda\snorm{2}{ \boldsymbol{\mu} } } 
=\frac{ \vec v^\ast \cdot (\vec v - \boldsymbol{\mu} )   }{  t^\ast }
\leq O(  1 / \log(1/B) ) \, ,
\end{align*}
where the first equality is illustrated in \Cref{fig:angle-proportion}, and the last inequality follows from \Cref{eq:the-right-one} and $t^\ast > 10$.
\begin{figure}
    \centering
    \includegraphics[width=10cm]{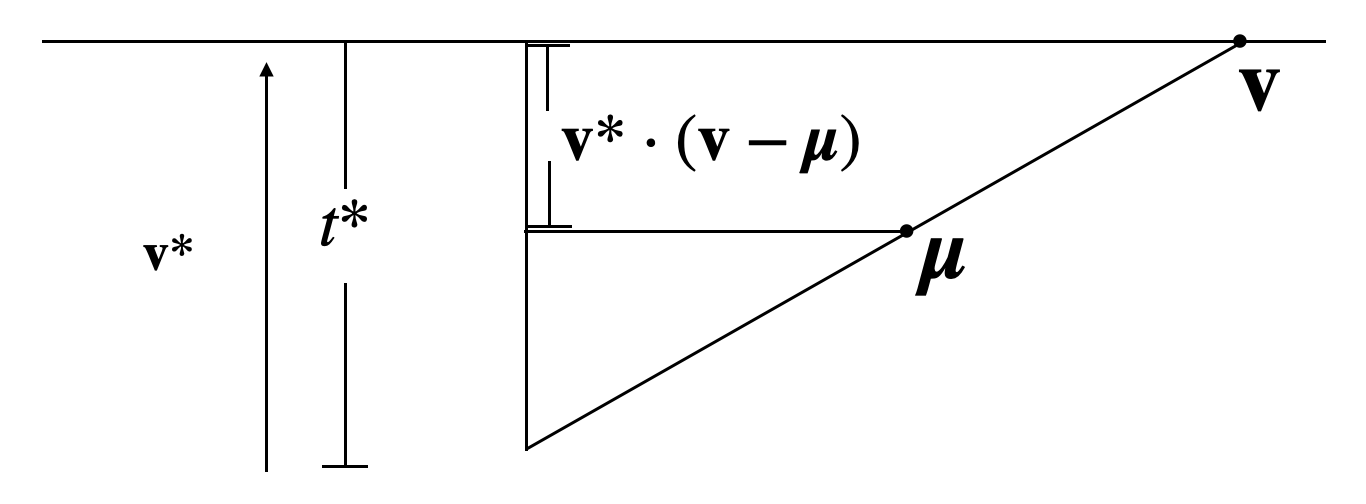}
    \caption{Bound $\snorm{2}{\vec v} - \snorm{2}{\boldsymbol{\mu}}$ via $\vec v^\ast \cdot (\vec v - \boldsymbol{\mu})$.}
    \label{fig:angle-proportion}
\end{figure}

Therefore, it follows that
$$
\snorm{2}{\vec v} 
\leq \snorm{2}{\boldsymbol{\mu}} \lp( 1 +O(1 / \log(1/B)) \rp).
$$
By \Cref{lem:mean-distance-bound}, we must have
$\snorm{2}{\boldsymbol{\mu}} \leq O( \sqrt{ \log(1/B) } )$. We therefore have
$$
\snorm{2}{\vec v} \leq \snorm{2}{\boldsymbol{\mu}} + O( \sqrt{\log(1/B)} / \log(1/B) ) \leq 
\snorm{2}{\boldsymbol{\mu}} + O(1/ \sqrt{ \log(1/B) } ).
$$
This concludes the proof of \Cref{lem:mu-v-distance}
\end{proof}

\subsection{Localization Center Search for Halfspaces with Constant Thresholds}
\begin{lemma}
\label{lem:small-offset-center-search}
Let $D$ be a distribution over $\R^d \times \{\pm 1\}$ and $h(\x) = \sgn(\vec v^\ast\cdot \x + t^\ast)$ be a halfspace that achieves $\opt$ error with respect to $D$.
Then, there exists an algorithm that takes $N:=\poly(d/\eps)$ many samples, and runs in time $\poly(N)$ and with high constant probability, 
either
\begin{enumerate}
    \item reports violation of the distribution assumption, in which case the report is correct.
    \item 
    returns a list of at most $O(1/\eps^2)$ points.
    In addition, if it holds that $|t^\ast| \leq 10$, the list contains at least one $(\eps^2,   \Theta(1) )$-good localization center, where $C>0$ is a universal constant.
\end{enumerate}
\end{lemma}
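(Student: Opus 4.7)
The plan is to estimate the Chow parameter $\E_{(\x,y)\sim D}[y\x]$ testably (to constant accuracy in $\ell_2$), and then construct the candidate list as an $\eps^2$-discretization of the line passing through the origin along the direction of this estimate.

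More concretely, the first step is to invoke the testable Chow-parameter estimator of Lemma~2.3 of \cite{diakonikolas2023efficient}. That routine either rejects (in which case the report of non-Gaussianity is correct) or returns a vector $\tilde{\vec v}$ that is guaranteed, with high constant probability, to satisfy $\snorm{2}{\tilde{\vec v} - \E_{(\x,y)\sim D}[y\x]} \leq c$ for an arbitrarily small absolute constant $c>0$ of our choosing. The soundness in item (1) of the lemma then follows immediately from the soundness guarantee of Lemma~2.3. To analyze item (2), we condition on $D_\x = \normal(\vec 0, \vec I)$ together with $|t^\ast| \leq 10$. Using Fact~A.1, we have $\E_{\x\sim\normal}[h(\x)\x] = 2G(t^\ast)\vec v^\ast$, and since $|t^\ast|\leq 10$, $2G(t^\ast) = \Omega(1)$. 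A standard Cauchy--Schwarz argument (identical to the one given in \Cref{sec:techniques}) yields $\snorm{2}{\E[y\x] - \E[h(\x)\x]} \leq 2\sqrt{\opt}$. Assuming $\opt$ is bounded by a sufficiently small absolute constant (the regime where the lemma is actually informative, since otherwise the constant hypothesis we later include in the final list already achieves the target error), we conclude that $\tilde{\vec v}$ is a constant-accuracy approximation of $2G(t^\ast)\vec v^\ast$, and hence $\hat{\vec v} := \tilde{\vec v}/\snorm{2}{\tilde{\vec v}}$ satisfies $\snorm{2}{\hat{\vec v} - \mathrm{sign}(-t^\ast)\vec v^\ast} = O(\sqrt{\opt} + c)$, which is a small constant.

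The second step is to construct the candidate list: for $i \in \{-\lceil 11/\eps^2\rceil, \ldots, \lceil 11/\eps^2\rceil\}$, add the point $\vec w_i := i \eps^2 \, \hat{\vec v}$ to $L$. This produces $O(1/\eps^2)$ candidates. To show that $L$ contains an $(\eps^2, \Theta(1))$-good localization center, consider the integer $i^\ast$ closest to $-t^\ast / \eps^2$ (choosing the sign of $\hat{\vec v}$ appropriately via the discussion above, or symmetrically by also appending candidates along $-\hat{\vec v}$). For this $i^\ast$ we have $|i^\ast \eps^2| \leq |t^\ast| + \eps^2 \leq 11$, so $\snorm{2}{\vec w_{i^\ast}} \leq 11$ and therefore $\Phi(\snorm{2}{\vec w_{i^\ast}}) \geq \Phi(11) = \Omega(1)$. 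For the distance from the separating hyperplane,
\[
|\vec v^\ast \cdot \vec w_{i^\ast} + t^\ast|
= |i^\ast \eps^2 \, (\vec v^\ast \cdot \hat{\vec v}) + t^\ast|
\leq |i^\ast \eps^2 - (-t^\ast)| + |i^\ast \eps^2|\, \snorm{2}{\hat{\vec v} - \mathrm{sign}(-t^\ast)\vec v^\ast}
\leq \eps^2 + O(\sqrt{\opt}+c)\,\eps^2 \cdot 11/\eps^2,
\]
which is $O(\eps^2)$ after choosing $c$ (and implicitly the range of $\opt$ we care about) sufficiently small so that $(\sqrt{\opt}+c)\cdot 11 = O(1)$. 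Hence $\vec w_{i^\ast}$ is indeed an $(\eps^2, \Theta(1))$-good localization center.

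The main obstacle is the tension between the accuracy of the Chow estimate and the length of the candidate sequence we can afford. The angular error of $\hat{\vec v}$ relative to $\vec v^\ast$ multiplies the maximum norm of the candidates (here bounded by $O(1)$ because $|t^\ast| \leq 10$), so the additional error incurred along the way to the hyperplane must still be $O(\eps^2)$. This is precisely why the case $|t^\ast|\leq 10$ is handled by this routine (so that $G(t^\ast)=\Omega(1)$ guarantees directional accuracy from a constant-accuracy Chow estimate) and the large-$t^\ast$ case is handled separately in \Cref{alg1:localization-center} via the mean of the tail points.
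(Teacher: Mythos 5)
There is a genuine gap in the step bounding the distance from the candidate $\vec w_{i^\ast}$ to the separating hyperplane. You choose $i^\ast$ so that $i^\ast\eps^2$ approximates $-t^\ast$, and then the bound you derive is $|\vec v^\ast\cdot\vec w_{i^\ast}+t^\ast| \leq \eps^2 + 11\cdot O(\sqrt{\opt}+c)$. This is \emph{not} $O(\eps^2)$: the constant $c$ is a fixed absolute constant independent of $\eps$, and $\opt$ is an arbitrary quantity that the lemma does not assume is $O(\eps^4)$. Your closing remark ``after choosing $c$ sufficiently small so that $(\sqrt{\opt}+c)\cdot 11 = O(1)$'' makes the second term $O(1)$, not $O(\eps^2)$, so the claimed $(\eps^2,\Theta(1))$-goodness does not follow.

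The paper avoids this by choosing a different parametrization of the candidate: it targets the \emph{intersection} $\vec v = \lambda^\ast \hat{\vec v}$ of the line along $\hat{\vec v}$ with the separating hyperplane of $h$, rather than the scalar $-t^\ast$ along $\hat{\vec v}$. The point $\vec v$ lies exactly on the hyperplane (distance zero), and the candidate $j\eps^2\hat{\vec v}$ with $j\eps^2$ closest to $\lambda^\ast$ is within $\eps^2$ of $\vec v$ in $\ell_2$, hence within $\eps^2$ of the hyperplane --- with no residual $\sqrt{\opt}$ or $c$ term. With this choice, the directional error of the Chow estimate only enters through the requirement that $|\lambda^\ast| = |t^\ast|/|\vec v^\ast\cdot\hat{\vec v}|$ be bounded by a constant, i.e.\ that the angle between $\hat{\vec v}$ and $\vec v^\ast$ be bounded away from $\pi/2$. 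This requires only that $\sqrt{\opt}+c$ is less than a fixed small \emph{constant}, not $O(\eps^2)$, which is a much milder condition. The rest of your argument (invoking Lemma~2.3 of \cite{diakonikolas2023efficient}, Fact~\ref{lem:unbiased-chow}, and Cauchy--Schwarz) matches the paper's approach and is fine; it is precisely the choice of which candidate index to analyze that needs to change.
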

\begin{proof}
Let $S$ be the empirical distribution over $N:=\poly(d/\eps)$ samples.
Let $\eta > 0$ be some sufficiently small constant.
The algorithm first verifies that the  degree up to $k = \Theta(\log(1/\eta) / \eta^2)$ moments of the empirical distribution $S$ match with those of $\normal(\vec 0, \vec I)$ up to an additive error of $\Delta = \frac{1}{kd^k} \lp( 
\frac{1}{C \sqrt{k}}
\rp)^{k+1}$.
Then it computes the Chow parameter
$\vec w := \E_{(\x, y) \sim S}[ y \x]$.
Lastly, it returns the list of points
$ i \; \eps^2 \; \vec w / \snorm{2}{\vec w} $ for $i \in [ 10/ \eps^2 ]$.

The soundness of the test follows from standard concentration inequalities of low degree moments of $\normal(\vec 0, \vec I)$.
We hence omit the proof for conciseness.

We now argue that,  with high constant probability, when the algorithm does not reject, we find a good localization center.
Let $\vec v$ be the intersection between the line along $\vec w$ and the halfspace.
We will argue that $\snorm{2}{\vec v}$ is bounded from above by some constant, from which the existence of a good localization center in the output list follows.
Since we assume that $t^\ast \leq 10$, it suffices to show that the angle between $\vec w$ and $\vec v^\ast$ is bounded from above by some sufficiently small constant.

Using Lemma 2.3 of \cite{diakonikolas2023efficient}, we have
that 
\begin{align}
\label{eq:moment-match}
\E_{ (\x, y) \sim S }
\lp[ 
h(\x) \x
\rp] 
- 
\E_{ \x \sim \normal(\vec 0, \vec I) }
\lp[  h(\x) \x \rp]
\leq O\lp( \sqrt{\eta} \rp).    
\end{align}
From Lemma 4.3 of \cite{DKS18a}, it holds that
\begin{align}
\label{eq:chow-equality}    
\E_{ \x \sim \normal(\vec 0, \vec I) }
\lp[  h(\x) \x \rp]
= G(t^\ast) \; \vec v^\ast \, ,
\end{align}
where $G(t^\ast)$ is the probability density function of the one dimensional standard normal.
Finally, let $\vec u$ be an arbitrary unit vector in $\R^d$.
Since we verify that the eigenvalues of the empirical covariance of the samples is bounded from above by $2$, we have
\begin{align}
\label{eq:cov-error-bound}
\E_{ (\x, y) \sim S }
\lp[ 
y \x \cdot \vec u
\rp] - 
\E_{ (\x, y) \sim S }
\lp[ 
h(\x) \x \cdot \vec u
\rp] 
\leq \sqrt{ 
\E_{ (\x, y) \sim S }
\lp[ \lp( h(\x) - y \rp)^2 \rp]
\; \E_{ (\x, y) \sim S } \lp[ \vec u^T \x \x^T \vec u \rp]
}
\leq O \lp( \sqrt{\opt} \rp) \, ,
\end{align}
where the first inequality follows from Cauchy's inequality, and the second inequality follows from that $h$ has error at most $\opt$ and $S$ has its covariance bounded by $2 \vec I$.
Let $\vec w$ be the Chow parameter estimated.
Combining \Cref{eq:moment-match,eq:chow-equality,eq:cov-error-bound}, we have that
$
\angle \lp( \vec w , \vec v^\ast  \rp)
\leq O(\sqrt{\eta})$, which can be made sufficiently small if we choose $\eta$ to be some sufficiently small constant.
This then concludes the proof of \Cref{lem:small-offset-center-search}.
\end{proof}

\subsection{Proof of \texorpdfstring{\Cref{lem:mean-distance-bound}}{Lg}}
Before presenting the analysis, we first present some useful bounds related to the standard Gaussian CDF function $\Phi(t) := 
\Pr_{ x \sim \normal(0, 1) } \lp[ 
x > t
\rp].
$
\begin{claim} \label{clm:gaussian-cdf-bound}
Let $x > 10$, and $b \leq O(x^{-1})$.
Then it holds $\Phi(x + b) \geq \Omega(1) \Phi(x)$.
\end{claim}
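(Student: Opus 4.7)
The plan is to reduce this to standard Gaussian tail estimates via the Mills ratio bounds. Recall that for any $t>0$, the two-sided Mills bound gives
$$\frac{t}{t^{2}+1}\,G(t)\;\leq\;\Phi(t)\;\leq\;\frac{G(t)}{t}.$$
First I would dispose of the easy case $b\leq 0$: since $\Phi$ is monotonically decreasing, $\Phi(x+b)\geq \Phi(x)$, so the claim is immediate. Hence we may assume $0<b\leq C/x$ for some absolute constant $C$ (this is what $b\leq O(x^{-1})$ means in the positive direction).

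Next, I would apply the lower Mills bound to $\Phi(x+b)$ and the upper Mills bound to $\Phi(x)$ to get
$$\frac{\Phi(x+b)}{\Phi(x)}\;\geq\; \frac{x(x+b)}{(x+b)^{2}+1}\cdot \frac{G(x+b)}{G(x)}\;=\;\frac{x(x+b)}{(x+b)^{2}+1}\,\exp\!\left(-xb-\tfrac{b^{2}}{2}\right).$$
Then I would bound each of the two factors by a positive absolute constant. For the exponential factor, using $xb\leq C$ and $b^{2}/2\leq C^{2}/(2x^{2})\leq C^{2}/200$, we get $\exp(-xb-b^{2}/2)\geq \exp(-C-C^{2}/200)=\Omega(1)$. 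For the rational factor, write it as $x/\bigl((x+b)+1/(x+b)\bigr)$; the denominator is at most $x+C/x+1/x\leq x(1+(C+1)/x^{2})\leq 1.1\,x$ for $x\geq 10$ and $C$ a fixed constant, so this factor is also $\Omega(1)$.

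Multiplying the two $\Omega(1)$ lower bounds yields $\Phi(x+b)/\Phi(x)=\Omega(1)$, as claimed. There is no real obstacle here; the only mild care needed is to treat the sign of $b$ separately (negative $b$ is trivial, positive $b$ is the content), and to observe that the condition $b\leq O(1/x)$ is precisely what makes the Gaussian exponent $xb$ bounded, which is the only place the assumption is actually used.
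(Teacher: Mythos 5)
Your proof is correct, and it takes a genuinely different route than the paper. The paper proceeds by a direct change of variables in the tail integral, writing $\Phi(x+b)=\frac{1}{\sqrt{2\pi}}\int_{x}^{\infty}\exp(-(t+b)^2/2)\,\d t$, truncating the integral to $[x,10x]$, and using the observation that on this range $tb+b^2/2 = O(1)$ (precisely because $b \leq O(x^{-1})$), so the shifted integrand is a constant multiple of the original; the claim then follows since $\Phi(x)-\Phi(10x)\geq\Omega(1)\Phi(x)$. You instead invoke the two-sided Mills ratio bound $\frac{t}{t^2+1}G(t)\leq\Phi(t)\leq\frac{G(t)}{t}$, applying the lower bound to $\Phi(x+b)$ and the upper bound to $\Phi(x)$, and then bound the rational prefactor and the Gaussian ratio $G(x+b)/G(x)=\exp(-xb-b^2/2)$ separately. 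Both arguments turn on the same essential fact — the assumption $b\leq O(1/x)$ is exactly what keeps the cross term $xb$ (resp.\ $tb$) bounded — but yours is more modular, reducing everything to a standard, citable inequality, while the paper's is self-contained and avoids importing the Mills bound. Your explicit separation of the $b\leq 0$ case is a small bit of hygiene the paper leaves implicit, and is welcome. One minor point: the concrete bound ``$\leq 1.1\,x$'' only holds if the implicit constant $C$ in $b\leq C/x$ satisfies $C\leq 9$, but this is cosmetic — for any fixed $C$ the denominator is at most $(1+(C+1)/100)\,x$, which suffices.
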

\begin{proof}
We have
\begin{align*}
    \Phi( x+b )
    &= 
    \frac{1}{\sqrt{2 \pi}}
    \int_{t = x+b }^\infty
    \exp( -t^2/2 ) \\
    &=
    \frac{1}{\sqrt{2 \pi}}
    \int_{t = x  }^\infty
    \exp( - (t + b ) )  ^2/2 )\\
    &\geq 
    \frac{1}{\sqrt{2 \pi}}
    \int_{t = x  }^{10 \; x }
    \exp( - (t + b ) )  ^2/2 )\\
    &\geq 
    \frac{1}{\sqrt{2 \pi}}
    \int_{t = x  }^{10 \; x }
    \exp( -t^2/2 - O(1) ) \\
    &= 
    \Omega(1) \; 
    \lp(
    \Phi(  x )
    - 
    \Phi(  10 \; x ) \rp) \geq \Omega(1)  \; \Phi(  x ) \, ,
\end{align*}
where the first inequality follows from the positivity of the $\exp(\cdot)$ function, and the second inequality follows from that 
$t b, b^2 = O(1)$ since 
$b \leq O(1/x) \leq O(1/t)$.
\end{proof}
We now give the proof of \Cref{lem:mean-distance-bound}.
\begin{proof}[Proof of \Cref{lem:mean-distance-bound}]
Recall that $S$ is a set of $N$ \iid samples from $D$ where $N = \poly(d, 1/\eps)$,  
$\boldsymbol{\mu}_+$ is the empirical mean of the samples in $S$ with label $+1$, $H$ is the resulting set after projecting the points in $S$ along the direction of $\boldsymbol{\mu}_+$, 
and $\tilde B$ is the fraction of points in $H$ (or $S$) with label $+1$.
For convenience, we define $ u := \snorm{2}{ \boldsymbol{\mu}_+ }$.
As a slight abuse of notation, we also denote by $H$ the empirical distribution over the set $H$.

In our analysis, it is easier to bound $\Phi(\snorm{2}{\vec u_+})$ in terms of $\tilde B$ thatn to bound it in terms of $B$. 
Nonetheless, we remark that $B$ and $\tilde B$ are approximately the same.
In particular, using standard concentration inequalities, we have that $\tilde B \eqdef \Pr_{ (\x, y) \sim H } \lp[  y = 1 \rp]$ should satisfy
$\abs{ \tilde B - B } \leq \eps/10$.
By our assumption that $\eps < B^2 $, we have that that $\tilde B > B / 2$, which implies that 
$\tilde B / \log(1/ \tilde B ) > (B/2) / \log(2/B) = \Omega( B / \log(1/B))$, since $x /\log(1/x)$ is monotonically increasing for $x \in [0,1)$.
Hence, it suffices to show that
\begin{align} \label{eq:significant-right-mass}
\Phi(\snorm{2}{\boldsymbol{\mu}_+})
\geq  \Omega\lp(  \tilde B / \log(1/\tilde B) \rp).
\end{align}

We further introduce the following sets to facilitate the discussion. We divide $H$ into 
two sets $M, P$, where $M$ is the set 
of points with label $-1$, and
$P$ is the set of points with label $+1$.
Among $P$, we further divide into $P_L$ and $P_R$, where $P_L := \{ x \in P \text{ such that } x \leq u
\}$, and $P_R := \{ x \in P \text{ such that } 
x > u \}$. We proceed to examine two cases separately.

\textbf{Case I}
In the first case, 
suppose more than half of the points from $P$ are within the range $[u - 1 / \sqrt{\log(1/\tilde B)}, \infty]$.
Since $\abs{P} / \abs{H} = \tilde B$, we have
$$
\Pr_{ x \sim H }\lp[ 
x > u - 1/\sqrt{\log(1/\tilde B)}
\rp]
\geq
\Pr_{ x \sim H }\lp[ 
  x \in P \text{ and }
  x > u - 1/\sqrt{\log(1/\tilde B)} 
\rp]
\geq \tilde B/2.
$$
Recall that in Line~\ref{line:cdf-check} we explicitly verify that $H$ is close to $\normal(0, 1)$ in Kolmogorov distance.
It follows that
\begin{align*}
\Phi(u - 1/\sqrt{\log(1/\tilde B)})
\geq 
\Pr_{ x \sim H }\lp[ 
x > u - 1/\sqrt{\log(1/\tilde B)})
\rp] - \eps
\geq \Omega(\tilde B).
\end{align*}
This allows us to bound $u - 1/\sqrt{\log(1/\tilde B)}$ from above by $O\lp( \sqrt{ \log(1/\tilde B) } \rp)$.
Therefore, applying Claim~\ref{clm:gaussian-cdf-bound} gives that
$\Phi(u ) \geq \Omega(1)\Phi\lp(u - 1/\sqrt{\log(1/\tilde B)}\rp) \geq \Omega(\tilde B)$, which implies \Cref{eq:significant-right-mass}.

\textbf{Case II}
In the second case, we have more than half of the points from $P$ lying in the range $[-\infty, u - 1/\sqrt{\log(1/\tilde B)}].$
In this case, we can bound from below the contribution from the points in $P_L$ to the mean of $P$ by
\begin{align}
\label{eq:left-deviation-bound}    
\end{align}
$$
\frac{1}{|P|} \sum_{x \in P_L} (u - x)
\geq 
\frac{1}{|P|}
\sum_{x \in P_L } \mathbbm 1\lp\{ x < u - \frac{1}{\sqrt{\log(1/\tilde B)}}\rp\} \frac{1}{\sqrt{\log(1/\tilde B)}}
\geq \frac{1}{2 \sqrt{\log(1/\tilde B)}}.
$$
This then implies that the points in $P_R$ need to satisfy that
\begin{align} \label{eq:right-deviation-bound}
\sum_{x \in P_R} x
\geq 
\sum_{x \in P_R} (x-u)
= 
|H| \; \frac{|P|}{|H|} \; 
\frac{1}{|P|}
\sum_{x \in P_L} (u - x)
\geq |H| \frac{\tilde B}{ 2 \sqrt{\log(1/\tilde B)}} \, ,
\end{align}
where the first inequality holds because $u := \snorm{2}{\boldsymbol{\mu}_+} \geq 0$, and
the equality holds since
$
\sum_{x \in P_R} (x - u) = \sum_{x \in P_L} (u - x)
$, and the last inequality follows from the definition $\tilde B := |P| / |H|$ and \Cref{eq:left-deviation-bound}. 
For the sake of contradiction, assume that $ 
\Pr_{x \sim H}[ x \in P_R ] \leq c \; \tilde B / \log(1/\tilde B)$ for some sufficiently small constant $c>0$.
Then, \new{conditioned} on the event that Line~\ref{line:stability-check} passes the test; if we remove all points from $P_R$, the mean deviates by at most $O \lp( c \; 
\frac{ \tilde B  }{\log(1/B)} \; 
\sqrt{\log\lp( \frac{ \sqrt{\log(1/B)}}{\tilde B}  \rp)} \rp)
= O \lp( c \; \tilde B / \sqrt{\log(1/B)} \rp)$.
In particular, the mean of the remaining samples must satisfy
\begin{align} \label{eq:removed-mean}
\abs{\sum_{ x \in H\backslash P_R} x }
\leq O \lp( c \; \tilde B / \sqrt{\log(1/B)} \rp) \; |H \backslash P_R|
\leq O \lp( c \; \tilde B / \sqrt{\log(1/B)} \rp) \; |H|.
\end{align}
\new{Conditioned on the event that }Line~\ref{eq:mean-deviation} passes, we have that
\begin{align} \label{eq:original-mean}
\abs{    \sum_{ x \in H} x }
= \eps \; |H|.
\end{align}
Note that
$
\sum_{ x \in P_R} x 
= 
\sum_{ x \in H} x
+ \lp(\sum_{ x \in H \backslash} -x \rp).
$ Thus, by using 
\Cref{eq:removed-mean,eq:original-mean}, and the triangle inequality, we have that
\begin{align}
\label{eq:deviation-mass}    
\abs{\sum_{ x \in P_R} x }
\leq O\lp(c \frac{\tilde B}{\sqrt{\log(1/\tilde B)}} + \eps \rp) \abs{H}.
\end{align}
Recall that $\eps \lesssim \tilde B$ by our assumption.
Thus, by choosing $c$ to be a sufficiently small constant, 
\Cref{eq:deviation-mass} contradicts \Cref{eq:right-deviation-bound}.
Therefore, we have that 
$
\Pr_{x \sim H}[x > u] = 
\Pr_{x \sim H}[ x  \in P_R ] > \Omega\lp( \tilde B / \log(1/\tilde B) \rp)
$.
With similar arguments as in Case I, conditioned on the event that Line~\ref{line:cdf-check} passes, we then have that
$$
\Phi(\snorm{2}{\boldsymbol{\mu}_+})
\geq 
\Pr_{ \x \in H} \lp[ x > \snorm{2}{\boldsymbol{\mu}_+} \rp] - \eps
\geq \Omega\lp( \tilde B / \log(1/\tilde B) \rp)\;.
$$
Therefore \Cref{eq:significant-right-mass} holds
for Case II, and this concludes the proof of \Cref{lem:mean-distance-bound}.
\end{proof}



\section{Proof of Main Theorem (\texorpdfstring{\Cref{thm:testable-learning-agnostic}}{Lg})}
\label{app:23}
\begin{Ualgorithm}[h]
	\centering
	\fbox{\parbox{6in}{
			{\bf Input:} Sample access to a distribution $D$ over $\R^d \times \{\pm 1\}$;  
   tolerance $\eps$; failure probability $\tau$.\\
   {\bf Output:} \new{Reject or output a halfspace $h$, with error at most $\tilde O(\sqrt{\opt}+\eps)$.}
\begin{enumerate}
\item Run \text{Find-Localization-Center} (\Cref{alg1:localization-center}) to find a set of localization centers $\text{Localization-Centers} := \{\vec v^{(i)}\}_{i=1}^{\log(1/B)/\eps^2}$
\item Initialize an empty set $S$ for storing candidate halfspaces.
\item For $\vec w \in \text{Localization-Centers}$
\begin{enumerate}
    \item Set $\sigma := \min \lp( \snorm{2}{\vec w}^{-1}, \sqrt{1/2} \rp)$.
    \item Define $G$ as the distribution 
    over $\R^d \times \{\pm 1\}$
    obtained by performing $(\vec w, \sigma)$-rejection sampling on the $\x$-marginal of $D$.
    \item Define $D'$ as the distribution obtained by applying
    the transformation $h(\x) = \vec \Sigma^{-1/2} (\x - \vec w) $ on the $\x$-marginal of $G$, where $\vec \Sigma := \vec I - (1 - \sigma^2) \vec w \vec w^T / \snorm{2}{\vec w}^2$.
    \item Run \text{Almost-Homogeneous-Testable-Learn} from \Cref{prop:small-offset-halfspace} with sample access to $D'$, and denote the learned halfspace vector as $\vec v$.
    \item Run \text{Wedge-Bound} (\Cref{alg:wedge-bound}) with sample access to $D$, and the unit vector $\vec \Sigma^{-1/2} \vec v$.
    \item For $i = -\ceil{\log(1/\eps)/\eps}, \cdots, \ceil{\log(1/\eps)/\eps}$,  add the halfspace $h(\x) = \sgn\lp( \vec \Sigma^{-1/2} \vec v + i \eps \rp)$ to $S$.
\end{enumerate}
\item Compute the empirical errors of the halfspaces within $S$ using $\poly(d/\eps)$ \iid samples from $D$.
\item Return the halfspace with the smallest empirical error.
\end{enumerate}
}}
\vspace{0.2cm}
\caption{Testable-Learn-General-Halfspace} 
\label{alg:main}
\end{Ualgorithm}

\begin{proof}[Proof of Main Theorem]
Our algorithm produces a list $S$ of halfspaces and picks the one with the smallest empirical error.
It suffices to show that at least one of the halfspaces contained in the list $S$ achieves error at most $\tilde O(\sqrt{\opt} + \eps)$. Let $B$ be the mass of the minority label.
Our list always contains a hypothesis that outputs either the label $+1$ or $-1$ for any point, therefore if $B < \max(\sqrt{\eps}, \sqrt{\opt})$ then one of these two halfspaces obtains error at most $\sqrt{\opt}+\sqrt{\eps}$.
Hence, we can assume $B > \max(\sqrt{\eps}, \sqrt{\opt})$, for the rest of the analysis of the algorithm.

Let $h(\x) = \sgn(\vec v^\ast \cdot \x + t^\ast)$ be the optimal halfspace.
If $t^\ast > 10$, we can apply \Cref{lem:certify-localization-center} to obtain a list of candidate localization centers with the guarantee that at least one of them is an $ ( \eps^2, \Omega(B/\log(1/B)) ) $-good localization center.
If $t^\ast < 10$, we directly estimate the Chow vector (enhanced with moment matching) which, up to normalization, gives us a constant approximation to the defining vector $\vec v^\ast$ of the optimal halfspace (see \Cref{lem:small-offset-center-search}). Then, we guess the intersection between the Chow vector and the halfspace $h$ up to accuracy $\eps^2$. This gives us another list containing at least one point that is $(\eps^2, \Theta(1))$-good localization center.
The list $L$ in both cases will have size at most $\poly(1/\eps)$.

For each $\vec w\in L$, we apply $(\vec w, \sigma)$ rejection sampling with $\sigma = \min( \sqrt{1/2}, \snorm{2}{\vec w}^{-1} )$.
The resulting distribution is then $\normal(\vec w, \vec \Sigma)$, where
$\vec \Sigma = \vec I - (1 - \sigma^2) \vec w \vec w^T / \snorm{2}{\vec w}^2$.
Next we apply the appropriate transformation to make the $\x$-marginal isotropic, and call this distribution $D_{\vec w}'$.
By assumption, there exists at least one $\vec w\in L$ that is $ ( \eps^2, \Omega(B/\log(1/B)) )$-good localization center. Let $\vec w$ be an $ ( \eps^2, \Omega(B/\log(1/B)) )$-good localization center. \new{From \Cref{lem:good-center}, after the transformation, an optimal halfspace of $D_{\vec w}'$ is a halfspace $\tilde h$  with offset $O(1) \; \eps^2 \log( 1/B ) \leq \eps$. The acceptance probability of a sample in the distribution $D_{\vec w}'$ is at least $\Omega(B/\log(1/B))$.}
It follows that the error of the halfspace under the new distribution is at most $O(\opt \; \log(1/B) / B ) < \tilde O(\sqrt{\opt})$.

Let $\tilde h(\x) = \sgn( \tilde {\vec v}\cdot \x + \tilde t )$ be the halfspace after applying the transformation that makes the new distribution isotropic.
Using \Cref{prop:small-offset-halfspace}, we can then learn a vector $\vec v$ satisfying
$$
\snorm{2}{\vec v - \tilde {\vec v}}
\leq \tilde O(\sqrt{\opt} + \eps).
$$
Notice that by \Cref{lem:good-center}, $\tilde {\vec v}$ is parallel to the vector $\vec \Sigma^{1/2} \vec v^\ast$.
We argue the estimation $\vec v$ after reverting the transformation $\vec \Sigma^{1/2}$ gives us a good approximation of $\vec v^\ast$.
To do so, we need to apply \Cref{lem:transformed-angle}.
A crucial requirement of \Cref{lem:transformed-angle} is that the angle between the localization direction $\vec w$ and $\vec v^\ast$ must not be too large. To show this, we need to consider two cases depending on whether we have $\abs{t^\ast} <10$.

In the case that $\abs{t^\ast} < 10$, the angle between $\vec w$ and $\vec v^\ast$ will be bounded by a small constant (since the direction of $\vec w$ is obtained through Chow parameter estimation in this case).

The harder case is when $\abs{t^\ast} > 10$.
In this case, we write $\vec w = b \vec v^\ast + a \vec z$ where $\vec z$ is orthogonal to $\vec v^\ast$. 
Since the distance from $\vec w$ to the halfspace is at most $\eps^2$, the component of $\vec w$ along $\vec v^\ast$ must be at least $(10 - \eps^2)$, i.e., $b > (10 - \eps^2)$.
On the other hand,
since $\vec w$ satisfies $\Phi(\snorm{2}{\vec w}) > \Omega(B/\log(1/B))$, it follows that 
$\snorm{2}{\vec w} \leq O(\sqrt{\log(1/B)})$, which further implies that $a < O(\sqrt{\log(1/B)})$. 
Combining the two observations we have that $a/b \leq O(\sqrt{\log(1/B)})$.
Let $x = b/a$, we have that
\begin{align*}
\snorm{2}{\vec v^\ast - \vec w / \snorm{2}{\vec w}}^2
&= \lp( 1 - \frac{1}{\sqrt{1 + x^2}}\rp)^2
+ \lp( \frac{x}{\sqrt{1+x^2}} \rp)^2
= 1 + \frac{1}{1+x^2} - \frac{2}{\sqrt{1+x^2}}
+ \frac{x^2}{1 + x^2} \\
&= 2 - \frac{2}{\sqrt{1+x^2}}
\leq 2 - \frac{2}{ O(\sqrt{\log(1/B)}) }.    
\end{align*}

For this particular localization center $\vec w$, we also have $\sigma \geq \Omega(1/\sqrt{\log(1/B)}) \geq \Omega(1 / \sqrt{\log(1/\opt)})$ by \Cref{lem:good-center}.

We can therefore apply \Cref{lem:transformed-angle} with 
$\beta = \sqrt{2 - \frac{2}{ O(\sqrt{\log(1/B)}) }}$,
$\delta = \tilde O(\sqrt{\opt}) + \eps$
and $\sigma = \Omega(1/\sqrt{\log(1/\opt)})$.
Note that
$$
\frac{\beta}{1 - \beta^2/2}
\leq O(\sqrt{\log(1/B)}).
$$
This implies that, after reverting the transformation, we obtain a vector $\vec v$ satisfying that
\begin{align*}
\snorm{2}{\vec v - \vec v^\ast}
&\leq 
O(1) \; (\sigma + \beta) \;
\lp( \delta \sigma^{-1} \frac{\beta}{1 - \beta^2/2} + \delta  \rp) \\
&\leq 
O(\delta) \; \lp( \sqrt{\log(1/\opt)} \; \sqrt{\log(1/\opt) } \rp) 
\leq \tilde O(\sqrt{\opt} + \eps).
\end{align*}
We then guess the offset of the optimal halfspace up to accuracy $\eps$.
Denote the best guess as $\bar t$.
Applying \Cref{lem:general-wedge-bound}, we have that 
$$
\Pr_{(\x, y) \sim D}[ \sgn(\vec v \cdot \x + \bar t) \neq \sgn(\vec v^\ast \cdot \x + t^\ast)  ]
\leq O(1) \; 
\lp( \snorm{2}{\vec v - \vec v^\ast}
+ \abs{\bar t - t^\ast}
\rp)
\leq \tilde O(\sqrt{\opt} + \eps).
$$
The above concludes the proof of \Cref{thm:testable-learning-agnostic} for constant failure probability $\tau$.
In order to boost the success probability, we will run the algorithm $10 \; \log(1/\tau)$ many times. If the algorithm outputs reject for more than half of the time, we output reject. Otherwise, we collect the halfspaces learned, draw $O \lp(  d \log\log(1/\tau) / \eps^2 \rp)$ many \iid samples, and output the halfspace that achieves the smallest empirical error.
If $D$ truly has marginal $D_\x$, each invocation of the algorithm outputs accepts with high constant probability. Since we only output reject when the algorithm outputs reject in more than half of the trials, it follows that we reject with probability at most $\tau$.
If we do not output reject, we must have collected at least $5 \log(1/\tau)$ many halfspaces, and each halfspace achieves the required learning error with high constant probability. So there exists a halfspace in the learned list with error $\wt O(\sqrt{\opt}) + \eps$ with probability at least $1 - \tau/2$.
Besides, with probability at least $1 - \tau/2$, we can estimate the error of these halfspaces up to accuracy $\eps$. 
Conditioned on the above two events, which hold simultaneously with probability at least $1 - \tau$ by the union bound, it then follows that the final halfspace picked will have error at most $\wt O(\sqrt{\opt}) + 2 \eps$.
This concludes the proof of \Cref{thm:testable-learning-agnostic}.
\end{proof}
\end{document}